\newtheorem{theorem}{Theorem}
\newtheorem{corollary}[theorem]{Corollary}
\newtheorem{lemma}[theorem]{Lemma}
\newtheorem{proposition}[theorem]{Proposition}
\newcommand{\beq}{\begin{eqnarray}}
\newcommand{\eeq}{\end{eqnarray}}
\newcommand{\beqs}{\begin{eqnarray*}}
\newcommand{\eeqs}{\end{eqnarray*}}
\numberwithin{equation}{section}
\begin{document}

\title{Heavy-Tail Phenomenon in Decentralized SGD}

\author{Mert G\"{u}rb\"{u}zbalaban\thanks{The authors are in alphabetical order. \Envelope~ Corresponding author.}\\
       Department of Management Science and Information Systems \\
       Rutgers Business School\\
       Piscataway, NJ 08854, United States of America\\
       \texttt{mg1366@rutgers.edu} \\
        \And
       Yuanhan Hu\footnotemark[1] ~\Envelope~\\ 
       Department of Management Science and Information Systems \\
       Rutgers Business School\\
       Piscataway, NJ 08854, United States of America\\
       \texttt{yuanhan.hu@rutgers.edu}\\
       \And
       Umut \c{S}im\c{s}ekli\footnotemark[1]\\
    DI ENS, \'{E}cole Normale Sup\'{e}rieure, Universit\'{e} PSL, CNRS, INRIA\\ 75005 Paris, France\\
        \texttt{umut.simsekli@inria.fr}\\
        \And
        Kun Yuan\footnotemark[1]\\
        DAMO Academy, Alibaba (US) Group\\
       \texttt{kun.yuan@alibaba-inc.com} \\
       \AND
       Lingjiong Zhu\footnotemark[1]\\
       Department of Mathematics \\
       Florida State University \\
       Tallahassee, FL 32306, United States of America\\
       \texttt{zhu@math.fsu.edu}\\
       }


\maketitle

\begin{abstract}
Recent theoretical studies have shown that heavy-tails can emerge in stochastic optimization due to `multiplicative noise', even under surprisingly simple settings, such as linear regression with Gaussian data. While these studies have uncovered several interesting phenomena, they consider conventional stochastic optimization problems, which exclude \emph{decentralized} settings that naturally arise in modern machine learning applications. In this paper, we study the emergence of heavy-tails in decentralized stochastic gradient descent (DE-SGD), and investigate the effect of decentralization on the tail behavior. We first show that, when the loss function at each computational node is twice continuously differentiable and strongly convex outside a compact region, the law of the DE-SGD iterates converges to a distribution with polynomially decaying (heavy) tails. To have a more explicit control on the tail exponent, we then consider the case where the loss at each node is a quadratic, and show that the tail-index can be estimated as a function of the step-size, batch-size, and the topological properties of the network of the computational nodes. Then, we provide theoretical and empirical results showing that DE-SGD has heavier tails than centralized SGD. We also compare DE-SGD to disconnected SGD where nodes distribute the data but do not communicate. Our theory uncovers an interesting interplay between the tails and the network structure: we identify two regimes of parameters (stepsize and network size), where DE-SGD 
can have lighter or heavier tails than disconnected SGD depending on the regime. Finally, to support our theoretical results, we provide numerical experiments conducted on both synthetic data and neural networks.
\looseness=-1

\end{abstract}


\section{Introduction}
Stochastic gradient descent (SGD) methods are workhorse methods for solving large-scale optimization problems arising in machine learning \citep{bottou2010large,bottou2012stochastic,ruder2016overview}. SGD has good scalability properties to large datasets and to high dimensions and often leads to solutions that can generalize well, i.e. perform well on unseen data. These favorable properties are among the key reasons why SGD is a default method both in academia and in industry for training predictive models in deep learning or more broadly in supervised learning tasks \citep{lecun2015deep,goodfellow2016deep,le2011optimization}.
\looseness=-1

SGD iterates move along the stochastic (noisy) estimates of the gradient of the underlying optimization objective. Recent studies on SGD methods in the last few years unveiled a mysterious ``heavy-tailed behavior" that can be observed in different ways during stochastic optimization. 
More specifically, it has been numerically observed that the gradient noise becomes often heavy-tailed with a non-Gaussian structure over iterations in deep learning practice (even if at the beginning of the SGD iterations the gradient noise may have a Gaussian-like structure) \citep{csimcsekli2019heavy,pmlr-v97-simsekli19a,ht_sgd_quad}. 
\citet{martin2019traditional, martin2020heavy} also showed that the network weights and the eigenvalues of weight matrices show a heavy-tailed behavior for well-trained networks. They proposed to fit a power law distribution to the empirical spectral density of the weight matrices and illustrated that heavier-tailed weight matrices indicate better generalization.
It has been also observed that the level of heaviness of the tails and non-Gaussianity depends on the SGD hyperparameters and the geometry of the landscape of the objective, smaller batch sizes and larger stepsize being associated to heavier tails \citep{panigrahi2019non,ht_sgd_quad}. Formalizing these empirical observations, recent theoretical studies \citep{ht_sgd_quad, hodgkinson2020multiplicative} showed that heavy tails arise due to the ``multiplicative noise" in stochastic optimization, and can arise even under surprisingly simple settings such as linear regression with Gaussian data. In particular, G\"urb\"uzbalaban \emph{et al.} showed that for least square problems subject to Gaussian data, the tails are monotonic with respect to the stepsize and the batch \citep{ht_sgd_quad} and there exists a range of stepsizes for which there exists a stationary distribution with an infinite variance. It has also been found that the heaviness of the tails are positively correlated with the generalization performance on a number of datasets and architectures \citep{simsekli2020hausdorff,barsbey2021heavy}. 

Aforementioned results from the literature about the heavy tails apply to conventional stochastic optimization problems (such as risk minimization), where all the data is assumed to be available at a centralized particular location. On the other hand, modern datasets are often too large to be handled by a single machine (or a processor) and are commonly collected, stored, and processed in a distributed manner over a network of computing (nodes) agents. This fact, together with increasing data privacy requirements, necessitated the development of communication-efficient  
algorithms for solving \emph{decentralized stochastic optimization} problems of the form\looseness=-1
\begin{equation}
\min\nolimits_{x \in \mathbb{R}^d} f(x):= \sum\nolimits_{i=1}^N f_i(x), \quad  f_i(x):=\mathbb{E}_{z_i\sim\mathcal{D}_i}[\ell(x, z_i)],\label{pbm-pop-min}
\end{equation}
where $\ell(x,z_{i})$ 
represents the instantaneous \emph{loss} at node $i$ based on the predictor $x$ and the data point $z_i$, where there are $N$ computation nodes lying on a connected undirected graph $G=(\mathcal{N},\mathcal{E})$. Here $\mathcal{N} = \{1,2,\dots,N\}$ is the set of (vertices) nodes and $\mathcal{E}\subseteq \mathcal{N}\times \mathcal{N}$ is the set of edges that define the connectivity patterns between the nodes. The objective $f_i$ is only available at the node $i$, and the aim is to
train models locally at each agent where only local parameters vectors are shared among the neighbors.
The problem \eqref{pbm-pop-min} arises in a number of key settings including but not limited to decentralized deep learning \citep{dean2012large,ying2021exponential} and federated learning \citep{li2020review}.

Decentralized versions of SGD that are based on a weighted averaging of local parameter vectors among neighbors are popular for solving \eqref{pbm-pop-min} where the weights are specified through an $N\times N$ stochastic matrix called the \emph{communication matrix } or the \emph{mixing matrix} \citep{robust-network-agd,gurbuzbalaban2021decentralized}. There are two popular decentralized versions of SGD, first version averages both local parameter vectors and gradients \citep{pu2020asymptotic,xin2020improved} whereas the second version averages only local parameter vectors \citep{robust-network-agd,gurbuzbalaban2021decentralized,ram2009asynchronous, chen2012diffusion,lian2017can}. Both versions admit similar updates, they attain similar convergence rates at least for strongly convex stochastic optimization problems \citep{robust-network-agd,pu2020asymptotic}. 
In this paper, we will focus on the latter version which has a long history at least going back to decentralized gradient methods studied in \citet{nedic2009distributed}. \looseness=-1

A natural question in this context is whether the ``heavy-tail phenomenon" arises in the decentralized SGD setting as well, and what would be the effect of decentralization on the tails; i.e. how would the heaviness of tails differ from the default centralized SGD (C-SGD) setting. 
To our knowledge, none of the existing theoretical works about heavy tails apply to the decentralized stochastic optimization, which will be the focus of this paper. Our contributions are summarized as follows: 
\begin{itemize}[noitemsep,topsep=0pt,leftmargin=*,align=left]
    \item We show in Proposition~\ref{prop:ht_noncvx} that if the local objectives $f_i$ are twice continuously differentiable and strongly convex outside a compact region, then DE-SGD iterates are heavy-tailed at stationarity, provided that the stepsize is small enough. 
    This result applies to a general class of objectives that can be non-convex on a compact set. Its proof is based on (i) relating the DE-SGD iterations in dimension $d$ to C-SGD iterations in higher dimensions (in dimension $Nd$) on a ``modified" objective where $N$ is the number of agents, (ii) showing by a careful analysis that the ``modified" objective inherits strong convexity properties outside a compact set in higher dimensions as well provided that the stepsize is small enough, (iii) building on existing results for C-SGD from \citet{hodgkinson2020multiplicative}. \looseness=-1
    \item To get more precise characterizations of the tail behavior, similar to \citet{ht_sgd_quad}, we consider 
    least square problems (where the losses $f_i$ are
quadratics) when the underlying data has a continuous distribution with finite moments. Our result (Theorem~\ref{thm:main:dsg}) shows that the iterates converge to a heavy-tailed distribution whose tail probabilities decay polynomially at a specific rate $\alpha$. 
The decay rate $\alpha$ 
is hard to compute exactly in general; however, we discuss how it can be estimated as a unique positive solution $\hat{\alpha}$ to a non-linear equation $\hat h(s) = 1$ where $\hat h$ is a function of the stepsize, batch-size, the curvature of the objective and the mixing matrix. Furthermore, building on the connections between C-SGD and DE-SGD summarized in the previous bullet point, we show in Theorem~\ref{thm:mono} that the tail-index $\hat{\alpha}$ is strictly increasing in batch sizes $b_i$ at every node $i$ and strictly decreasing in stepsize $\eta$ provided that $\hat\alpha\geq 1$. These results show that the effect of hyperparameters in the decentralized setting resembles to the centralized setting previously studied in \citet{ht_sgd_quad}. \looseness=-1
\item
To complement Theorem~\ref{thm:main:dsg}, that is of asymptotic nature, 
we provide non-asymptotic moment
bounds for $k$-th iterate and also characterize the non-asymptotic convergence speed
from the $k$-th iterate to the stationary limit
in Wasserstein metric 
in the Appendix.
\item To understand the effect of decentralization and network averaging, we compare the tail-index of DE-SGD to that of C-SGD and disconnected SGD (Dis-SGD); the latter algorithm runs independent copies of SGD at each node without connecting them through a network while distributing the data points evenly. Here, Dis-SGD behaves like an intermediate point between C-SGD and DE-SGD and serves as a benchmark. We first infer from existing results about C-SGD that Dis-SGD has heavier tails compared to C-SGD (Proposition~\ref{prop:disconnected:centralized}). Then, under Gaussian input data assumption, we show in Theorem \ref{thm:first:order:alpha} that if the network size is large enough or if the stepsize is large enough, DE-SGD has heavier tails compared to Dis-SGD under a mild technical assumption which ensures that the stationary distribution exists. On the other hand, when the stepsize is small enough or if the network is small enough, we show that DE-SGD has lighter tails compared to Dis-SGD. These results allow us to compare DE-SGD with C-SGD showing that DE-SGD has heavier tails than C-SGD for some range of parameters (Corollary~\ref{cor:DSGD:CSGD}). In fact, our theory uncovers two regimes of parameters (stepsize and network size), where addition of the network links can make the tails lighter or heavier (compared to the Dis-SGD setting). This phenomenon is demonstrative of rich interplays between the heavy tails and the network structure, and we believe our work is the first step towards understanding this interaction. In the main text, due to space limitations we consider the simpler (and more intuitive) $d=1$ case, results about general dimension $d$ can be found in the Appendix.
\item Adapting the results of \citet{gao2015stable} to our setting, we provide a generalized central limit theorem for the averaged iterates in the Appendix which show that they follow an $\alpha$-stable distribution in the limit. This result allows us to estimate the tails more accurately in our experiments based on recent advanced estimators available for $\alpha$-stable distributions. Finally, we provide numerical experiments conducted on both synthetic data and neural
networks to empirically support and illustrate our theoretical results. For both fully-connected networks and modern networks with convolutional layers such as ResNet-20, we provide experiments for DE-SGD that illustrate the heavy-tailed behavior in practice.
\end{itemize}

\section{Decentralized SGD and Preliminaries}
\label{sec:bg}
\textbf{Decentralized SGD (DE-SGD).}  
A standard method for decentralized learning is the decentralized stochastic gradient (DE-SGD) method 
\citep{robust-network-agd, Yuan16, ram2009asynchronous, lian2017can}. At iteration $k$, DE-SGD updates the local variable $x_{i}^{(k)}$ at node $i$ with the following recursion
\begin{equation}\label{eqn:dsg_update}
x_{i}^{(k+1)}=\sum\nolimits_{\ell\in\Omega_{i}}W_{i\ell}x_{\ell}^{(k)}-\eta\tilde{\nabla} f_{i}(x_{i}^{(k)}),
\end{equation}
where $\eta>0$ is the stepsize, $W \in \mathbb{R}^{N\times N}$ is a symmetric double stochastic matrix also known as the communication (mixing) matrix, with $W_{ij}=W_{ji}>0$ if $j\in\Omega_{i}$,
and $W_{ij}=W_{ji}=0$ if $j\not\in \Omega_{i}$ and $i\neq j$,
and finally $W_{ii}=1-\sum_{j\neq i}W_{ij}>0$
for every $1\leq i\leq N$.\footnote{We refer the reader to the Appendix for more about the properties of the matrix $W$.} $\Omega_i$ is the set of neighbors of node $i$ on the network, i.e. $\Omega_i = \{ j : (i,j) \in \mathcal{E}\}$. Moreover, in \eqref{eqn:dsg_update}, $\tilde{\nabla} f_{i}(x)$ is an estimate of the gradient of the loss $f_i(x)$ at node $i$ based on a batch size $b_i$, i.e. it is based on $b_i$ random draws from data satisfying, 
\begin{equation} \tilde{\nabla} f_{i}( x_i^{(k)})
 := (1/b_i) \sum\nolimits_{j=1}^{b_i} \nabla\ell (x_i^{(k)}, z_{i,j}^{(k)}),
 \label{eq-stoc-grad}
\end{equation}
where $z_{i,j}^{(k)}$ are (fresh) i.i.d. draws from data at step $k$. \looseness=-1

\textbf{Heavy-tailed distributions with a power-law decay.}
A real-valued random variable $X$ is said to be \emph{heavy-tailed}
if the right tail or the left tail of the distribution  decays slower than any exponential distribution. 
We say $X$ has heavy (right) tail
if $\lim_{x\rightarrow\infty}\mathbb{P}(X\geq x)e^{cx}=\infty$
for any $c>0$, 
and a real-valued random variable $X$ has heavy (left) tail 
if $\lim_{x\rightarrow\infty}\mathbb{P}(X\leq-x)e^{c|x|}=\infty$
for any $c>0$; see e.g. \citep{Schmidli2009}. Similarly, an $\mathbb{R}^{d}$-valued random vector $X$
has heavy tail if $u^{T}X$ has heavy right tail 
for some vector $u\in\mathbb{S}^{d-1}$, 
where $\mathbb{S}^{d-1}:=\{u\in\mathbb{R}^{d}:\Vert u\Vert=1\}$ is the unit sphere. 

Heavy tail distributions include $\alpha$-stable distributions,
Pareto distribution, log-normal distribution and the Weilbull distribution. One important class of the heavy-tailed distributions
is the distributions with \emph{power-law} decay, 
which is the focus of our paper. That is,
$\mathbb{P}(X\geq x)\sim c_{0}x^{-\alpha}$
as $x\rightarrow\infty$ for some $c_{0}>0$ and $\alpha>0$,
where $\alpha>0$ is known as the \emph{tail-index},
which determines the tail thickness of the distribution.
Similarly, we say that the random vector $X$ has power-law decay
with tail-index $\alpha$ if for some $u\in\mathbb{S}^{d-1}$,
we have $\mathbb{P}(u^{T}X\geq x)\sim c_{0}x^{-\alpha}$,
for some $c_{0},\alpha>0$.
For a reference on heavy-tailed and power-law distributions, we refer to \citet{Foss2013}.


\textbf{Viewing DE-SGD as centralized SGD in higher dimensions.}
As in \citet{robust-network-agd,Yuan16}, we can express the DE-SGD iterations as
$x^{(k+1)}=\mathcal{W}x^{(k)}-\eta \tilde{\nabla} F\left(x^{(k)}\right)$,
where $\mathcal{W} := W \otimes I_d$, $F:\mathbb{R}^{Nd}\rightarrow\mathbb{R}$ defined as
$F(x):=F(x_{1},\ldots,x_{N})
=\sum\nolimits_{i=1}^{N}f_{i}(x_{i})$, 
with $x^{(k)}:=[(x_{1}^{(k)})^{T},(x_{2}^{(k)})^{T},\ldots,(x_{N}^{(k)})^{T}]^{T}\in\mathbb{R}^{Nd}$
and 
\begin{equation}
\tilde{\nabla}F(x^{(k)}):=[(\tilde{\nabla} f_1 (x_1^{(k)}))^{T},\ldots,(\tilde{\nabla} f_N (x_N^{(k)}))^{T}]^{T}.
\label{def-stoc-grad}
\end{equation} 
We can alternatively view DE-SGD as C-SGD iterations
\begin{equation} 
x^{(k+1)} = x^{(k)} - \eta \tilde \nabla F_{\mathcal{W}}(x^{(k)})
\label{eq-dsgd-vs-sgd}
\end{equation}
on a modified objective $F_{\mathcal{W}}$ defined as 
\begin{equation} 
F_{\mathcal{W}}(x) := F(x) +  (1/2)\eta^{-1}x^T (I_{Nd}-\mathcal{W}) x, \label{def-FW}
\end{equation}
with the convention that 
$\tilde \nabla F_{\mathcal{W}}(x) = \tilde \nabla F(x) + \frac{1}{\eta} (I_{Nd}-\mathcal{W}) x$ 
(see e.g. \citep{robust-network-agd}). Similar to \eqref{def-stoc-grad}, we can define the stochastic Hessian as $\tilde{\nabla}^2 f_{i}( x_i^{(k)})
 := \frac{1}{b_i} \sum_{j=1}^{b_i} \nabla^2 \ell(x_i^{(k)}, z_{i,j}^{(k)})$ with 
\begin{equation}
\tilde{\nabla}^2 F(x^{(k)}):=[(\tilde{\nabla}^2 f_1 (x_1^{(k)}))^{T},\ldots,(\tilde{\nabla}^2 f_N (x_N^{(k)}))^{T}]^{T},
\label{def-stoc-hessian} 
\end{equation} 
when $f_i$'s are twice differentiable for every $i=1,2,\dots,N$.

\section{Decentralized SGD and Heavy Tails}
\label{sec-least-squares}


\textbf{3.1 General smooth loss. }
We begin by the following result, which is an extension of \cite[Theorem~1]{hodgkinson2020multiplicative}, from centralized to the decentralized setting. It shows that if the objectives $f_i$ are strongly convex outside a compact set for every $i$ and if the stepsize is small enough, then the iterates will admit a heavy-tailed distribution. The proof is based on showing that under these assumptions, the function $F_{\mathcal{W}}(x)$ defined in \eqref{def-FW} is strongly convex outside a compact set for $\eta$ small and then using the identity \eqref{eq-dsgd-vs-sgd} that relates DE-SGD to C-SGD. Under our assumptions, the function $F(x)$ is not necessarily strongly convex outside a compact set (because when $x$ is large some coordinates $x_i$ can potentially be arbitrarily small); and the quadratic term in \eqref{def-FW} can vanish on a subspace. However, by a careful analysis, we show that outside a large enough compact set, when $F(x)$ does not have strong convexity then the quadratic term does have it so that the sum $F_{\mathcal{W}}(x)$ defined in \eqref{def-FW} is strongly convex as desired.

%

\begin{proposition}
\label{prop:ht_noncvx}
Assume that $f_i(x)$ is twice continuously differentiable on $\mathbb{R}^d$ and each $f_i$ is strongly convex outside a compact region in $\mathbb{R}^d$ for every $i=1,2,\dots, N$. Define 
$\mathfrak{L}_{\text{up}}:= \sup_{x\in\mathbb{R}^{Nd}} \|\mathcal{W} - \eta \tilde{\nabla}^2 F(x) \|$ and
$\mathfrak{L}_{\text{low}}:=  \liminf_{  \|x\| \to \infty } \sigma_{\mathrm{min}} ( \mathcal{W} - \eta \tilde \nabla^2 F(x) )$,
where $\tilde \nabla^2 F(x)$ is as in \eqref{def-stoc-hessian}, $\sigma_{\mathrm{min}} (A)$ denotes the smallest the singular value of a matrix $A$.
Further assume that the following holds:
$\mathbb{E} \left[\log \mathfrak{L}_{\text{up}}\right] < 0, \quad \mathbb{E} \left[\mathfrak{L}_{\text{up}}\right] < \infty, \quad \mathbb{P}(\mathfrak{L}_{\text{low}} > 1) > 0$.
%
%
Then, for $\eta$ small enough, there exists $\alpha,\beta >0$ such that $\mathbb{E} \left[\mathfrak{L}_{low}^\alpha\right] = \mathbb{E}\left[\mathfrak{L}_{up}^\beta\right] = 1$ and for every $\varepsilon >0$,
$\limsup_{t \to \infty} t^{\alpha+\varepsilon} \mathbb{P}\left(\|x^{(\infty)}\|>t\right) >0$
and
$\limsup_{t \to \infty} t^{\beta-\varepsilon} \mathbb{P}\left(\|x^{(\infty)}\|>t\right) < \infty$.
\end{proposition}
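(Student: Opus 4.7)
My plan is to reduce the proposition to the centralized heavy-tail theorem of \citet{hodgkinson2020multiplicative} by viewing DE-SGD as centralized SGD on the modified objective $F_{\mathcal{W}}$ defined in \eqref{def-FW}. The one-step Jacobian of the recursion $x^{(k+1)} = x^{(k)} - \eta \tilde\nabla F_{\mathcal{W}}(x^{(k)})$ is $I_{Nd} - \eta\tilde\nabla^2 F_{\mathcal{W}}(x) = \mathcal{W} - \eta\tilde\nabla^2 F(x)$, so $\mathfrak{L}_{\text{up}}$ and $\mathfrak{L}_{\text{low}}$ coincide with the worst-case stochastic Lipschitz constant and the worst-case lower singular-value expansion factor of a single update. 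Consequently the three displayed moment/probability assumptions match the hypotheses of Theorem~1 in \cite{hodgkinson2020multiplicative}, and the existence of positive exponents $\alpha,\beta$ with $\mathbb{E}[\mathfrak{L}_{\text{up}}^\beta]=\mathbb{E}[\mathfrak{L}_{\text{low}}^\alpha]=1$ follows from the standard convexity argument for the associated Laplace transforms, together with the pointwise bound $\mathfrak{L}_{\text{low}}\leq\mathfrak{L}_{\text{up}}$ and $\mathbb{E}[\log \mathfrak{L}_{\text{up}}]<0$.

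The structural prerequisite still to verify is that $F_{\mathcal{W}}$ is strongly convex outside a compact subset of $\mathbb{R}^{Nd}$ for $\eta$ small. Writing
\begin{equation*}
\nabla^2 F_{\mathcal{W}}(x)=D(x)+\eta^{-1}(I_{Nd}-\mathcal{W}),\qquad D(x):=\mathrm{blkdiag}(\nabla^2 f_1(x_1),\ldots,\nabla^2 f_N(x_N)),
\end{equation*}
I decompose any unit direction $v\in\mathbb{R}^{Nd}$ as $v=(\mathbf{1}_N\otimes\bar v)+v_\perp$ with $v_\perp$ orthogonal to the consensus subspace $\mathcal{C}=\mathbf{1}_N\otimes\mathbb{R}^d$ and bound the Rayleigh quotient in the two regimes selected by a threshold $\delta>0$ on $\|v_\perp\|$. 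When $\|v_\perp\|\geq\delta$, connectedness of the graph gives $v_\perp^T(I_{Nd}-\mathcal{W})v_\perp\geq(1-\lambda_2(W))\delta^2>0$, and for $\eta$ small the term $\eta^{-1}(1-\lambda_2(W))\delta^2$ dominates any bounded negative curvature of $D(x)$. When $\|v_\perp\|<\delta$ the direction $v$ is nearly consensus, so $v^TD(x)v$ is, up to a controlled error, $\bar v^T[\sum_i\nabla^2 f_i(x_i)]\bar v$; here I would use $\|x\|\geq L$ with the strong convexity of each $f_i$ outside compact to argue that the blocks $x_i$ lying in their strong-convexity region provide enough positive Hessian mass to dominate the bounded negative contribution from the remaining blocks. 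Cross-terms are absorbed by the uniform Hessian bound implicit in $\mathbb{E}[\mathfrak{L}_{\text{up}}]<\infty$, and choosing $\eta$ and $\delta$ appropriately yields $\nabla^2 F_{\mathcal{W}}(x)\succeq cI_{Nd}$ outside a large enough ball.

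Once strong convexity outside compact and the moment hypotheses are in hand, Theorem~1 of \cite{hodgkinson2020multiplicative} applies directly to the C-SGD iteration \eqref{eq-dsgd-vs-sgd} and produces the two-sided polynomial tail estimates for $\|x^{(\infty)}\|$. The main obstacle is the Hessian analysis in the previous paragraph: because $F$ is a separable sum, $\|x\|$ large does not force every block $x_i$ into the strong-convexity region of $f_i$, and the regularizer $\tfrac12\eta^{-1}x^T(I_{Nd}-\mathcal{W})x$ is degenerate along the consensus subspace where $F$ may itself fail to be strongly convex. Reconciling these two failures by matching the threshold $\delta$ separating the two regimes with the $\eta$-dependent strength of the perpendicular quadratic term, uniformly in $x$, is the delicate part of the argument.
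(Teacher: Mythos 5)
Your overall reduction is the same as the paper's: view DE-SGD as C-SGD on $F_{\mathcal{W}}$, establish strong convexity of $F_{\mathcal{W}}$ outside a compact set for $\eta$ small, and invoke Theorem~1 and Example~4 of \citet{hodgkinson2020multiplicative}; the identification of $\mathfrak{L}_{\text{up}}$, $\mathfrak{L}_{\text{low}}$ with the Lipschitz data of the update and the existence of $\alpha,\beta$ are handled as in the paper. Where you diverge is in the key convexity lemma, and your version has a gap that cannot be repaired in the form you state it. You aim to prove the full matrix inequality $\nabla^2 F_{\mathcal{W}}(x)\succeq c\,I_{Nd}$ for all large $\|x\|$ by splitting the test \emph{direction} $v$ into consensus and orthogonal parts. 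That statement is false. Take $x=(x_1,0,\dots,0)$ with $\|x_1\|=T\to\infty$, so only the first block leaves its compact region, and take $v=N^{-1/2}\mathbf{1}_N\otimes w$ exactly in the consensus subspace. Then $v^T(I_{Nd}-\mathcal{W})v=0$ (since $W\mathbf{1}_N=\mathbf{1}_N$), while $v^TD(x)v=\frac{1}{N}\sum_i w^T\nabla^2 f_i(x_i)w$, which is at most $\frac{1}{N}\left(w^T\nabla^2 f_1(x_1)w-(N-1)m\right)$ if the remaining blocks sit at points where $\nabla^2 f_i=-mI$. This is negative whenever the nonconvexity level $m$ on the compact regions exceeds the curvature of $f_1$ at $x_1$ divided by $N-1$. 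Your proposed resolution of Case~2 --- that the blocks already in their strong-convexity regions ``provide enough positive Hessian mass'' --- is exactly what fails: $\|x\|$ large forces only \emph{one} block to be large, and one strongly convex block cannot outweigh $N-1$ blocks of curvature $-mI$. No tuning of $\delta$ and $\eta$ helps, because the offending direction is purely consensus (so the $\eta^{-1}$ term is identically zero there) and the offending points have arbitrarily large norm.

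The paper escapes this by partitioning \emph{points} rather than directions, and by bounding only the Rayleigh quotient in the direction of $x$ itself rather than uniformly over $v$. It introduces $\mathcal{S}_c=\{x:\min_i\|x_i\|^2>c\|x\|^2\}$ with $c<1/N$. For $x\in\mathcal{S}_c$ with $\|x\|\geq R/\sqrt{c}$, \emph{every} block is large, so $D(x)\succeq\mu I$ and the Laplacian term is merely nonnegative. For $x\notin\mathcal{S}_c$, some block satisfies $\|x_j\|^2\leq c\|x\|^2<\|x\|^2/N$, which forces the vector $x$ itself to be quantitatively far from the consensus subspace (this is the content of the paper's Lemma~\ref{eq-set-Sc-bound}), whence $x^T(I_{Nd}-\mathcal{W})x\geq c_2(1-\lambda_2(W))\|x\|^2$ with $c_2$ independent of $x$, and for small $\eta$ this term alone dominates the $-m\|x\|^2$ contribution of $D(x)$. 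The point you are missing is the coupling between the point and the direction: the only directions in which the regularizer is inactive are consensus directions, and a large-norm point that is itself nearly consensus necessarily has all blocks large. To salvage your decomposition you would have to restrict the test direction to $v\propto x$ (or otherwise exploit this coupling), at which point you recover the paper's argument.
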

Proposition~\ref{prop:ht_noncvx} shows that the norm of the iterates is heavy-tailed with a tail-index lying between $\beta$ and $\alpha$. In the following section, we show that for least square problems (where the losses $f_i$ are quadratics), we can get stronger results 
which characterize the tail-index 
in a more precise manner.
\textbf{3.2 Quadratic loss. }  
In the specific case of decentralized least squares, the loss function in \eqref{pbm-pop-min} is a quadratic of the form $ \ell(x,z_i) =\frac{1}{2}(a_i^T x - y_i)^2 \quad \mbox{for every node $i$},$ 
where $z_i = (a_i,y_i)$ is the local data at agent $i$ with $a_i$ representing the input feature vector and $y_i$ being the output. Recalling that each node $i$ has access to $b_i$ samples from data $\{z_{i,j}^{(k)} =(a_{i,j}^{(k)},y_{i,j}^{(k)})\}_{j=1}^{n_i}$ at every iteration $k$ to form a stochastic gradient estimate, \eqref{eq-stoc-grad} becomes
$\tilde{\nabla} f_{i}(x_i^{(k)})
 := \frac{1}{b_i} \sum_{j=1}^{b_i}[ a_{i,j}^{(k)}(a_{i,j}^{(k)})^T x_i^{(k)} - y_{i,j}^{(k)} a_{i,j}^{(k)}]$. 
DE-SGD iterations then become
\begin{equation}\label{eq-iter-dgd}
x^{(k+1)}= M^{(k+1)} x^{(k)} + q^{(k+1)},
\quad
\text{where}
\quad
M^{(k+1)} := \mathcal{W} - \eta H^{(k+1)}, 
\end{equation}
with 
$H^{(k+1)} := \mbox{blkdiag} ( \{ H_{i}^{(k+1)} \}_{i=1}^N )$, 
$q^{(k+1)} := [(q_{1}^{(k+1)})^{T},(q_{2}^{(k+1)})^{T},\ldots,(q_{N}^{(k+1)})^{T}]^{T}$ where for $i=1,2,\dots,N\,$
\begin{align}
H_{i}^{(k+1)} := (1/b_i)\sum\nolimits_{j = 1}^{b_i} a_{i,j}^{(k)} (a_{i,j}^{(k)})^T,
\quad
q_i^{(k+1)} := (\eta/b_i) \sum\nolimits_{j=1}^{b_i} a_{i,j}^{(k)} y_{i,j}^{(k)}\,,\label{def-Hk}
\end{align}
where $a_{i,j}^{(k)}$ and $y_{i,j}^{(k)}$ are i.i.d. over $k$ random draws from the data
with the same distribution as $a_{i,j}, y_{i,j}$ that satisfy the following assumptions:

\begin{itemize}
    \item [\textbf{(A1)}] For every $i$, $a_{i,j}$ are i.i.d. over $j$ following a continuous distribution supported
on $\mathbb{R}^d$ with all the moments finite. 
    \item [\textbf{(A2)}] For every $i=1,2,\dots,N$, $y_{i,j}$ are i.i.d. over $j$ with a continuous density whose support is $\mathbb{R}$ with all the moments finite. 
\end{itemize} 

We assume $\textbf{(A1)}$ and $\textbf{(A2)}$ throughout the paper, 
and they are satisfied in a large variety of cases, for instance when $a_{i,j}$ and $y_{i,j}$ are Gaussian distributed. We recall the concatenated iterates 
$x^{(k+1)}=M^{(k+1)}x^{(k)}+q^{(k+1)}$,
where $M^{(k+1)},q^{(k+1)}$ are defined by \eqref{eq-iter-dgd}, \eqref{def-Hk}.
Let us introduce
\begin{equation}
h(s) := \lim\nolimits_{k\to\infty}(\mathbb{E}\| M^{(k)} M^{(k-1)}\dots M^{(1)}\|^s)^{1/k}\,,
\label{def-hs}
\end{equation}
which arises in stochastic matrix recursions (see e.g. \citep{buraczewski2014multidimensional}) where $\|\cdot\|$ denotes the matrix 2-norm (i.e. largest singular value of a matrix).  
Since $\mathbb{E}\left\|M^{(k)}\right\|^s < \infty$ for all $k$ and $s>0$, we have $h(s) < \infty$.  
Let us also define $\Pi^{(k)} := M^{(k)} M^{(k-1)}\dots M^{(1)}$ and
\begin{equation}
\rho := \lim\nolimits_{k\to\infty} (2k)^{-1} \log(\mbox{largest eigenvalue of } (\Pi^{(k)})^T \Pi^{(k)})\,.
\label{def-rho}
\end{equation}
The latter quantity is 
called the top Lyapunov exponent of the stochastic recursion (\ref{eq-iter-dgd}). 

In the following, by following similar arguments as in \citet{ht_sgd_quad}, 
we can show that the limit density
has a polynomial tail
with a tail-index given precisely by $\alpha$, the unique critical value
such that $h(\alpha)=1$. 
The result builds on adapting the techniques developed in stochastic matrix recursions \citep{alsmeyer2012tail,buraczewski2016stochastic} to our setting. Our result shows that even in the simplest setting when the input data is i.i.d. without any heavy tail, DE-SGD iterates can lead to a heavy-tailed stationary distribution with an infinite variance.

\begin{theorem}\label{thm:main:dsg}
Suppose Assumptions \textbf{(A1)}-\textbf{(A2)} hold.
Consider the DE-SGD iterations \eqref{eq-iter-dgd}.
If $\rho<0$ and there exists a unique positive $\alpha$ such that $h(\alpha)=1$, then 
\eqref{eq-iter-dgd} admits a unique stationary solution $x^{(\infty)}$
and the DE-SGD iterations converge to $x^{(\infty)}$ in distribution,
where the distribution of $x^{(\infty)}$ satisfies
$\lim\nolimits_{t\to\infty} t^\alpha \mathbb{P}\left(u^T x^{(\infty)} > t \right)= g_\alpha(u)$, for any $u\in\mathbb{S}^{Nd-1}$,
for some positive and continuous function $g_\alpha$ on 
$\mathbb{S}^{Nd-1}$.
\end{theorem}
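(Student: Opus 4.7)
The plan is to view \eqref{eq-iter-dgd} as a multivariate affine stochastic recursion $x^{(k+1)} = M^{(k+1)} x^{(k)} + q^{(k+1)}$ on $\mathbb{R}^{Nd}$ and invoke the implicit renewal machinery (Kesten--Goldie, as extended to the matrix setting by Alsmeyer--Mentemeier \citep{alsmeyer2012tail, buraczewski2016stochastic}), exactly along the lines of the centralized quadratic-loss argument in \citet{ht_sgd_quad}. The only structural novelty is that $M^{(k)} = \mathcal{W} - \eta H^{(k)}$ carries a deterministic mixing-matrix skeleton $\mathcal{W}$ on top of a random block-diagonal perturbation $\eta H^{(k)}$, so the bulk of the work is to verify that the hypotheses of those theorems survive this perturbation under Assumptions \textbf{(A1)}-\textbf{(A2)}.

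\textbf{Existence, uniqueness, and convergence.} Since $\rho<0$ and since (A1)-(A2) yield $\mathbb{E}[\log^+\|q^{(1)}\|] < \infty$, the standard Bougerol--Picard argument for contractive affine recursions shows that, after extending the driving sequence to a two-sided stationary process, the random series $x^{(\infty)} := \sum_{k\geq 0} M^{(0)}M^{(-1)}\cdots M^{(-k+1)} q^{(-k)}$ converges almost surely and is the unique stationary solution of the recursion. For an arbitrary deterministic initialization one then has $x^{(k)} - \bar x^{(k)} = \Pi^{(k)} (x^{(0)} - \bar x^{(0)})$, where $\bar x^{(k)}$ denotes the stationary version of the process; this difference tends to zero almost surely because $\rho<0$, and convergence in distribution to $x^{(\infty)}$ follows.

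\textbf{Tail asymptotic and main obstacle.} For the polynomial tail, I would apply the multivariate implicit renewal theorem, which requires: (i) the spectral condition $h(\alpha) = 1$ with a unique positive $\alpha$ (assumed); (ii) the $\alpha$-moment bound $\mathbb{E}\|M^{(1)}\|^\alpha < \infty$, which is immediate from (A1) since $H^{(1)}$ has finite moments of all orders; (iii) strong irreducibility of the multiplicative semigroup generated by the support of $M^{(1)}$ together with a proximality/contraction condition; and (iv) non-arithmeticity of the distribution of $\log\|M^{(1)} v\|$ for $v\neq 0$. Conditions (iii) and (iv) form the main obstacle, because $M^{(k)}$ is not a freely random matrix: it is a random perturbation of the fixed matrix $\mathcal{W}$ that lives in the block-diagonal subspace. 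The crucial observation is that by (A1) each $a_{i,j}$ has a continuous density on all of $\mathbb{R}^d$, so the random outer products defining $H_i^{(k)}$ admit no almost-sure invariant proper subspace of $\mathbb{R}^d$; combining this with the fact that $\mathcal{W}$ couples the block coordinates, no finite union of proper subspaces of $\mathbb{R}^{Nd}$ is invariant under the full support of $M^{(1)}$, and strong irreducibility follows. The same continuous-density hypothesis renders $\log\|M^{(1)} v\|$ absolutely continuous and thereby rules out arithmeticity. These verifications are parallel to those carried out in the centralized quadratic setting of \citet{ht_sgd_quad}.

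\textbf{Conclusion.} Once all hypotheses are verified, the Alsmeyer--Mentemeier theorem delivers $\lim_{t\to\infty} t^\alpha \mathbb{P}(u^T x^{(\infty)} > t) = g_\alpha(u)$ with $g_\alpha$ positive and continuous on $\mathbb{S}^{Nd-1}$; the fact that $q^{(1)}$ has finite moments of all orders under (A1)-(A2) ensures that $g_\alpha(u)$ is finite for every $u$. Overall, the argument is a careful transplant of the centralized analysis of \citet{ht_sgd_quad} to the decentralized setting, with the only nontrivial step being to check that the block-diagonal-plus-deterministic-skeleton structure of $M^{(k)}$ still satisfies the irreducibility and non-arithmeticity requirements of the implicit renewal theorem.
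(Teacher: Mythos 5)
Your proposal follows essentially the same route as the paper, which itself gives no detailed proof of Theorem~\ref{thm:main:dsg} but states that it is obtained by adapting the argument of \citet{ht_sgd_quad} via the implicit renewal theory for matrix recursions of \citet{alsmeyer2012tail} and \citet{buraczewski2016stochastic}; your outline (Bougerol--Picard for existence, uniqueness and distributional convergence under $\rho<0$, then the multivariate Kesten--Goldie/Alsmeyer--Mentemeier theorem for the polynomial tail) is exactly that adaptation. You in fact supply more detail than the paper does, correctly isolating the one genuinely new verification — that irreducibility and non-arithmeticity survive the deterministic skeleton $\mathcal{W}$ thanks to the continuous, fully supported distribution of the $a_{i,j}$ in \textbf{(A1)}.
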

\textbf{Generalized CLT (GCLT).} Following Theorem~\ref{thm:main:dsg}, one can derive a generalized central limit theorem (GCLT) result
that shows that when properly scaled, the sum of the iterates $S_{K}:=\sum_{k=1}^{K} x^{(k)}$ converges in law to a stable distribution. 
We refer to 
the Appendix
for the details.

Theorem \ref{thm:main:dsg} provides a formula for the tail-index $\alpha$. However, since $\rho$ and $h(s)$ do not have simple closed-form formulas, it is hard to write an explicit formula for $\alpha$ in Theorem~\ref{thm:main:dsg}. In fact, computing $\rho$ is hard in general \citep{tsitsiklis1997lyapunov}. That being said, 
using the sub-multiplicativity of the norm of matrix products appearing in \eqref{def-hs} and \eqref{def-rho}, they can be upper bounded as follows:\looseness=-1
\begin{align}
\rho\leq\hat{\rho}:=\mathbb{E}\log\left\Vert\mathcal{W}-\eta H\right\Vert,
\qquad
h(s) \leq \hat{h}(s):=\mathbb{E}\left[\left\Vert\mathcal{W}-\eta H\right\Vert^{s}\right], 
    \label{ineq-h-s}
\end{align}
where $H$ is a matrix that has the same distribution as $H^{(k+1)}$ (which does not depend on $k$). 


\textbf{Lower bounds on the tail-index $\alpha$.} If $\hat{\alpha}$ is such that $\hat{h}(\hat\alpha)=1$, then by (\ref{ineq-h-s}), $\hat{\alpha}$ is a lower bound on the tail-index $\alpha$ that satisfies $h(\alpha)=1$ where $h$ is defined as in (\ref{def-hs}). In other words, we have $\hat{\alpha}\leq \alpha$ and therefore $\hat{\alpha}$ serves as a lower bound on the tail-index.

\textbf{Checking the conditions in Theorem~\ref{thm:main:dsg}.} We remark that $\hat{\rho}$ and $\hat{h}(s)$ can help us check the conditions in Theorem~\ref{thm:main:dsg}.
Since $\rho\leq\hat{\rho}$, we have $\rho<0$ when
$\hat{\rho}<0$. Moreover, $h(0)=\hat{h}(0)=1$, and one can check that $h(s)$ is convex in $s$. When $\hat{\rho}<0$, $\hat{h}'(0)=\hat{\rho}<0$, and $h(s)\leq\hat{h}(s)<1$ for any sufficiently small $s>0$. Under some mild assumption {on the data distribution}, one can check that $\liminf_{s\rightarrow\infty}h(s)>1$ and thus there exists a unique positive $\alpha$ such that $h(\alpha)=1$.

In the next section, we study the properties of the tail-index of DE-SGD iterations further.
\section{Theoretical Analysis for the Tail-Index}\label{sec:theoretical}



\textbf{4.1 Monotonicity of the tail-index.}
Assume that $\hat{\rho}<0$ so that there exists
a unique positive $\hat{\alpha}$ such that $\hat{h}(\hat{\alpha})=1$.
We can obtain the following monotonicity result for $\hat{\alpha}$ with respect to the hyperparameters of the model 
by extending the proof of Theorem~4 in \citet{ht_sgd_quad} from centralized to the decentralized setting. 

\begin{theorem}\label{thm:mono}
Suppose Assumptions \textbf{(A1)}-\textbf{(A2)} hold. 
The tail-index $\hat{\alpha}$ is strictly increasing in batch-sizes $b_{i}$
and strictly decreasing in stepsize $\eta$ provided that $\hat{\alpha}\geq 1$.
Moreover, the tail-index $\hat{\alpha}$ is strictly decreasing in dimension $d$.
\end{theorem}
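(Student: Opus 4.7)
The plan is to reduce each monotonicity claim to the corresponding monotonicity of $\hat h(s) := \mathbb{E}\|\mathcal{W} - \eta H\|^s$ at the critical point $s = \hat\alpha$. Since $\hat h(0) = 1$, $\hat h'(0) = \hat\rho < 0$, and $\hat h$ is convex in $s$ (a standard property shared by all moment functions of a non-negative random variable), the positive root $\hat\alpha$ lies on the strictly increasing branch of $\hat h$. Consequently, if a hyperparameter change strictly decreases $\hat h(s)$ at $s = \hat\alpha$, then the equation $\hat h(s) = 1$ is only restored at a larger value of $s$, so $\hat\alpha$ increases; conversely, a strict increase in $\hat h(\hat\alpha)$ forces $\hat\alpha$ to decrease.

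For the monotonicity in $b_i$, I would use a leave-one-out identity together with Jensen's inequality. Writing
\[
H_i^{(b_i+1)} \;=\; \frac{1}{b_i+1}\sum_{j=1}^{b_i+1} H_i^{(b_i),(-j)}, \qquad H_i^{(b_i),(-j)} := \frac{1}{b_i}\sum_{k\neq j} a_{i,k} a_{i,k}^{T},
\]
each summand has the same marginal distribution as $H_i^{(b_i)}$. Pushing this affine decomposition into the $i$-th block of $\mathcal{W} - \eta H$ and using convexity of $\|\cdot\|^{s}$ for $s \geq 1$ yields $\hat h^{(b_i+1)}(s) \leq \hat h^{(b_i)}(s)$ for all such $s$; strictness comes from Assumption \textbf{(A1)}, since the continuous density of $a_{i,j}$ rules out the degenerate equality case in Jensen's inequality.

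For the stepsize monotonicity, fix $0 < \eta_1 < \eta_2$ and set $\lambda := \eta_1/\eta_2 \in (0,1)$. The identity
\[
\mathcal{W} - \eta_1 H \;=\; (1-\lambda)\mathcal{W} + \lambda(\mathcal{W} - \eta_2 H),
\]
together with the triangle inequality and $\|\mathcal{W}\| = 1$ (as $W$ is symmetric doubly stochastic with largest eigenvalue $1$), yields $\|\mathcal{W} - \eta_1 H\| \leq (1-\lambda) + \lambda\|\mathcal{W} - \eta_2 H\|$. Applying the convex map $x \mapsto x^{s}$ for $s \geq 1$ (precisely where the hypothesis $\hat\alpha \geq 1$ is used) and taking expectations gives $\hat h_{\eta_1}(s) \leq (1-\lambda) + \lambda\, \hat h_{\eta_2}(s)$; specializing at $s = \hat\alpha_{\eta_2}$ produces $\hat h_{\eta_1}(\hat\alpha_{\eta_2}) \leq 1$, and so $\hat\alpha_{\eta_1} \geq \hat\alpha_{\eta_2}$, with strictness again coming from \textbf{(A1)}. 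For the dimension monotonicity, an embedding argument works: augmenting the data vector $a_{i,j}$ with a fresh coordinate makes $H^{(d)}$ a principal submatrix of $H^{(d+1)}$ (after reordering blocks), and likewise realizes $\mathcal{W}^{(d)} = W \otimes I_d$ as a principal submatrix of $\mathcal{W}^{(d+1)} = W \otimes I_{d+1}$; for symmetric matrices, the operator norm is monotone under principal submatrix extraction via Cauchy interlacing, so $\hat h(s)$ strictly increases in $d$ and $\hat\alpha$ strictly decreases.

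The main obstacle, as in \citet{ht_sgd_quad}, will be turning the weak Jensen-type inequalities into strict ones: one must verify that the affine combinations appearing above are not almost surely constant, which is exactly where the continuous-density Assumption \textbf{(A1)} is essential. Beyond this, the only substantive change from the centralized analysis of \citet{ht_sgd_quad} is that the identity matrix $I$ in their argument is replaced by the mixing matrix $\mathcal{W}$; because $\|\mathcal{W}\| = 1$ still holds, the stepsize manipulation carries through verbatim with this substitution, and the remainder of the proof is convex-analytic bookkeeping for $\hat h$.
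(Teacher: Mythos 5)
Your proposal is correct and follows essentially the same route as the paper, whose entire proof of Theorem~\ref{thm:mono} is a one-line deferral to adapting Theorem~4 of \citet{ht_sgd_quad} from the centralized to the decentralized setting; your leave-one-out/Jensen argument for batch size, the convex-combination identity $\mathcal{W}-\eta_1 H=(1-\lambda)\mathcal{W}+\lambda(\mathcal{W}-\eta_2 H)$ with $\|\mathcal{W}\|=1$ for the stepsize, and the principal-submatrix embedding for dimension are exactly the intended adaptation, with $I$ replaced by $\mathcal{W}$. You have in fact supplied more detail than the paper itself does.
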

In the next section, we compare the DE-SGD to Dis-SGD (where nodes do not communicate at all with each other) in terms of the tail-index. We will also compare with the C-SGD. 

\textbf{4.2 Tail-index comparison between Disconnected SGD and Centralized SGD. } We start with defining properly what exactly we mean by Disconnected SGD and Centralized SGD iterations. 

\textbf{{Disconnected SGD}.} Disconnected SGD (Dis-SGD) corresponds to the case $W=I$ (where nodes do not share information with other nodes), and for every $i=1,2,\ldots,N$, 
the iterates follow the recursion:
$x_{i}^{(k+1)}=x_{i}^{(k)}-\eta\tilde{\nabla}f_{i}(x_{i}^{(k)})$, 
where each gradient $\tilde{\nabla}f_{i}(x_{i}^{(k)})$ is based on $b_{i}$ samples from node $i$'s dataset. The total number of samples (cumulatively over the nodes) equals $\sum_{i=1}^N b_i$.

\textbf{{Centralized SGD}.} Centralized SGD (C-SGD) consists of the iterations 
$x_{k+1}=x_{k}-\eta\tilde{\nabla}f(x_{k})$,
where we take 
(number of data points per iteration) batch-size
to be $\sum_{i=1}^{N}b_{i}$ for centralized SGD. 

We will first be comparing C-SGD to Dis-SGD and then we will be comparing it to the DE-SGD. 

To make the comparison of Dis-SGD and C-SGD easier, we assume
that $b_{i}\equiv b$ and $a_{i,j}$ are i.i.d. over $i$ and $j$.
Under Assumptions \textbf{(A1)}-\textbf{(A2)}, 
the iterates $x_{i}^{(k)}$ are independent and
as it is shown in \citet{ht_sgd_quad} that
$\hat{\alpha}=\hat{\alpha}(b)$ is a lower bound
of the tail-index of $x_{i}^{(\infty)}$
which is the unique positive value satisfying $\hat{h}(\hat{\alpha}(b))=1$, 
where $\hat{h}(s)=\mathbb{E}[\Vert I-\eta H\Vert^{s}]$,
where $\hat{\alpha}(b)$ emphasize the dependence on the batch-size $b$
such that for each node $i$, $b$ data points are chosen. 
We use $\hat{\alpha}(b)$ as a proxy of the tail-index.
In the centralized setting, 
where $bN$ data points are chosen
at each iteration, and hence the batch-size 
equals $bN$ for centralized SGD. 
Thus, as it is shown in \citet{ht_sgd_quad} that
the corresponding tail-index (proxy) is $\hat{\alpha}(bN)$. 
We have the following observation by adapting the
the monotonicity properties of tail-index shown in \cite[Theorem~4]{ht_sgd_quad}.

\begin{proposition}
\label{prop:disconnected:centralized}
The tail-index for disconnected SGD is smaller than that
of the centralized SGD. Indeed, their difference gets larger
as the network size increases.
\end{proposition}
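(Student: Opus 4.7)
The plan is to deduce Proposition~\ref{prop:disconnected:centralized} directly from the batch-size monotonicity of the tail-index proxy $\hat{\alpha}$, by observing that both Dis-SGD and C-SGD reduce to centralized recursions to which \cite[Theorem~4]{ht_sgd_quad} applies.

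Since Dis-SGD corresponds to $W = I$, the iterations at each node decouple and node $i$ independently runs the centralized SGD recursion $x_i^{(k+1)} = (I_d - \eta H_i^{(k+1)}) x_i^{(k)} + q_i^{(k+1)}$ with batch size $b$, driven by i.i.d.\ samples $(a_{i,j}^{(k)},y_{i,j}^{(k)})$. Under Assumptions \textbf{(A1)}--\textbf{(A2)} the data are i.i.d.\ across $i$ as well, so each $x_i^{(\infty)}$ has the same marginal law, and its tail-index proxy equals $\hat{\alpha}(b)$, the unique positive root of $\hat{h}_b(s) := \mathbb{E}[\|I_d - \eta H\|^s] = 1$, where $H$ is the empirical Gram matrix built from $b$ i.i.d.\ samples. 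The analogous computation for C-SGD with batch size $bN$ yields tail-index proxy $\hat{\alpha}(bN)$, obtained by replacing $b$ by $bN$ in the definition of $H$.

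Next I would invoke the strict monotonicity of $\hat{\alpha}$ in the batch size — exactly \cite[Theorem~4]{ht_sgd_quad}, of which our Theorem~\ref{thm:mono} is the decentralized extension — to conclude $\hat{\alpha}(bN) > \hat{\alpha}(b)$ for every $N \geq 2$, establishing the first assertion. For the second assertion, observe that $\hat{\alpha}(b)$ does not depend on $N$ while $\hat{\alpha}(bN)$ is strictly increasing in $N$ by the same monotonicity statement applied to $bN$ versus $b(N+1)$; hence the gap $\hat{\alpha}(bN) - \hat{\alpha}(b)$ grows strictly with the network size $N$, which is the ``gets larger'' claim.

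There is no serious analytical obstacle here beyond correctly citing the monotonicity lemma; the only caveat to flag is that the monotonicity in \cite{ht_sgd_quad} is stated under $\hat{\alpha} \geq 1$ (mirroring the hypothesis in our Theorem~\ref{thm:mono}), and this assumption should be inherited. In particular, no new analysis of $\hat{h}$ is required — the proof is essentially a coordinate-wise reduction to the centralized single-node setting plus an invocation of the cited monotonicity.
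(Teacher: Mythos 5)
Your proposal matches the paper's own argument: the paper likewise identifies the Dis-SGD and C-SGD tail-index proxies as $\hat{\alpha}(b)$ and $\hat{\alpha}(bN)$ and invokes the batch-size monotonicity of \cite[Theorem~4]{ht_sgd_quad} (adapted to the $\Vert\cdot\Vert$ norm) to conclude $\hat{\alpha}(bN)>\hat{\alpha}(b)$ with a gap that grows in $N$. Your added caveat about inheriting the $\hat{\alpha}\geq 1$ hypothesis from the monotonicity result is a reasonable (and slightly more careful) remark than what the paper states.
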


\textbf{4.3 Tail-index comparison between Decentralized SGD and Disconnected SGD. }
We choose the mixing matrix as $W= I_N - \delta L$ with $\delta>0$ small enough so that the spectral radius of $W$ is not larger than 1, where we recall that $I_N$ denotes the $N\times N$ identity
matrix and $L$ is the $N\times N$ graph Laplacian. This choice of the mixing matrix has been common in the literature \citep{pu2020asymptotic, hendrikx2019accelerated}. We consider the function
$g(\delta) := \| \mathcal{W} - \eta H\| = 
\| I_{Nd} - \eta H - \delta (L\otimes I_d) \|$, where we recall that $H^{(k+1)} := \mbox{blkdiag} ( \{ H_{i}^{(k+1)} \}_{i=1}^N )$ and $\mathcal{W} = W \otimes I_d$. Recall from \eqref{ineq-h-s} that $\hat{h}(s):=\mathbb{E}\left[\left(g(\delta)\right)^{s}\right]$ and the tail-index estimate $\hat\alpha = \hat\alpha (\delta)$ depends on $\delta$ and is the unique positive value that satisfies $\hat h\left(\hat\alpha(\delta)\right) = 1$ provided that $\hat{\rho}=\hat{\rho}(\delta)=\mathbb{E}[\log g(\delta)]<0$. The case $\delta = 0$ corresponds to Dis-SGD, whereas $\delta>0$ corresponds to the DE-SGD. 
When $\delta=0$, we use the notation $\hat{h}_{dis}(s)=\mathbb{E}\left[\left(g(0)\right)^{s}\right]$, $\hat{\alpha}_{dis}=\hat{\alpha}(0)$ and $\hat{\rho}_{dis}=\hat{\rho}(0)$ for the disconnected case.
A natural question would be how the tail-index changes when the network effect is introduced; i.e. whether $\alpha'(0):=\frac{d\alpha(\sigma)}{d\sigma}|_{\sigma=0}$ is positive or not. 
To answer this question, we use the Taylor 
series expansion of $g(\delta)$ around $\delta=0$ by the standard perturbation theory for singular values
(see e.g. \cite[Lemma 2.3]{guglielmi2011fast})
to obtain a first-order expansion for $\hat{h}(s)$
in terms of $\hat{h}_{dis}(s)$, a correction term
linear in $\delta$, and a higher-order error term
(see Lemma~\ref{lem:first:order} in the Appendix).
Based on this, we can obtain a first-order approximation of $\hat{\alpha}$
in terms of $\hat{\alpha}_{dis}$, a correction term linear in $\delta$
and a higher-order error term, where the tail-indexes $\hat{\alpha}$ and $\hat{\alpha}_{dis}$
are the unique positive values such that $\hat{h}(\hat{\alpha})=1$
and $\hat{h}_{dis}(\hat{\alpha}_{dis})=1$ provided that $\hat{\rho}_{dis}<0$
and $\delta$ is sufficiently small.

\begin{theorem}\label{thm:first:order:alpha}
Assume $d=1$ and $b=1$.
Also assume that $\hat{\rho}_{dis}=\mathbb{E}\left[\log(\max_{1\leq i\leq N}|1-\eta a_{i}^{2}|)\right]<0$. 
As $\delta\rightarrow 0$, we have 
\begin{align}\label{alpha:expansion}
\hat{\alpha}&=\hat{\alpha}_{dis}
-\frac{s\delta N^{-1}\sum_{i=1}^{N}L_{ii}[1-2\mathbb{P}(\min_{1\leq i\leq N}a_{i}^{2}+\max_{1\leq i\leq N}a_{i}^{2}<2/\eta)]}{\mathbb{E}[\log(\max_{1\leq i\leq N}|1-\eta a_{i}^{2}|)
\max_{1\leq i\leq N}|1-\eta a_{i}^{2}|^{\hat{\alpha}_{dis}}]}+o(\delta),
\end{align}
where we provide an explicit formula for the probability and the expectation on the right hand side of \eqref{alpha:expansion} in Theorem~\ref{thm:first:order:alpha-extended} in the Appendix.
\end{theorem}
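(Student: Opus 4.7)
The plan is to combine first-order perturbation theory for the spectral norm of a symmetric matrix with the implicit function theorem applied to $\hat h(\hat\alpha(\delta),\delta)=1$. The hypothesis $\hat\rho_{dis}<0$ implies that $\hat h_{dis}$ is convex with $\hat h_{dis}(0)=1$ and $\hat h_{dis}'(0)=\hat\rho_{dis}<0$, so $\hat\alpha_{dis}$ is a simple root at which $\hat h_{dis}'(\hat\alpha_{dis})>0$; this validates the implicit function argument and yields
\[
\hat\alpha'(0) \;=\; -\frac{\partial_\delta \hat h(s,\delta)\big|_{(\hat\alpha_{dis},0)}}{\hat h_{dis}'(\hat\alpha_{dis})}.
\]
Thus the main task is to compute the first-order Taylor expansion of $\hat h(s,\delta)=\mathbb{E}[g(\delta)^s]$ in $\delta$ around $0$.

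The first step is to establish a deterministic expansion for $g(\delta)$. Because $d=b=1$, we have $H=\mathrm{diag}(a_1^2,\ldots,a_N^2)$, so the symmetric matrix $A(\delta):=I_N-\eta H-\delta L$ has, at $\delta=0$, eigenvalues $\lambda_i=1-\eta a_i^2$ with eigenvectors $e_i$. By \textbf{(A1)} the $a_i$'s are i.i.d.\ continuous, so almost surely the index $i^*:=\arg\max_{1\leq i\leq N}|1-\eta a_i^2|$ is unique and the associated eigenvalue of largest modulus is simple. Standard first-order perturbation for simple eigenvalues of a symmetric matrix \cite[Lemma~2.3]{guglielmi2011fast} then gives $\lambda_{i^*}(\delta)=(1-\eta a_{i^*}^2)-\delta L_{i^*i^*}+O(\delta^2)$; since for small $\delta$ both the sign of this eigenvalue and the identity of the argmax over $i$ are preserved, one obtains
\[
g(\delta) \;=\; g(0) \;-\; \delta\,\mathrm{sgn}(1-\eta a_{i^*}^2)\,L_{i^*i^*} \;+\; O(\delta^2).
\]

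The second step is to pass to expectations. Raising to the $s$-th power, Taylor expanding, and applying dominated convergence (justified by the crude bound $g(\delta)\leq g(0)+\delta\|L\|$ combined with the finiteness of all moments of $g(0)$ under \textbf{(A1)}) yields Lemma~\ref{lem:first:order}:
\[
\hat h(s,\delta) \;=\; \hat h_{dis}(s) \;-\; s\,\delta\,\mathbb{E}\!\left[g(0)^{s-1}\,\mathrm{sgn}(1-\eta a_{i^*}^2)\,L_{i^*i^*}\right] \;+\; o(\delta).
\]
To simplify the expectation, I would decompose it as $\sum_i L_{ii}\,\mathbb{E}[g(0)^{s-1}\mathrm{sgn}(1-\eta a_i^2)\mathbf{1}_{\{i^*=i\}}]$ and invoke the exchangeability of $(a_1,\ldots,a_N)$ to pull out the factor $N^{-1}\sum_i L_{ii}$. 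A simple case analysis on whether $a_{i^*}^2<1/\eta$ (equivalently, whether the argmax of $|1-\eta a_j^2|$ coincides with the argmin or the argmax of $a_j^2$) shows that, almost surely, $\mathrm{sgn}(1-\eta a_{i^*}^2)=+1$ if and only if $\min_j a_j^2+\max_j a_j^2<2/\eta$; this produces the probability term $[1-2\mathbb{P}(\min+\max<2/\eta)]$ appearing in \eqref{alpha:expansion}.

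Finally, plugging the resulting expansion of $\hat h$ into the implicit function formula above and recognizing the denominator $\hat h_{dis}'(\hat\alpha_{dis})=\mathbb{E}[\log(\max_i|1-\eta a_i^2|)\,\max_i|1-\eta a_i^2|^{\hat\alpha_{dis}}]$ yields \eqref{alpha:expansion}; the explicit closed forms for the probability and this expectation (e.g.\ under Gaussian data) are then worked out in Theorem~\ref{thm:first:order:alpha-extended}. The principal technical obstacle is the rigorous justification of Lemma~\ref{lem:first:order}: if $\hat\alpha_{dis}<1$ the exponent $s-1$ is negative and $g(0)^{s-1}$ is singular near $g(0)=0$, so one must verify that $\mathbb{P}(g(0)\leq\varepsilon)\leq\mathbb{P}(|1-\eta a_1^2|\leq\varepsilon)^N$ decays fast enough as a power of $\varepsilon$ to keep $g(0)^{s-1}$ and the Taylor remainder uniformly integrable on a neighbourhood of $\delta=0$. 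This, together with the measure-zero exclusion of the event on which $i^*$ is non-unique or $1-\eta a_{i^*}^2=0$, is exactly where the continuous-density assumption \textbf{(A1)} enters quantitatively.
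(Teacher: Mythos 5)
Your overall strategy coincides with the paper's: implicit differentiation of $\hat h(\hat\alpha(\delta),\delta)=1$ at $\delta=0$, first-order perturbation of the top singular value of the diagonal matrix $I_N-\eta H$ under the perturbation $-\delta L$, exchangeability of the $a_i$'s to extract the factor $N^{-1}\sum_i L_{ii}$, and the observation that $\mathrm{sign}(1-\eta a_{i_*}^2)=+1$ iff $\min_i a_i^2+\max_i a_i^2<2/\eta$. The identification of the denominator as $\partial_s\hat h(\hat\alpha_{dis},0)=\mathbb{E}[\log(\max_i|1-\eta a_i^2|)\max_i|1-\eta a_i^2|^{\hat\alpha_{dis}}]$ is also exactly the paper's computation, and your remark about integrability of $g(0)^{s-1}$ near $g(0)=0$ is resolvable, since $\mathbb{P}(g(0)\le\varepsilon)=\mathbb{P}(|1-\eta a_1^2|\le\varepsilon)^N=O(\varepsilon^N)$.

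There is, however, a concrete gap between your displayed expansion and the conclusion you draw from it. You (correctly, by the chain rule) write
$\hat h(s,\delta)=\hat h_{dis}(s)-s\delta\,\mathbb{E}[g(0)^{s-1}\,\mathrm{sgn}(1-\eta a_{i^*}^2)\,L_{i^*i^*}]+o(\delta)$,
retaining the weight $g(0)^{s-1}$, and then assert that exchangeability plus the sign case-analysis reduces this expectation to $N^{-1}\sum_i L_{ii}\,(2\mathbb{P}(\min_i a_i^2+\max_i a_i^2<2/\eta)-1)$. That reduction does not follow: exchangeability gives $N^{-1}\sum_i L_{ii}\cdot\mathbb{E}[g(0)^{s-1}\,\mathrm{sgn}(1-\eta a_{i^*}^2)]$, and since $g(0)=\max_i|1-\eta a_i^2|$ is not independent of the event $\{1-\eta a_{i^*}^2>0\}$, the weighted expectation does not factor into $\mathbb{E}[g(0)^{s-1}]$ times $(2\mathbb{P}(\cdots)-1)$, nor does it equal $(2\mathbb{P}(\cdots)-1)$ on its own. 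The paper's own derivation obtains the unweighted probability term because its equation \eqref{without:expectation} passes from the expansion of $\|I_N-\eta H-\delta L\|$ to that of $\|I_N-\eta H-\delta L\|^s$ without the factor $\|I_N-\eta H\|^{s-1}$, and Lemma~\ref{lem:first:order} is stated accordingly. You cannot simultaneously keep the $g(0)^{s-1}$ weight in the expansion and arrive at the unweighted formula \eqref{alpha:expansion}; as written, your numerator would be $\frac{s\delta}{N}\sum_i L_{ii}\,\mathbb{E}\bigl[(\max_i|1-\eta a_i^2|)^{\hat\alpha_{dis}-1}\,\mathrm{sgn}(1-\eta a_{i^*}^2)\bigr]$ rather than the one in the statement. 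You need to either follow the paper's \eqref{without:expectation} (and say so), or explain how the weighted expectation collapses to the stated probability term; at present this step is asserted but not justified.
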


\begin{wrapfigure}{r}{0.5\textwidth}
\vspace{-0.35in} 
  \begin{center}
    \includegraphics[width=0.4\columnwidth]{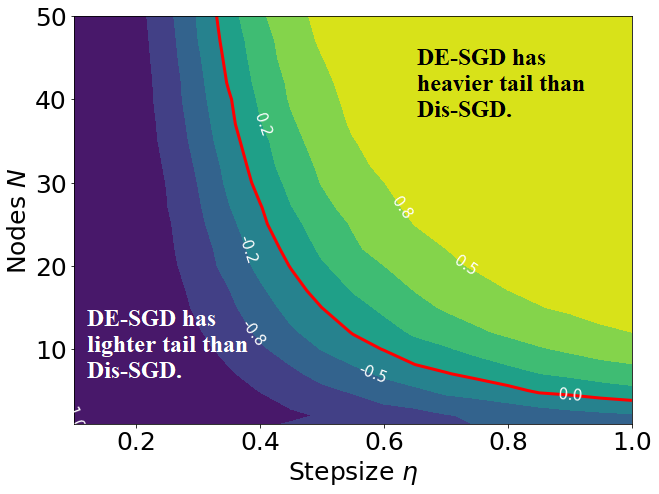}
    \label{fig:probability-contour}
  \end{center}
  \vspace{-0.2in}
  \caption{The contour plot of $e({\eta,N})=1-2\mathbb{P}\left(\min_{1\leq i\leq N}a_{i}^{2}+\max_{1\leq i\leq N}a_{i}^{2}<2/\eta\right)$.}
\end{wrapfigure}

We can see from Theorem~\ref{thm:first:order:alpha}
that the second term on the right hand side of \eqref{alpha:expansion}
is positive if and only if $e({\eta,N}):= 1 - 2\mathbb{P}\left(\min_{1\leq i\leq N}a_{i}^{2}+\max_{1\leq i\leq N}a_{i}^{2}<\frac{2}{\eta}\right)>0$, 
and this term is monotonically increasing in $N$ and $\eta$. 
Therefore, we conclude from Theorem~\ref{thm:first:order:alpha}
that when $\delta$ is small, 
$\hat{\alpha}<\hat{\alpha}_{dis}$ given the stepsize $\eta$ or network size $N$
is large.
On the other hand, when $\delta$ is small, 
$\hat{\alpha}>\hat{\alpha}_{dis}$ given the stepsize $\eta$ or network size $N$
is small. This is illustrated in Fig. \ref{fig:probability-contour} for $d=b=1$ and $a_i \sim \mathcal{N}(0,1)$ which shows the contour plot of $e(\eta,N)$, where the curve $e(\eta,N)=0$ is drawn in red color. In the plot, every$(\eta,N)$ pair satisfies $\hat\rho_{dis}<0$ and large stepsize or large enough network to the right of red curve results in heavier tail ($\hat\alpha<\hat\alpha_{dis}$) in DE-SGD. 
We recall that $\hat{\alpha}$ and $\hat{\alpha}_{dis}$ are the unique positive values
such that $\hat{h}(\hat{\alpha})=1$ and $\hat{h}_{dis}(\hat{\alpha}_{dis})=1$. 
They correspond to the tail-indexes in the decentralized case with network effect
and without network effect. We also have the following result that compares $\hat{\alpha}$ to  $\hat{\alpha}_{dis}$.

\begin{corollary}\label{cor:first:order:comparison}
(i) Under the assumptions in Theorem~\ref{thm:first:order:alpha}, 
further assume that the stepsize $\eta>\frac{2}{F_{a}^{-1}(2^{-\frac{1}{N}})}$.
For sufficiently small $\delta$, $\hat{\alpha}<\hat{\alpha}_{dis}$, i.e.
the tail gets heavier with the presence of network effect.
(ii) Further assume that $a_{i}$ are i.i.d. $\mathcal{N}(0,\sigma^{2})$ distributed.
Then, the first-order expansion \eqref{alpha:expansion} can be further simplified (see Corollary~\ref{cor:first:order:comparison:extended} in the Appendix).
Moreover, when the variance $\sigma^{2}>\frac{1}{\eta}(\text{erf}^{-1}(2^{-\frac{1}{N}}))^{-2}$,
for sufficiently small $\delta$, $\hat{\alpha}<\hat{\alpha}_{dis}$, i.e.
the tail gets heavier with the presence of network effect.
\end{corollary}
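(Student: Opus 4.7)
The plan is to read off the sign of the coefficient of $\delta$ in the first-order expansion \eqref{alpha:expansion}; if this coefficient is negative then $\hat\alpha<\hat\alpha_{dis}$ for all sufficiently small $\delta>0$. Up to a positive scalar, the coefficient takes the form
\[
-\frac{N^{-1}\sum_{i=1}^{N}L_{ii}\,e(\eta,N)}{\mathbb{E}\!\left[\log(\max_i|1-\eta a_i^2|)\,\max_i|1-\eta a_i^2|^{\hat\alpha_{dis}}\right]},
\]
where $e(\eta,N):=1-2\mathbb{P}(\min_i a_i^2+\max_i a_i^2<2/\eta)$. The factor $\sum_i L_{ii}=2|E|$ is strictly positive for any graph with at least one edge. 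The denominator equals $\hat h_{dis}'(\hat\alpha_{dis})$, and I would fix its sign by noting that $\hat h_{dis}(s)=\mathbb{E}[\max_i|1-\eta a_i^2|^s]$ is strictly convex in $s$ (since $\max_i|1-\eta a_i^2|$ is not almost surely constant under \textbf{(A1)}), satisfies $\hat h_{dis}(0)=\hat h_{dis}(\hat\alpha_{dis})=1$, and has $\hat h_{dis}'(0)=\hat\rho_{dis}<0$ by assumption; strict convexity combined with these two endpoint values forces $\hat h_{dis}'(\hat\alpha_{dis})>0$. Thus the sign of the coefficient is governed entirely by $e(\eta,N)$, and it suffices to prove $e(\eta,N)>0$ under the hypothesis of each part.

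For part (i), I interpret $F_a$ as the CDF of $a_i^2$ (the only reading consistent with the erf-based hypothesis of (ii)) and argue by stochastic dominance. Since $\min_i a_i^2\ge 0$, independence of $a_1,\dots,a_N$ gives
\[
\mathbb{P}\bigl(\min_i a_i^2+\max_i a_i^2<2/\eta\bigr)\le \mathbb{P}\bigl(\max_i a_i^2<2/\eta\bigr)=F_a(2/\eta)^N.
\]
The hypothesis $\eta>2/F_a^{-1}(2^{-1/N})$, combined with strict monotonicity of $F_a$ on the support of $a_i^2$ (which holds under Assumption \textbf{(A1)} since $a_i$ has a positive density on $\mathbb{R}$), yields $F_a(2/\eta)<2^{-1/N}$, hence $F_a(2/\eta)^N<1/2$ and $e(\eta,N)>0$, completing (i).

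For part (ii), when $a_i\sim\mathcal{N}(0,\sigma^2)$ the CDF of $a_i^2$ is $F_a(x)=\text{erf}(\sqrt{x/(2\sigma^2)})$ for $x\ge 0$, so $F_a^{-1}(p)=2\sigma^2(\text{erf}^{-1}(p))^2$. The hypothesis $\sigma^2>1/(\eta(\text{erf}^{-1}(2^{-1/N}))^2)$ is then algebraically equivalent to $\eta>2/F_a^{-1}(2^{-1/N})$, and the tail-comparison claim follows at once from part (i). The separate simplification of \eqref{alpha:expansion} deferred to the Appendix is obtained by rewriting the expectations in the numerator and denominator through the density of $\max_i a_i^2$, namely $N F_a(x)^{N-1}F_a'(x)$, together with the explicit erf-based form of $F_a$; this produces one-dimensional integrals in $\text{erf}$ and its derivative and is a routine computation. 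The main obstacle in the whole argument is cleanly identifying the positivity of $\hat h_{dis}'(\hat\alpha_{dis})$ via strict convexity of $\hat h_{dis}$; after that step, everything reduces to the stochastic-dominance bound above and an inverse-CDF evaluation in the Gaussian case.
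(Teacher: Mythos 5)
Your proposal is correct and follows essentially the same route as the paper: positivity of the denominator via convexity of $\hat h_{dis}$ together with $\hat h_{dis}(0)=\hat h_{dis}(\hat\alpha_{dis})=1$ and $\hat h_{dis}'(0)<0$, and then the sufficient condition $2F_a(2/\eta)^N<1$ for $e(\eta,N)>0$, specialized to the Gaussian case via $F_a(x)=\mathrm{erf}(\sqrt{x/(2\sigma^2)})$. The only cosmetic difference is that you obtain the bound $\mathbb{P}(\min_i a_i^2+\max_i a_i^2<2/\eta)\le F_a(2/\eta)^N$ directly from containment of events, whereas the paper reads it off its explicit integral formula by dropping a nonnegative term; the two are equivalent.
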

\begin{wrapfigure}{r}{0.5\textwidth}
  \begin{center}
    \includegraphics[width=0.4\columnwidth]{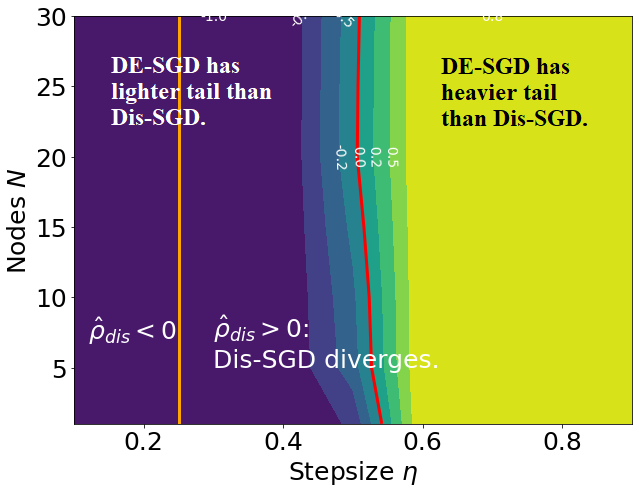}
  \end{center}
  \caption{\label{fig:probability-contour-100}The red curve satisfies $\hat\alpha=\hat\alpha_{dis}$ (up to a $o(\delta)$ term), $d$=$b$=100, $a_i \sim \mathcal{N}(0, I_d)$ with $\hat\rho_{dis}=0$ line in orange.}
\end{wrapfigure}
We can see from Cor.~\ref{cor:first:order:comparison} that when
the network size $N$ or the stepsize $\eta$ is large, then, 
for sufficiently small $\delta$, 
the tail gets heavier with the presence of network effect provided that $\hat{\rho}_{dis}<0$ (this condition ensures existence of stationary distribution and is tight in the sense that $\hat{\rho}_{dis}>0$ implies divergence \citep{buraczewski2016stochastic}.
Moreover, when $a_{i}$ are i.i.d. $\mathcal{N}(0,\sigma^{2})$ distributed
with $\sigma^{2}$ being large, then for sufficiently small $\delta$,
the tail gets heavier with the presence of network effect.
The general $d$ case is analyzed in the Appendix, and is illustrated in Fig. \ref{fig:probability-contour-100} for $d=100$. Similar to Fig. \ref{fig:probability-contour}, we can identify a red curve, which divides the $(\eta, N)$ space into two regions. In Region I (left-hand side of the red curve), DE-SGD has lighter tails compared to Dis-SGD, and in Region II (right-hand side of the red curve), DE-SGD has heavier tails compared to Dis-SGD. One difference with respect to the $d=1$ case, is that here the stepsize and $N$ cannot be too large otherwise the iterates will diverge. This is illustrated by the orange line which shows the points with $\hat\rho_{dis}=0$. In Region II, we have unstability $\hat\rho_{dis}>0$ (so that iterates diverge). Correspondingly, only part of the Region I is stable (with convergent iterates) and in this case DE-SGD has lighter tails.\looseness=-1

\textbf{4.4 Tail-index comparison between Decentralized SGD and Centralized SGD. }

In this section, we are interested in comparing the tail-index
for the DE-SGD with that of the C-SGD.
To make the comparison simpler, we assume $d=1$, $b_{i}\equiv 1$ and $\sigma_{i}\equiv\sigma$.
The general case will be provided in the Appendix. 
In Proposition~\ref{prop:disconnected:centralized}, we showed that
the tail-index for Dis-SGD is smaller than that
of the C-SGD. 
In Theorem~\ref{thm:first:order:alpha}
we showed that when $\delta$ is small, 
the tail-index for the DE-SGD is smaller 
than that of the Dis-SGD given the stepsize $\eta$ or network size $N$
is large.
Therefore, we have the following corollary 
that compares DE-SGD with C-SGD. 

\begin{corollary}\label{cor:DSGD:CSGD}
In the setting of Theorem~\ref{thm:first:order:alpha},
the tail-index $\hat\alpha$ of the decentralized SGD is smaller
than that of the centralized SGD provided that the stepsize $\eta$ or network size $N$
is large and $\delta$ is small.
\end{corollary}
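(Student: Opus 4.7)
The plan is to obtain the corollary by chaining together the two comparison results already established in Section~4. Specifically, Proposition~\ref{prop:disconnected:centralized} gives the strict inequality $\hat\alpha_{dis} < \hat\alpha_{cen}$, where $\hat\alpha_{cen}$ denotes the tail-index proxy for C-SGD run with batch-size $\sum_i b_i = N$ (in the $d=b_i=1$ setting under consideration). Theorem~\ref{thm:first:order:alpha} provides the first-order expansion
\begin{equation*}
\hat\alpha = \hat\alpha_{dis} - C(\eta,N)\,e(\eta,N)\,\delta + o(\delta),
\end{equation*}
where $C(\eta,N) > 0$ collects the positive factors ($\tfrac{s}{N}\sum_i L_{ii}$ divided by the strictly negative denominator, with an extra sign flip), and $e(\eta,N) = 1 - 2\mathbb{P}(\min_i a_i^2 + \max_i a_i^2 < 2/\eta)$. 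In the regime where $\eta$ is large or $N$ is large, $e(\eta,N) > 0$, so the first-order correction is strictly negative, hence for all $\delta$ small enough we obtain $\hat\alpha < \hat\alpha_{dis}$.

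Combining these two strict inequalities immediately yields $\hat\alpha < \hat\alpha_{dis} < \hat\alpha_{cen}$, which is the claim. The only piece of bookkeeping is to verify that the tail-index proxies being compared are defined consistently: in all three cases ($\hat\alpha$, $\hat\alpha_{dis}$, $\hat\alpha_{cen}$) they are the unique positive root of an equation of the form $\mathbb{E}\|M\|^s = 1$, where $M$ is the appropriate one-step contraction matrix for the recursion ($\mathcal{W} - \eta H$ for DE-SGD, $I_N - \eta H$ for Dis-SGD, and $1 - \eta\bar H$ for C-SGD with aggregated batch). Since all three proxies arise from the same construction in \citet{ht_sgd_quad} applied to the relevant recursion, they are directly comparable, and Proposition~\ref{prop:disconnected:centralized} already compares $\hat\alpha_{dis}$ with $\hat\alpha_{cen}$ on these terms.

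The argument also requires the regime hypotheses to be mutually compatible: we need $\hat\rho_{dis} < 0$ (so the Dis-SGD, and hence the perturbed DE-SGD, stationary distribution exists and $\hat\alpha_{dis}$ is well-defined), and we need $\delta$ small enough that the $o(\delta)$ remainder in Theorem~\ref{thm:first:order:alpha} does not absorb the linear correction. Both conditions are inherited directly from the hypotheses of Theorem~\ref{thm:first:order:alpha}. There is no further calculation to perform — the corollary is a transitivity statement — so I do not anticipate any genuine obstacle; the only risk is a notational mismatch between the three proxies, which the paragraph above addresses.
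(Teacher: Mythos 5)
Your proposal is correct and matches the paper's argument exactly: the paper also obtains the corollary by combining Proposition~\ref{prop:disconnected:centralized} (Dis-SGD has lighter tails than C-SGD) with Theorem~\ref{thm:first:order:alpha} (DE-SGD has heavier tails than Dis-SGD when $\eta$ or $N$ is large and $\delta$ is small), i.e., the same transitivity chain $\hat\alpha < \hat\alpha_{dis} < \hat\alpha_{cen}$. The extra bookkeeping you include about the consistency of the three tail-index proxies is sound and only makes the argument more explicit than the paper's one-line proof.
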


On the other hand, we showed in Theorem~\ref{thm:first:order:alpha} when $\delta$ is small, 
the tail-index for the DE-SGD is larger
than that of the Dis-SGD given the stepsize $\eta$ or network size $N$ is small.
Therefore, our theory (Proposition~\ref{prop:disconnected:centralized} and Theorem~\ref{thm:first:order:alpha})
does not provide a guidance to the comparison of the tail-indexes 
between DE-SGD and C-SGD in this regime. However, in the numerical experiments section, we have always observed that DE-SGD iterates had heavier tails than C-SGD.

\section{Numerical Experiments}
We present our numerical results on both synthetic and real data in this section. Our main goal is to illustrate that DE-SGD iterates admit heavy-tailed distributions in the limit with proper choice of stepsizes $\eta$ and batch-sizes $b$. We will also validate that the behavior of the tail-index in DE-SGD is heavier than that of C-SGD under the assumptions we studied in Section~\ref{sec:theoretical} illustrating our theory. During the experiments, we found that the tail-index from different nodes are very close to each other. For better illustration, we will not report the tail-index of each local node; instead, we regard the median of $N$ tail indices from each node as the tail-index of the DE-SGD algorithm. 


\begin{figure}[t]
\centering
    \hfill
    \subfigure[Case I: $d$=$b$=1, $\sigma$=1, $\sigma_y$=0.2, $N$=30 over complete network.]{
    \includegraphics[width=0.315\columnwidth]{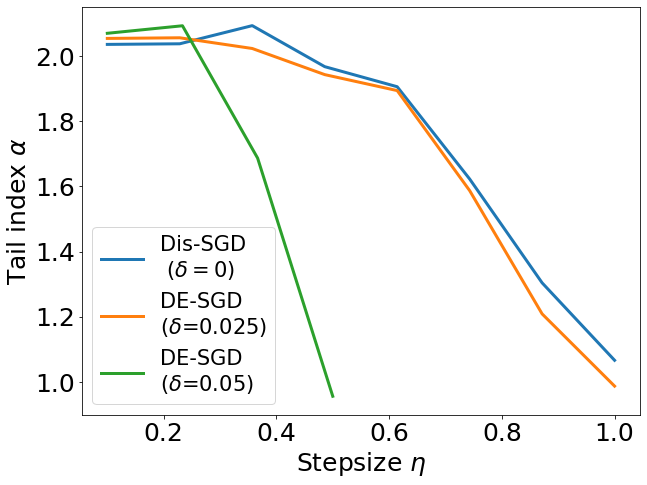}
    \label{fig:tau_eta_eta}
    }
    \hfill
    \subfigure[Case II: $d$=$b$=100, $\sigma$=1, $\sigma_y$=0.2, $N$=10 over complete network.]{
    \includegraphics[width=0.315\columnwidth]{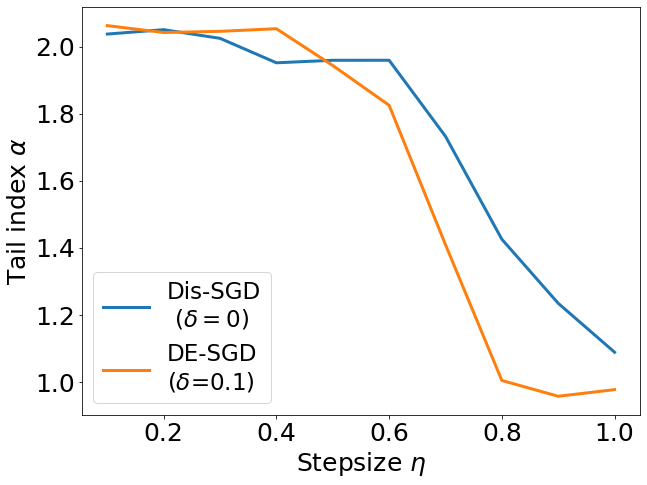}
    \label{fig:eta_tau_eta_d=1}
    }
    \hfill
    \subfigure[Case III: $d$=100, $b$=5, $\sigma$=1, $\sigma_y$=3, $N$=8 over star network.]{
    \includegraphics[width=0.315\columnwidth]{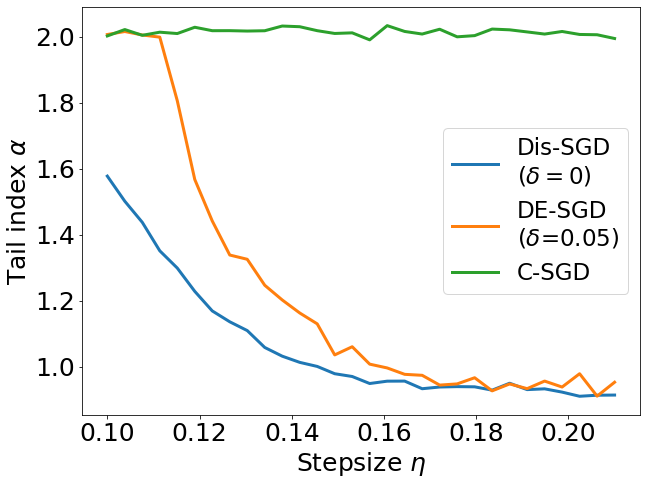}
    \label{fig:eta_eta_tau}
    }
    \caption{Illustration of three main cases on synthetic data}
\end{figure}
\paragraph{5.1 Synthetic experiments.}
In this set of experiments, we consider a simple synthetic setup, where the studied problem is a simple linear regression and each data point is assumed to follow a Gaussian distribution. More specifically, we consider the following model:
\begin{equation}
     a_i \sim \mathcal{N}\left(0,\sigma^2I\right), \quad\epsilon \sim \mathcal{N} \left(0,\sigma_y^2\right), \quad y_i = x^T a_i + \epsilon,
\label{eqn:data_generation}
\end{equation}
where subscript $i$ indicates the node index,  $x, a_i \in \mathbb{R}^d$, $y_i \in \mathbb{R}$ for $i=1,2,\dots,n$ and $\sigma, \sigma_y > 0$. 


\textbf{Tail index estimation.} By the generalized CLT result we obtain in the Appendix, the ergodic average of the iterates converges to an $\alpha$-stable distribution: $\alpha \in (0,2)$ in the case of a stationary distribution with an infinite variance and $\alpha=2$ otherwise. In our experiments, we take the average of the DE-SGD iterates and this enables us to advanced estimators \citep{mohammadi2015estimating} specific to $\alpha$-stable distributions, details are in the Appendix.

    
\paragraph{{Illustrating our Thm. \ref{thm:first:order:alpha} and Cor. \ref{cor:first:order:comparison}}.} Dis-SGD can be analyzed with the existing theory developed for C-SGD. In particular, by the results of \citet{ht_sgd_quad}, there exists an explicit constant $\eta_{\max}$ such that $\hat\rho_{dis}<0$ is equivalent to $\eta \in (0, \eta_{max})$ and for $\eta > \eta_{max}$ the distribution of the iterates diverge. Furthermore, Dis-SGD has a stationary distribution with heavy tails with infinite variance (with $\alpha<2$) only if $\eta \in (\eta_{crit}, \eta_{max})$; otherwise for $\eta \in (0, \eta_{crit}]$ the tail-index is greater than or equal to 2.
Theorem~\ref{thm:first:order:alpha} and Corollary~\ref{cor:first:order:comparison} show that provided that $\hat\rho_{dis}<0$ (i.e. provided that $0 < \eta < \eta_{max}$ ) when stepsize $\eta > \tau := \frac{2}{F_{a}^{-1}(2^{-\frac{1}{N}})}$, the tail gets heavier with the presence of network effect $\delta$, and when $\eta < \tau$, the tail gets lighter with the presence of network effect. If $\eta>\eta_{max}$, then we have divergence. Therefore, there are three main cases depending on the values of $\tau, {\eta}_{crit}$ and ${\eta}_{max}$:

\textbf{\underline{\textit{Case I}}: ($0<\tau<{\eta}_{crit}<{\eta}_{max}$)} In this case, for $\eta \in (0,\tau)$, stationary distribution of Dis-SGD and DE-SGD has a finite variance, so the tail index of the averaged iterates is $\alpha=2$. As stepsize increases, the stationary distribution of both Dis-SGD and DE-SGD will have an infinite variance (with $\alpha<2$), eventually DE-SGD has heavier tails compared to Dis-SGD, due to Thm. \ref{thm:first:order:alpha} and Cor. \ref{cor:first:order:comparison}. 
This is illustrated in Figure~\ref{fig:tau_eta_eta} on a complete network with $N=30$, $d=b=\sigma=1$ and $\sigma_y=0.2$.\looseness=-1

\textbf{\underline{\textit{Case II}}: ($0<{\eta}_{crit}<\tau<{\eta}_{max}$)}
For $\eta \in (0,\eta_{max})$, both Dis-SGD and DE-SGD are expected to converge to a heavy-tailed distribution. While in the small stepsize regime, Dis-SGD will have heavier tail, in the big stepsize regime, DE-SGD will have heavier tail. This is consistent with Thm. \ref{thm:first:order:alpha} and Cor. \ref{cor:first:order:comparison}. Figure~\ref{fig:eta_tau_eta_d=1} illustrates this on a complete network $N=10$, $d=b=100$, $\sigma=1$, $\sigma_y=0.2$.\looseness=-1

\textbf{\underline{\textit{Case III}}: ($0<{\eta}_{crit}<{\eta}_{max}<\tau$)}
In this case, both Dis-SGD and DE-SGD are expected to converge to a heavy-tailed distribution, while Dis-SGD will have heavier tail than DE-SGD. We can observe this phenomenon in Figure~\ref{fig:eta_eta_tau}. Note that although the limit distribution of DE-SGD will have lighter tails compared to Dis-SGD, it still lead to heavier tail compared to C-SGD.


\textbf{{Effect of parameters}.} In the Appendix, we investigate the tail-index $\alpha$ of the stationary distribution of DE-SGD
over different network topologies with varied stepsize $\eta$ and varied batch size $b_i = b$. We found that the tails were monotonic with respect to $\eta$ and $b$ as predicted by Theorem~\ref{thm:mono}. We also found that DE-SGD had heavier tails 
compared to C-SGD in every case. This is also consistent with our Corollary~\ref{cor:DSGD:CSGD}.

\paragraph{5.2 Deep learning experiments.}
This subsection examines the heavy-tail index of the iterates that C-SGD and DE-SGD converge to for neural networks. Tail index is estimated in a similar way to the synthetic section (please see the Appendix for the details).
\paragraph{Fully-connected network on MNIST.}
As a first dataset/architecture, we validate our theory on a three-layer  fully-connected network with MNIST dataset \citep{deng2012mnist}. We calculate the tail-index of C-SGD as well as DE-SGD with different topologies over varied learning rates. In this experiment, we assume there are $N=8$ nodes in the network and batch size is set to $b=5$. The model is trained for 10K iterations and step size $\eta$ ranges from $10^{-4}$ to $7.5\times 10^{-2}$. 
 Figure~\ref{fig:fcn-mnist} shows the tail-index of the parameters in the first layer (the results of other layers are similar). 
From this figure, it is observed that DE-SGD will lead to a heavier tail than both Dis-SGD and C-SGD, the behavior is  consistent with Cor. \ref{cor:DSGD:CSGD}. 
\begin{wrapfigure}{r}{0.7\textwidth}
\subfigure[FCN on MNIST.]{
    \includegraphics[width=0.315\columnwidth]{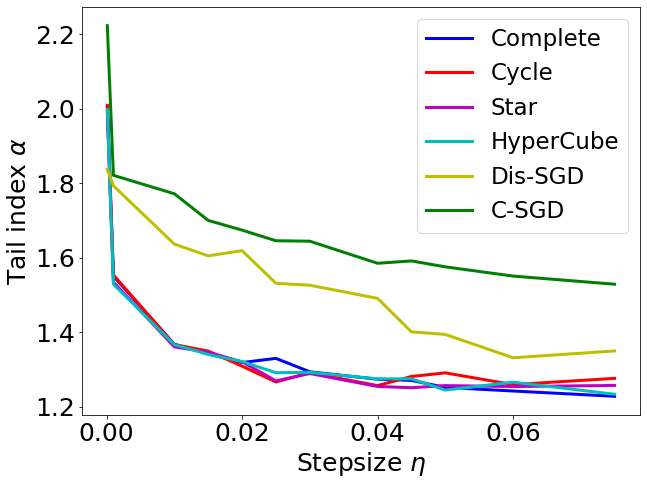}
    \label{fig:fcn-mnist}
    }
    \subfigure[ResNet-20 on CIFAR10.]{
    \includegraphics[width=0.315\columnwidth]{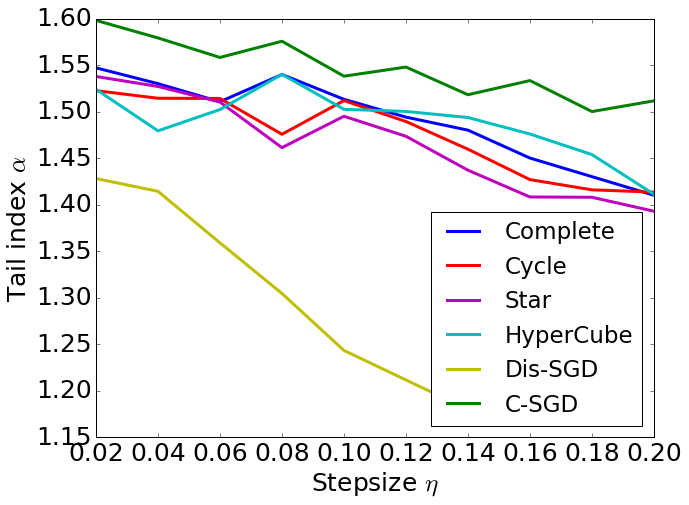}
    \label{fig:resnet_24n16b}
    }
    \caption{Tail-index $\alpha$ for different setting on MNIST and CIFAR10.}
    
\end{wrapfigure}
\paragraph{ResNet-20 on CIFAR10.}
As a second (larger-scale) experiment, we evaluate our theory on ResNet-20 model \citep{he2016deep} with CIFAR10 dataset \citep{krizhevsky2009learning}. 
We compare the heavy-tail index between C-SGD and DE-SGD over different network topologies  
with $N=24$. Further details of the experimental setup can be found in the Appendix.  
Figure~\ref{fig:resnet_24n16b} illustrates the tail-index comparison between C-SGD and DE-SGD. It is observed that DE-SGD for all networks is heavier tailed than C-SGD but lighter tailed than Dis-SGD.  In the Appendix, we also obtained similar results for $N=8$. These results are consistent with the behavior of Region I reported in Fig. \ref{fig:probability-contour-100}. We cannot tune stepsize $\eta$ larger than $0.2$ since it will significantly degrade the test accuracy of ResNet-20 and result in instability issues for the iterates. For this reason, we do not observe the behavior of Region II which requires large stepsizes.

\looseness=-1 
\section{Conclusion}
Existing works about the heavy-tails for centralized SGD (C-SGD) do not apply to the decentralized setting. We provided theoretical and numerical results showing that heavier tails arise in DE-SGD setting compared to C-SGD. We also compare DE-SGD to disconnected SGD where nodes distribute the data but do not communicate. Our theory and experiments uncover an interesting interplay between the tails and the network structure: we identify two regimes of parameters (stepsize and network size), where DE-SGD 
can have lighter or heavier tails than disconnected SGD depending on the regime. Finally, we provided numerical experiments on synthetic data and neural networks which illustrate the results. \looseness=-1

\section*{Acknowledgements}
We thank Dr. Rong Jin for the stimulating discussions and for all the helpful feedback about our paper. Mert G\"{u}rb\"{u}zbalaban and Yuanhan Hu's research are supported in part by the grants Office of Naval Research Award Number
N00014-21-1-2244, National Science Foundation (NSF)
CCF-1814888, NSF DMS-2053485.  Umut \c{S}im\c{s}ekli's research is supported by the French government under management of Agence Nationale de la Recherche as part of the ``Investissements d’avenir'' program, reference ANR-19-P3IA-0001 (PRAIRIE 3IA Institute). 
Lingjiong Zhu is grateful to the partial support from a Simons Foundation Collaboration Grant
and the grant NSF DMS-2053454 from the National Science Foundation.

\bibliographystyle{plainnat}
\bibliography{decentralized,heavy}

\newpage
\appendix


\begin{center}

\Large \bf Heavier-Tail Phenomenon in Decentralized SGD \vspace{3pt}\\ {\normalsize APPENDIX}

\end{center}

\section{Illustration of the network architectures}\label{sec-network-drawings}
Figure \ref{fig:network} illustrates the five network structures we have chosen for our experiments. 
\begin{figure}[htb]
\centering
    \subfigure[Complete]{
    \includegraphics[width=0.25\columnwidth]{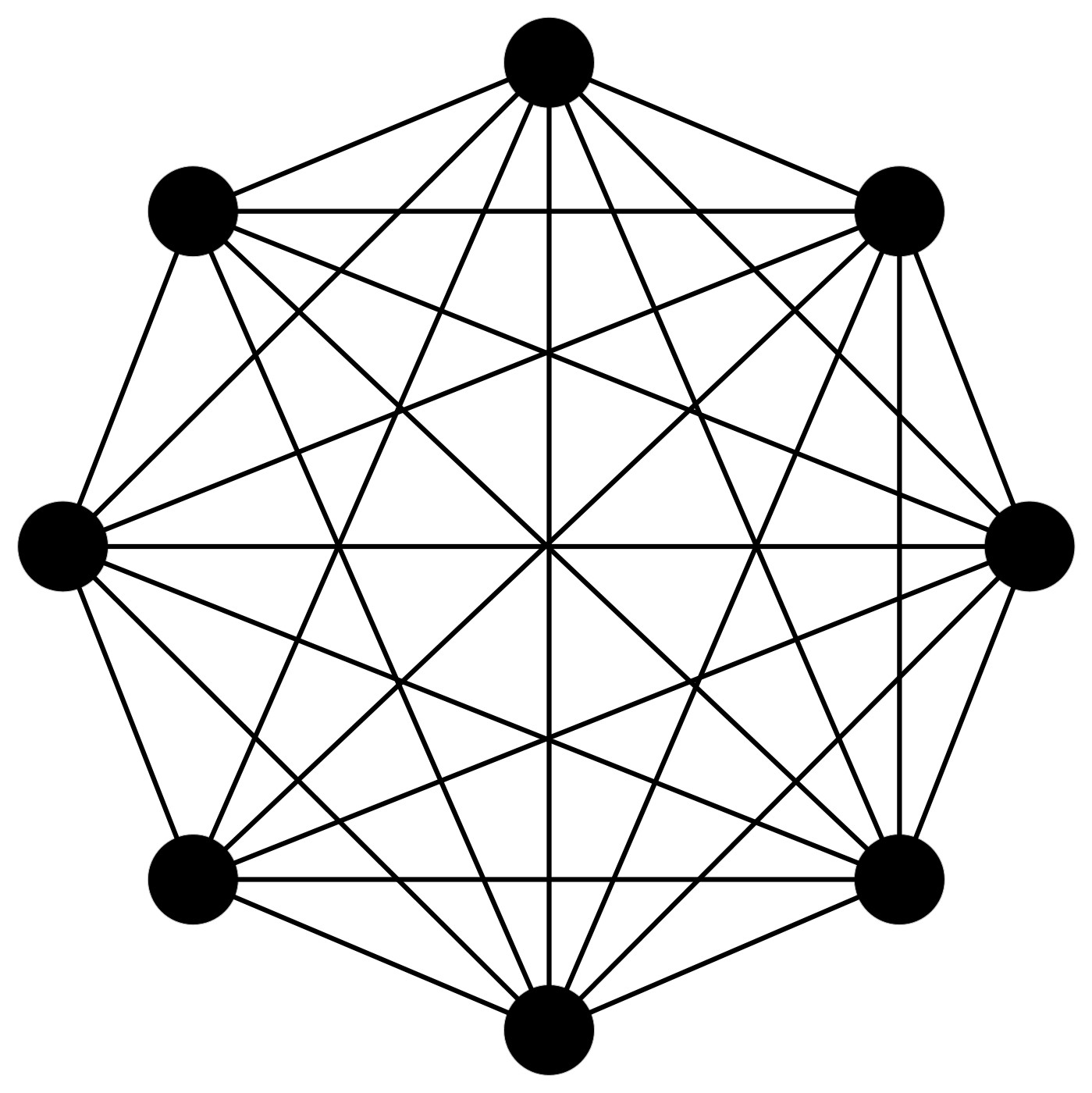}
    }
    \hfill
    \subfigure[Star]{
    \includegraphics[width=0.25\columnwidth]{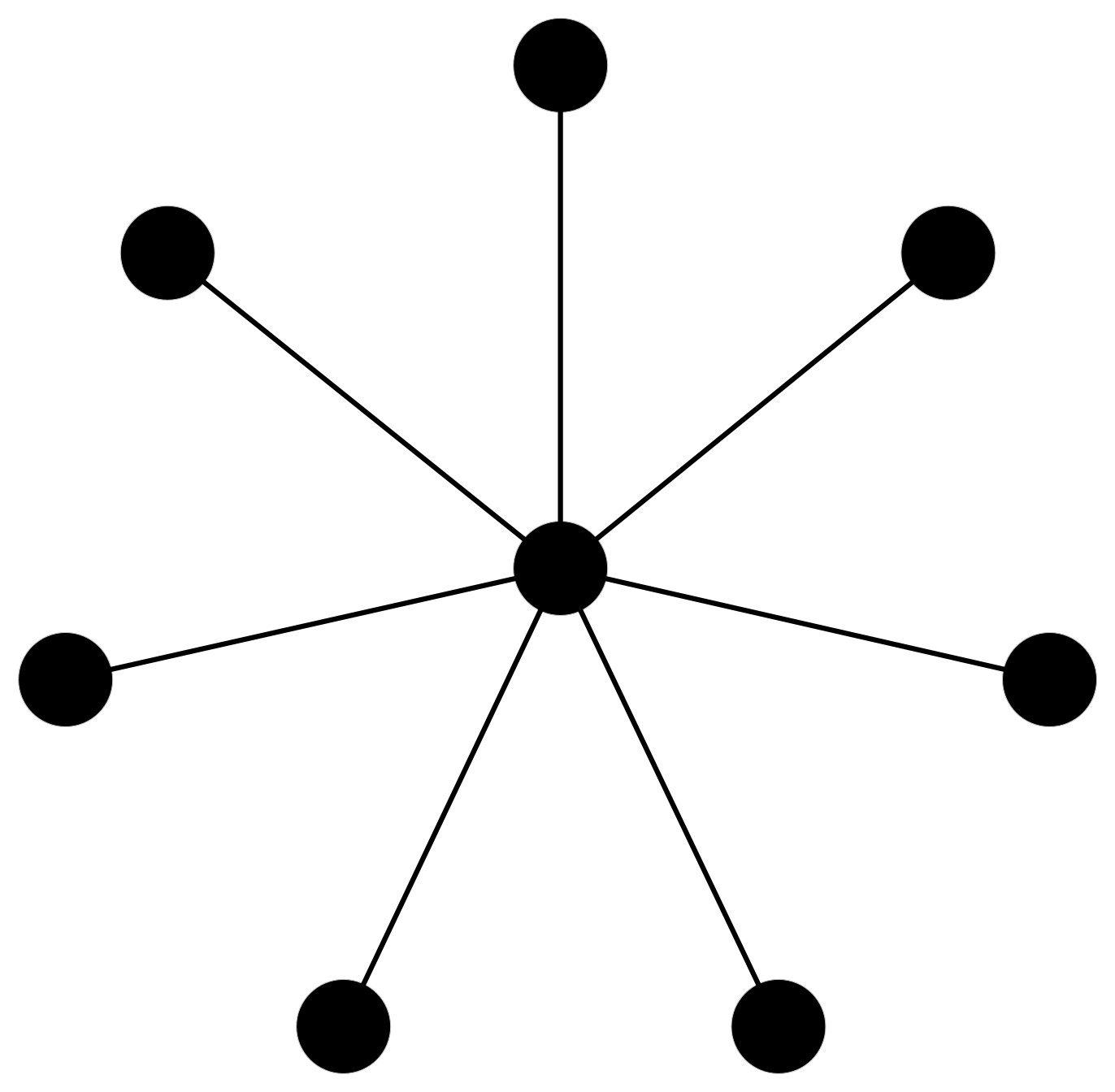}
    }
    \hfill
    \subfigure[Cycle]{
    \includegraphics[width=0.25\columnwidth]{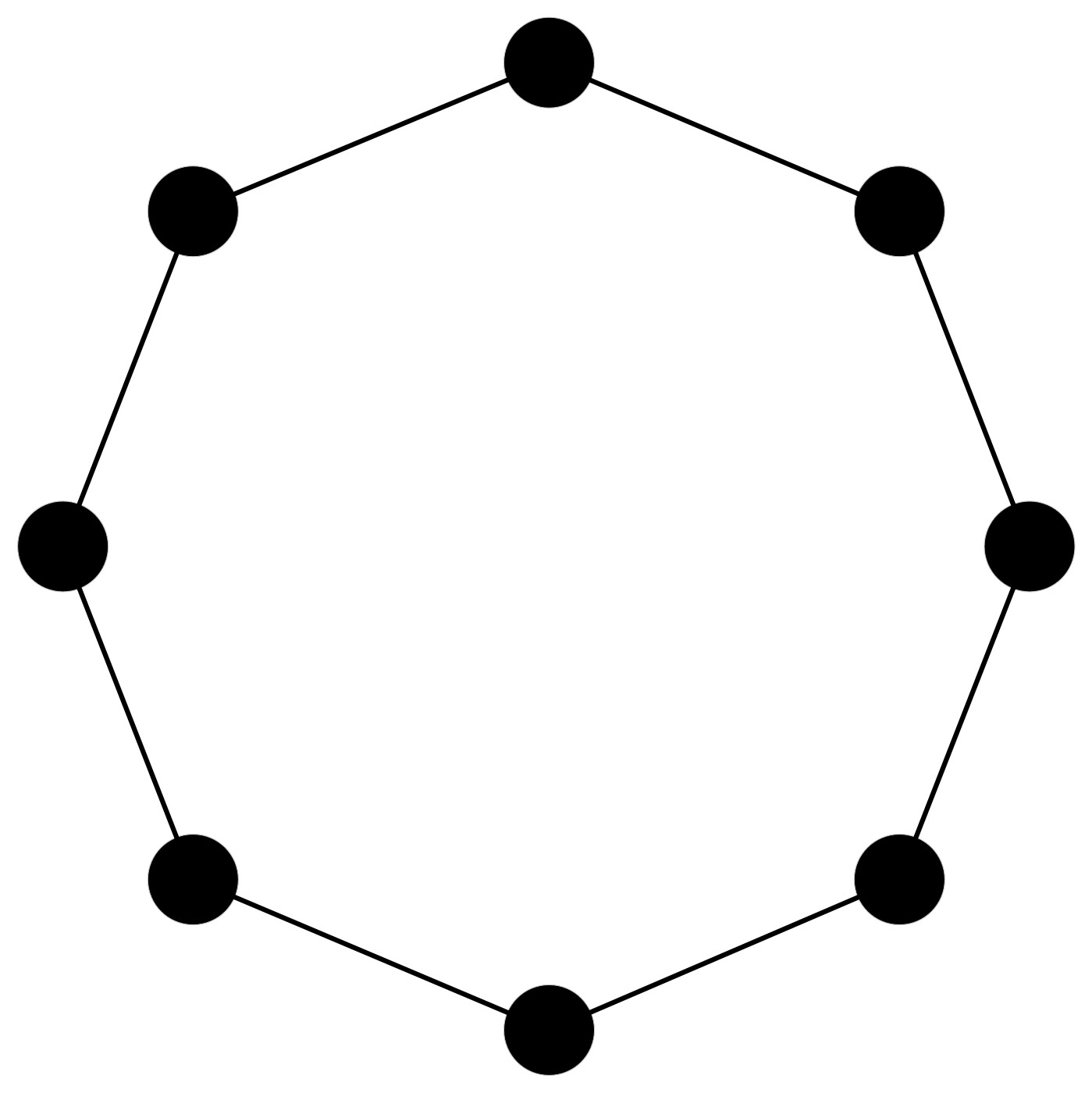}
    }
    
    \subfigure[Hypercube]{
    \includegraphics[width=0.25\columnwidth]{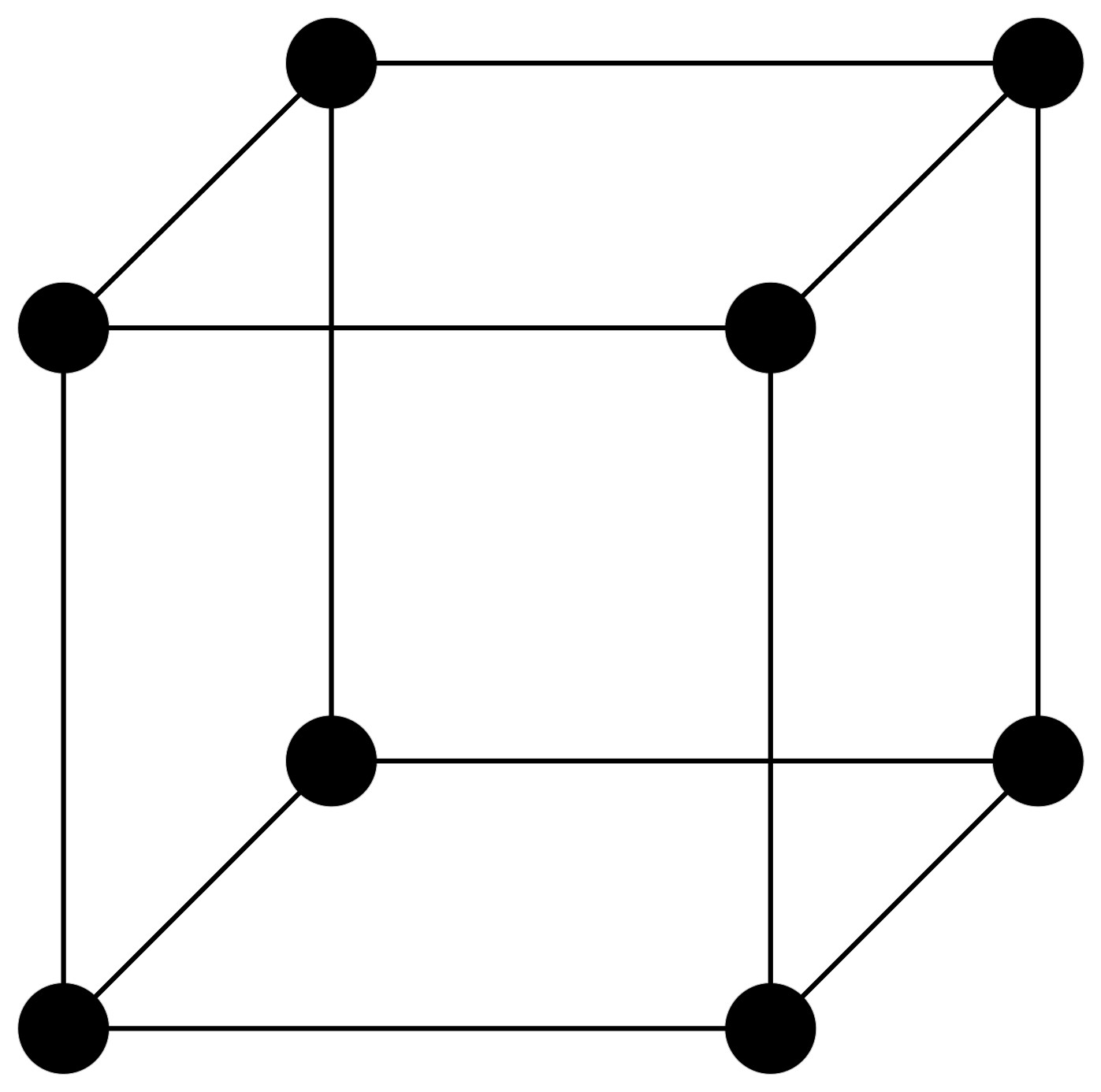}
    }
    \quad \quad \quad
    \subfigure[Bipartite]{
    \includegraphics[width=0.175\columnwidth]{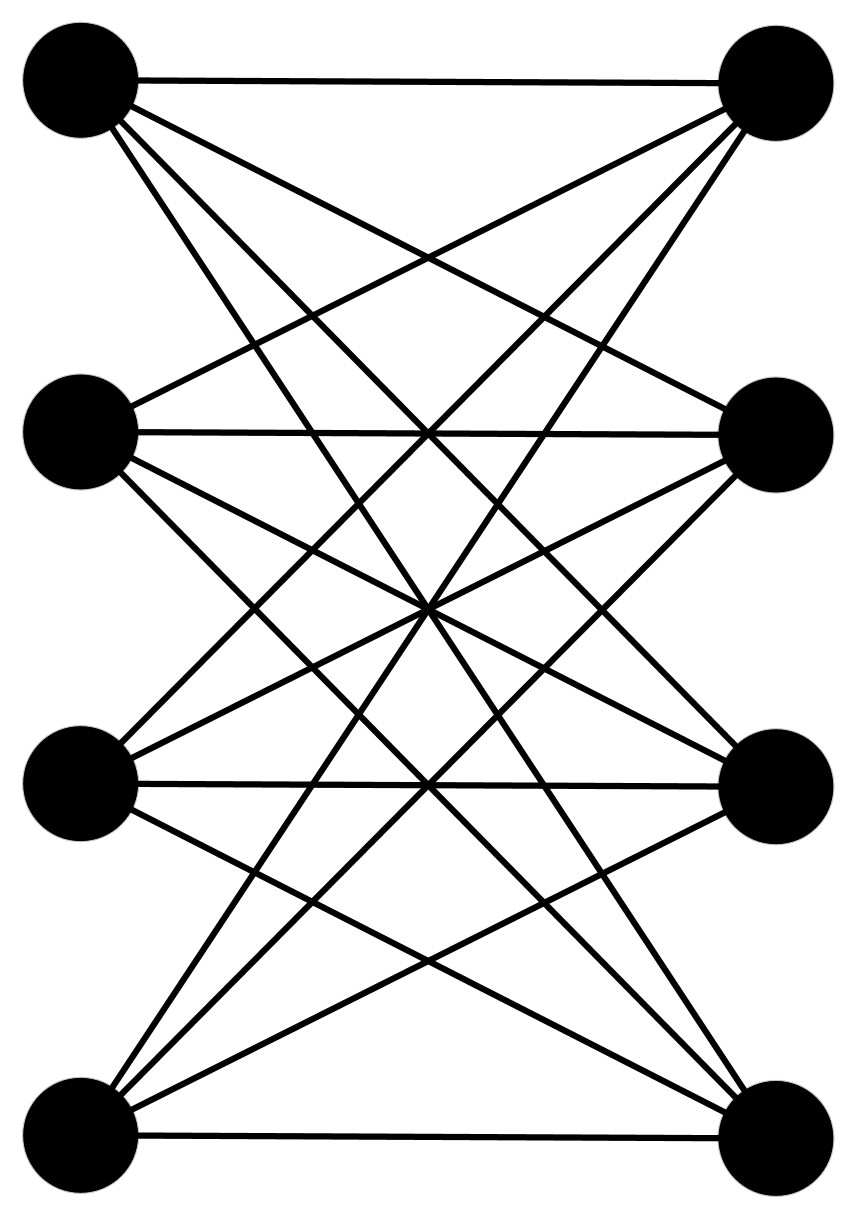}
    }

\caption{Illustration of the network architectures.}
\label{fig:network}
\end{figure}

\section{Effect of Network Structure}\label{sec:examples}
\textbf{Communication matrix.}
We use the convention that a node is a neighbor of itself, i.e. $i \in \Omega_i$. We assume that $W$ is doubly stochastic, i.e. $W_{ij}=W_{ji}>0$ if $j\in\Omega_{i}$,
and $W_{ij}=W_{ji}=0$ if $j\not\in \Omega_{i}$ and $i\neq j$,
and finally $W_{ii}=1-\sum_{j\neq i}W_{ij}>0$
for every $1\leq i\leq N$.
It is known that the eigenvalues of a doubly stochastic matrix $W$ can be ordered
in a descending manner satisfying
$1=\lambda_{1}>\lambda_{2}\geq\cdots\geq\lambda_{N}>-1$,
where the largest eigenvalue is $1$ with an all-one eigenvector.
The eigenvalues $(\lambda_{j})_{j=1}^{N}$ of $W$ can be used to reveal the properties of the network associated with the weight matrix $W$ (see e.g. \citep{chung1997spectral}). 
A common approach is to take $W = I - \delta L$ where $I$ is the identity, $L$ is the graph Laplacian and $\delta>0$ small enough \citep{olfati2007consensus}.
\subsection{Examples}

\textbf{Adjacency matrix and degree matrix:}
For an undirected graph with $N$ nodes $i=1,2,\ldots,N$,
an adjacency matrix $A=(A_{ij})_{1\leq i,j\leq N}$ is
a symmetric matrix with $A_{ij}=1$
if there is an edge between node $i$ and node $j$
and $A_{ij}=0$ otherwise.
The degree matrix $D_{\text{deg}}$ of an undirected graph is a diagonal matrix
with the diagonals counting the degrees of the nodes, 
more precisely, with the $i$-th diagonal being $\sum_{j=1}^{N}A_{ij}$.
See e.g. \citep{chung1997spectral}.

\textbf{Graph Laplacian:}
The graph Laplacian for an undirected graph with $N$ nodes has the form 
$$ L = D_{deg} - A, $$
where $D_{deg}$ is the degree matrix and $A$ is the adjacency matrix
which is symmetric; see e.g. \citep{cvetkovic}.
The eigenvalues of $L$ are non-negative:
$0=\lambda_{1}^{L}\leq\lambda_{2}^{L}\leq\cdots\leq\lambda_{N}^{L}$
(see e.g. \citep{chung1997spectral}).
For any $0<\delta<2/\lambda_{N}^{L}$, 
we take the communication matrix as:
\begin{equation}
W=I_{N}-\delta L.
\end{equation}
It follows that $W$ is symmetric doubly stochastic
with eigenvalues $1=\lambda_{1}>\lambda_{2}\geq\cdots\geq\lambda_{N}>-1$,
where $\lambda_{i}=1-\delta\lambda_{i}^{L}$
for any $i=1,2,\ldots,N$.

\textbf{Complete graph with $N$ nodes:}
For the complete graph $K_{N}$ on $N$ vertices,
the adjacency matrix $A=(A_{ij})_{1\leq i,j\leq N}$,
where $A_{ii}=0$ and $A_{ij}=A_{ji}=1$ for any $i\neq j$.
One can compute that the Laplacian is $L=(L_{ij})_{1\leq i,j\leq N}$
where $L_{ii}=N-1$ and $L_{ij}=-1$ for any $i\neq j$.
In this case, the communication matrix is given by
\begin{equation}
W=I_{N}-\delta L=
\left[
\begin{array}{ccccc}
1-\delta(N-1) & \delta & \cdots & \delta & \delta
\\
\delta & 1-\delta(N-1) & \delta & \cdots & \delta
\\
\vdots & \vdots & \vdots & \vdots & \vdots
\\
\delta & \cdots & \delta & 1-\delta(N-1) & \delta
\\
\delta & \delta & \cdots & \delta & 1-\delta(N-1)
\end{array}
\right].
\end{equation}

\textbf{Star graph with $N$ nodes:}
For the star graph $S_{N}$ on $N$ vertices,
the adjacency matrix $A=(A_{ij})_{1\leq i,j\leq N}$,
where $A_{11}=0$ and $A_{1j}=A_{j1}=1$ for any $j=2,3,\ldots,N$ and $A_{ij}=0$ for any $i,j\geq 2$.
One can compute that the Laplacian is $L=(L_{ij})_{1\leq i,j\leq N}$
where $L_{11}=N-1$, $L_{1j}=L_{j1}=-1$ for any $2\leq i\neq j\leq N$, $L_{22}=\cdots=L_{NN}=1$ and $L_{ij}=0$
otherwise.
In this case, the communication matrix is given by
\begin{equation}
W=I_{N}-\delta L=
\left[
\begin{array}{cccccc}
1-\delta(N-1) & \delta & \delta & \cdots & \delta & \delta
\\
\delta & 1-\delta & 0 & 0 & \cdots & 0
\\
\delta & 0 & 1-\delta & 0 & \cdots & 0
\\
\vdots & \vdots & \vdots & \vdots  & \vdots & \vdots
\\
\delta & 0 & \cdots & 0 & 1-\delta & 0
\\
\delta & 0 & \cdots & \cdots & 0 & 1-\delta
\end{array}
\right].
\end{equation}

\textbf{Cycle graph with $N$ nodes:}
For the cycle graph $C_{N}$ on $N$ vertices,
the adjacency matrix $A=(A_{ij})_{1\leq i,j\leq N}$,
where $A_{ij}=A_{ji}=1$ for any $|i-j|=1$,
$A_{1N}=A_{N1}=1$ and $A_{ij}=0$ otherwise.
One can compute that the Laplacian is $L=(L_{ij})_{1\leq i,j\leq N}$
where $L_{ii}=2$ for any $i=1,\ldots,N$, 
and $L_{ij}=-1$ for any $|i-j|=1$, $L_{1N}=L_{N1}=-1$,
and $L_{ij}=0$ otherwise.
In this case, the communication matrix is given by
\begin{equation}
W=I_{N}-\delta L=
\left[
\begin{array}{cccccc}
1-2\delta & \delta & 0 & \cdots & 0 & \delta
\\
\delta & 1-2\delta & \delta & 0 & \cdots & 0
\\
0 & \delta & 1-2\delta & \delta & \cdots & 0
\\
\vdots & \vdots & \vdots & \vdots  & \vdots & \vdots
\\
0 & 0 & \cdots & \delta & 1-2\delta & \delta
\\
\delta & 0 & \cdots & \cdots & \delta & 1-2\delta
\end{array}
\right].
\end{equation}

\textbf{Hypercube graph with $N=2^{n}$ nodes:}
Consider the hypercube graph $Q_{n}$ with $N=2^{n}$ nodes. 
The hypercube has adjacency matrix:
\begin{equation}
A_{Q_{1}}=\left[\begin{array}{cc}
0 & 1
\\
1 & 0
\end{array}
\right],
\qquad
A_{Q_{n}}=\left[\begin{array}{cc}
A_{Q_{n-1}} & I_{2^{n-1}}
\\
I_{2^{n-1}} & A_{Q_{n-1}}
\end{array}
\right],
\end{equation}
and the Laplacian matrix:
\begin{equation}
L_{Q_{1}}=\left[\begin{array}{cc}
1 & -1
\\
-1 & 1
\end{array}
\right],
\qquad
L_{Q_{n}}=\left[\begin{array}{cc}
L_{Q_{n-1}}+I_{2^{n-1}} & -I_{2^{n-1}}
\\
-I_{2^{n-1}} & L_{Q_{n-1}}+I_{2^{n-1}}
\end{array}
\right].
\end{equation}

\textbf{Bipartite graph with $N$ nodes:}
We consider the complete bipartite graph $K_{\frac{N}{2},\frac{N}{2}}$ on $N$ vertices,
where $N$ is an even integer whose vertices are divided into two equal size sets with $N/2$ vertices,
and the adjacency matrix $A=(A_{ij})_{1\leq i,j\leq N}$ is given by
\begin{equation}
A=\left[
\begin{array}{cc}
0_{\frac{N}{2}\times\frac{N}{2}} & 1_{\frac{N}{2}\times\frac{N}{2}}
\\
1_{\frac{N}{2}\times\frac{N}{2}} & 0_{\frac{N}{2}\times\frac{N}{2}}
\end{array}
\right].
\end{equation}
One can compute that the Laplacian is given by $L=\frac{N}{2}I_{N}-A$.
In this case, the communication matrix is given by
\begin{equation}
W=I_{N}-\delta L=
\left[
\begin{array}{cc}
\left(1-\frac{\delta N}{2}\right)I_{\frac{N}{2}} & \delta 1_{\frac{N}{2}\times\frac{N}{2}}
\\
\delta 1_{\frac{N}{2}\times\frac{N}{2}} & \left(1-\frac{\delta N}{2}\right)I_{\frac{N}{2}}
\end{array}
\right].
\end{equation}

\textbf{Barbell graph with $N=2n$ nodes:}
We consider the barbell graph on $N=2n$ vertices,
where $N$ is an even integer whose vertices are divided into two equal size sets with $N/2$ vertices,
and the adjacency matrix $A=(A_{ij})_{1\leq i,j\leq N}$ is given by
$A_{ij}=1$ for any $1\leq i\neq j\leq n$,
$A_{ij}=1$ for any $n+1\leq i\neq j\leq 2n$, 
$A_{1,n+1}=A_{n+1,1}=1$
and $A_{ij}=0$ otherwise.
We can compute that the Laplacian is given by
$L=(L_{ij})_{1\leq i,j\leq N}$, 
where $L_{ij}=-1$ for any $1\leq i\neq j\leq n$,
$L_{ij}=-1$ for any $n+1\leq i\neq j\leq 2n$, 
$L_{1,n+1}=L_{n+1,1}=-1$, 
$L_{11}=L_{n+1,n+1}=n$,
$L_{ii}=n-1$ for any $i\neq 1,n+1$
and $L_{ij}=0$ otherwise.

\textbf{Path graph with $N$ nodes:}
For the path graph (line graph) $P_{N}$ on $N$ vertices, the
adjacency matrix $A=(A_{ij})_{1\leq i,j\leq N}$, 
where $A_{i,i+1}=A_{i,i-1}=1$ for any $i=2,\ldots,N-1$
$A_{1,2}=1$ and $A_{N,N-1}=1$ and $A_{ij}=0$ otherwise.
We can compute that the Laplacian is given by $L=(L_{ij})_{1\leq i,j\leq N}$,
where $L_{11}=L_{NN}=1$, $L_{ii}=2$ for any $i=2,\ldots,N-1$,
$L_{i,i+1}=L_{i,i-1}=-1$ for any $i=2,\ldots,N-1$
$L_{1,2}=-1$ and $L_{N,N-1}=-1$ and $L_{ij}=0$ otherwise.


\section{Generalized Central Limit Theorem}\label{sec:CLT}

In this section, 
by following Theorem~\ref{thm:main:dsg}, we derive a generalized central limit theorem (GCLT) as a corollary to \cite[Theorem 1.1]{gao2015stable}. 

Before we state the next result, let us first recall 
\emph{$\alpha$-stable distributions} from probability theory.
The $\alpha$-stable distributions are heavy-tailed
distributions
that appear as the limiting distribution of the generalized CLT for
a sum of i.i.d. random variables
with infinite variance \citep{paul1937theorie}.
Here, $\alpha\in(0,2]$ is known as the tail-index, which determines
the tail thickness of the distribution.
An $\alpha$-stable distribution becomes
heavier-tailed as $\alpha$ gets smaller.
When $\alpha=1$ and $\alpha=2$, an $\alpha$-stable distribution reduces to the Cauchy and the Gaussian distributions, respectively.

In particular, the next result shows that, when properly scaled, the sum of the iterates $S_{K}:=\sum_{k=1}^{K} x^{(k)}$ converges in law to an $\alpha$-stable distribution. 

\begin{corollary}
Assume the conditions of Theorem \ref{thm:main:dsg} are satisfied, i.e. assume 
$\rho<0$ {and there exists a unique positive $\alpha$ such that $h(\alpha)=1$}. Then, there exists a function $C_\alpha : \mathbb{S}^{Nd-1} \mapsto \mathbb{C}$ such that for $\alpha \in (0, \infty) \setminus \{1\} $, the random variables 
$\mathbf{a}_K \left(S_{K}-\mathbf{d}_{K}\right)$ converge in law to the random variable with
characteristic function
$\Upsilon_{\alpha}(t v)=\exp \left(t^{\min(\alpha, 2)} C_{\alpha}(v)\right)$, for $t>0$ and $v \in \mathbb{S}^{d-1}$ as $K\rightarrow\infty$.

Here, if $\alpha \in (0,1)$, $\mathbf{a}_K = K^{-1/\alpha}$, $\mathbf{d}_K = 0$; if $\alpha \in (1,2)$, $\mathbf{a}_K = K^{-1/\alpha}$, $\mathbf{d}_K = K^{1-1/\alpha}\bar{x}$; if $\alpha = 2$, $\mathbf{a}_K = (K \log K)^{-1/2}$, $\mathbf{d}_K = K \bar{x}$; finally, if $\alpha > 2$, $\mathbf{a}_K = K^{-1/2}$, $\mathbf{d}_K = K \bar{x}$, where $\bar{x}= \mathbb{E} [x^{(\infty)}]$.
\end{corollary}
The proof requires verifying that the conditions of \cite[Theorem 1.1]{gao2015stable} hold under our assumptions. Similar to Theorem~\ref{thm:main:dsg}, the proof is a straightforward extension of the proof of \cite[Theorem 2]{ht_sgd_quad}, hence omitted. For $\alpha=1$, a similar GCLT holds with more complicated terms, which we omitted for brevity. 

This result shows that, when $\alpha \geq 2$, the scaled and centered sum $\mathbf{a}_K \left(S_{K}-\mathbf{d}_{K}\right)$ converges to a Gaussian distribution (that is a $2$-stable distribution); whereas when $\alpha <2$, the sequence converges to an $\alpha$-stable distribution, where $\alpha$ coincides with the tail-index of the stable distribution as well.


\section{Moment Bounds and Convergence Speed}\label{sec:conv}

\textbf{Wasserstein metric.}
For any $p\geq 1$, define $\mathcal{P}_{p}(\mathbb{R}^{d})$
as the space consisting of all the Borel probability measures $\nu$
on $\mathbb{R}^{d}$ with the finite $p$-th moment
(based on the Euclidean norm).
For any two Borel probability measures $\nu_{1},\nu_{2}\in\mathcal{P}_{p}(\mathbb{R}^{d})$, 
we define the standard $p$-Wasserstein
metric \citep{villani2008optimal}:
$$\mathcal{W}_{p}(\nu_{1},\nu_{2}):=\left(\inf\mathbb{E}\left[\Vert Z_{1}-Z_{2}\Vert^{p}\right]\right)^{1/p},$$
where the infimum is taken over all joint distributions of the random variables $Z_{1},Z_{2}$ with marginal distributions
$\nu_{1},\nu_{2}$.

\subsection{Moment Bounds}

Theorem~\ref{thm:main:dsg} is of asymptotic nature which characterizes the stationary distribution $x^{(\infty)}$ of
SGD iterations.
Next, we provide non-asymptotic moment
bounds for $x^{(k)}$ at each $k$-th iterate,
and also for the limit $x^{(\infty)}$.
In the rest of Appendix~\ref{sec:conv}, we 
assume that 
\begin{equation}\label{assump:hat:rho}
\hat{\rho}:=\mathbb{E}\log\left\Vert\mathcal{W}-\eta H\right\Vert<0,
\end{equation}
so that there exists a unique positive value $\hat{\alpha}$ such that $\hat{h}(\hat{\alpha})=1$. 
We have shown previously in Section~\ref{sec:theoretical} that a sufficient condition for
\eqref{assump:hat:rho} to hold is
$b\geq d$, $\hat{\rho}_{dis}:=\mathbb{E}\log\Vert I_{Nd}-\eta H\Vert<0$ and $\delta$ being sufficiently small.

\begin{theorem}\label{thm:moments}
Suppose Assumptions \textbf{(A1)}-\textbf{(A2)} hold.

(i) If the tail-index $\hat{\alpha}\leq 1$, 
then for any $p\in(0,\hat{\alpha})$, we have $\hat{h}(p)<1$ and
\begin{equation}
\mathbb{E}\left\Vert x^{(k)}\right\Vert^{p}
\leq
\left(\hat{h}(p)\right)^{k}\mathbb{E}\left\Vert x^{(0)}\right\Vert^{p}
+\frac{1-(\hat{h}(p))^{k}}{1-\hat{h}(p)}\mathbb{E}\left\Vert q^{(1)}\right\Vert^{p}.
\end{equation}

(ii) If the tail-index $\hat{\alpha}>1$, 
then for any $p\in(1,\hat{\alpha})$, we have $\hat{h}(p)<1$
and for any $0<\epsilon<\frac{1}{\hat{h}(p)}-1$, we have
\begin{equation}
\mathbb{E}\left\Vert x^{(k)}\right\Vert^{p}
\leq
((1+\epsilon)\hat{h}(p))^{k}\mathbb{E}\left\Vert x^{(0)}\right\Vert^{p}
+\frac{1-((1+\epsilon)\hat{h}(p))^{k}}{1-(1+\epsilon)\hat{h}(p)}
\frac{(1+\epsilon)^{\frac{p}{p-1}}-(1+\epsilon)}{((1+\epsilon)^{\frac{1}{p-1}}-1)^{p}}
\mathbb{E}\left\Vert q^{(1)}\right\Vert^{p}.
\end{equation}
\end{theorem}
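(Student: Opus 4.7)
The plan is to prove both bounds by iterating the recursion $x^{(k+1)} = M^{(k+1)} x^{(k)} + q^{(k+1)}$ after applying, in each case, an appropriate scalar inequality to pass from $\|Mx+q\|^p$ to a sum of $\|M\|^p\|x\|^p$ and $\|q\|^p$. Taking conditional expectations then factorizes cleanly because $M^{(k+1)}$ and $q^{(k+1)}$ are built from fresh i.i.d. data samples at step $k+1$, and are therefore independent of $x^{(k)}$. Before starting, I would briefly verify that $\hat{h}(p)<1$ for all $p\in(0,\hat{\alpha})$: this follows from $\hat{h}(0)=1$, $\hat{h}(\hat{\alpha})=1$, convexity of $\hat{h}$ (a standard consequence of H\"older's inequality applied to $s\mapsto \mathbb{E}[e^{s\log\|\mathcal{W}-\eta H\|}]$), and the fact that $\hat{h}'(0)=\hat{\rho}<0$ rules out $\hat{h}$ being constant on any subinterval $[0,p]$.

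For part (i), since $\hat{\alpha}\le 1$ forces $p\in(0,1)$, I use subadditivity $(a+b)^p\le a^p+b^p$ valid on $[0,\infty)$ for $p\in(0,1]$. Combined with sub-multiplicativity $\|M^{(k+1)}x^{(k)}\|\le\|M^{(k+1)}\|\,\|x^{(k)}\|$, this yields $\|x^{(k+1)}\|^p\le\|M^{(k+1)}\|^p\|x^{(k)}\|^p+\|q^{(k+1)}\|^p$. Taking expectations and using independence gives $\mathbb{E}\|x^{(k+1)}\|^p\le\hat{h}(p)\,\mathbb{E}\|x^{(k)}\|^p+\mathbb{E}\|q^{(1)}\|^p$, since $q^{(k+1)}\stackrel{d}{=}q^{(1)}$. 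Unrolling this scalar recursion with contraction factor $\hat{h}(p)<1$ produces the geometric series $\sum_{j=0}^{k-1}\hat{h}(p)^j=\frac{1-\hat{h}(p)^k}{1-\hat{h}(p)}$, giving the stated bound.

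For part (ii), since $p>1$ subadditivity fails and I instead use a Young-type inequality: for any $\epsilon>0$ and $a,b\ge 0$,
\begin{equation*}
(a+b)^p \le (1+\epsilon)\,a^p + C_{p,\epsilon}\,b^p,\qquad C_{p,\epsilon}=\frac{(1+\epsilon)^{p/(p-1)}-(1+\epsilon)}{\bigl((1+\epsilon)^{1/(p-1)}-1\bigr)^p},
\end{equation*}
which follows by maximizing the difference $(a+b)^p-(1+\epsilon)a^p$ over $a>0$ for fixed $b$ (the optimizer satisfies $a+b=(1+\epsilon)^{1/(p-1)}a$). Applying this with $a=\|M^{(k+1)}\|\,\|x^{(k)}\|$, $b=\|q^{(k+1)}\|$, then taking expectations, yields
\begin{equation*}
\mathbb{E}\|x^{(k+1)}\|^p \le (1+\epsilon)\hat{h}(p)\,\mathbb{E}\|x^{(k)}\|^p + C_{p,\epsilon}\,\mathbb{E}\|q^{(1)}\|^p.
\end{equation*}
The contraction factor is now $(1+\epsilon)\hat{h}(p)$, which remains strictly less than one precisely when $\epsilon<1/\hat{h}(p)-1$, matching the hypothesis. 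Iterating and summing the geometric series delivers the claimed bound with the explicit constant $C_{p,\epsilon}$.

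The main obstacle is really just identifying the correct constant $C_{p,\epsilon}$ in part (ii): one needs a decomposition that is sharp enough so that taking $\epsilon$ small enough gives a usable contraction factor while still allowing an explicit, iterable expression. Everything else—independence of $M^{(k+1)},q^{(k+1)}$ from $x^{(k)}$, stationarity in $k$ of $\mathbb{E}\|q^{(k+1)}\|^p$, and convexity-based verification that $\hat{h}(p)<1$ on $(0,\hat{\alpha})$—follows routinely from the setup already established in the paper. Finite $p$-th moments of $\|q^{(1)}\|$ for $p<\hat{\alpha}$ are guaranteed by Assumptions \textbf{(A1)}--\textbf{(A2)}, which give finite moments of all orders for the data.
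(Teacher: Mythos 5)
Your proposal is correct and follows essentially the same route as the paper's proof: split $\|M^{(k)}x^{(k-1)}+q^{(k)}\|^p$ via subadditivity for $p\le 1$ and via the $(1+\epsilon)$-weighted Young-type inequality with constant $C_{p,\epsilon}$ for $p>1$, use independence of $(M^{(k)},q^{(k)})$ from $x^{(k-1)}$, and unroll the resulting scalar contraction. The only difference is cosmetic — you derive the scalar inequality (the paper imports it as Lemma~23 of \citet{ht_sgd_quad}) and spell out the convexity argument for $\hat{h}(p)<1$ — and both of these are carried out correctly.
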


By letting $k\rightarrow\infty$, we obtain the following corollary.

\begin{corollary}\label{cor:moments}
Suppose Assumptions \textbf{(A1)}-\textbf{(A3)} hold.

Suppose Assumptions \textbf{(A1)}-\textbf{(A2)} hold.

(i) If the tail-index $\hat{\alpha}\leq 1$, 
then for any $p\in(0,\hat{\alpha})$, we have $\hat{h}(p)<1$ and
\begin{equation}
\mathbb{E}\left\Vert x^{(\infty)}\right\Vert^{p}
\leq
\frac{1}{1-\hat{h}(p)}\mathbb{E}\left\Vert q^{(1)}\right\Vert^{p}.
\end{equation}

(ii) If the tail-index $\hat{\alpha}>1$, 
then for any $p\in(1,\hat{\alpha})$, we have $\hat{h}(p)<1$
and for any $0<\epsilon<\frac{1}{\hat{h}(p)}-1$, we have
\begin{equation}
\mathbb{E}\left\Vert x^{(\infty)}\right\Vert^{p}
\leq
\frac{1}{1-(1+\epsilon)\hat{h}(p)}
\frac{(1+\epsilon)^{\frac{p}{p-1}}-(1+\epsilon)}{((1+\epsilon)^{\frac{1}{p-1}}-1)^{p}}
\mathbb{E}\left\Vert q^{(1)}\right\Vert^{p}.
\end{equation}
\end{corollary}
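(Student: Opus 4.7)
The plan is to pass to the limit $k \to \infty$ in the non-asymptotic bounds of Theorem~\ref{thm:moments}, combining the convergence in distribution $x^{(k)} \to x^{(\infty)}$ from Theorem~\ref{thm:main:dsg} with Fatou's lemma to bridge $\mathbb{E}\|x^{(k)}\|^p$ and $\mathbb{E}\|x^{(\infty)}\|^p$.

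First I would fix a deterministic initial condition, say $x^{(0)}=0$, so that the $\mathbb{E}\|x^{(0)}\|^p$ term on the right-hand side of Theorem~\ref{thm:moments} vanishes identically for every $k$; since Theorem~\ref{thm:main:dsg} guarantees a unique stationary distribution, this normalization is harmless. Under the standing hypothesis \eqref{assump:hat:rho}, the function $\hat{h}$ is convex in $s$ with $\hat{h}(0)=1$, $\hat{h}'(0)=\hat{\rho}<0$, and $\hat{h}(\hat{\alpha})=1$, so $\hat{h}(p)<1$ for every $p \in (0,\hat{\alpha})$. In case (i) this yields $(\hat{h}(p))^k \to 0$ and the geometric factor $\frac{1-(\hat{h}(p))^k}{1-\hat{h}(p)} \to \frac{1}{1-\hat{h}(p)}$. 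In case (ii), the admissible range $0<\epsilon<\frac{1}{\hat{h}(p)}-1$ is precisely what makes $(1+\epsilon)\hat{h}(p)<1$, so the analogous limits hold with ratio $\frac{1}{1-(1+\epsilon)\hat{h}(p)}$; the polynomial prefactor $\frac{(1+\epsilon)^{p/(p-1)}-(1+\epsilon)}{((1+\epsilon)^{1/(p-1)}-1)^p}$ is independent of $k$ and passes through unchanged.

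Next I would invoke Theorem~\ref{thm:main:dsg} to assert $x^{(k)} \to x^{(\infty)}$ weakly; since $y \mapsto \|y\|^p$ is continuous and nonnegative, the continuous mapping theorem gives $\|x^{(k)}\|^p \to \|x^{(\infty)}\|^p$ as scalar random variables in distribution. The portmanteau theorem (equivalently, Skorokhod representation followed by the usual Fatou argument on an enlarged probability space) then yields
\[
\mathbb{E}\|x^{(\infty)}\|^p \leq \liminf_{k\to\infty}\mathbb{E}\|x^{(k)}\|^p.
\]
Chaining this inequality with the $\limsup$ bound derived above from Theorem~\ref{thm:moments} produces precisely the stated estimates in parts (i) and (ii).

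The only subtle point is the interchange of limit and expectation. Because the conclusion is a one-sided upper bound rather than an equality of moments, Fatou alone suffices and no uniform integrability argument is needed. If one wished to upgrade to equality of the moments in the limit, Theorem~\ref{thm:moments} applied at any $p'\in(p,\hat{\alpha})$ would furnish a uniform-in-$k$ bound on $\mathbb{E}\|x^{(k)}\|^{p'}$, delivering uniform integrability of $\{\|x^{(k)}\|^p\}_k$; but this strengthening is not required for the corollary as stated.
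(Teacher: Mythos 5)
Your proposal is correct and follows essentially the same route as the paper, which obtains the corollary by letting $k\to\infty$ in Theorem~\ref{thm:moments}; you simply make explicit the technical bridge (weak convergence of $x^{(k)}$ to the unique stationary limit plus the Fatou/portmanteau lower semicontinuity of $\mu\mapsto\int\|y\|^p\,d\mu(y)$) that the paper leaves implicit. The normalization $x^{(0)}=0$ and the observation that only a one-sided bound is needed, so no uniform integrability is required, are both sound.
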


\subsection{Convergence Speed}

Next, we will study the speed of convergence of the $k$-th iterate $x^{(k)}$ to its stationary distribution $x^{(\infty)}$ in the Wasserstein metric $\mathcal{W}_{p}$ for any $1\leq p<\alpha$. 

\begin{theorem}\label{thm:conv}
Suppose Assumptions \textbf{(A1)}-\textbf{(A2)} hold.
Assume $\hat{\alpha}>1$.
Let $\nu^{(k)}$, $\nu^{(\infty)}$ denote the probability laws
of $x^{(k)}$ and $x^{(\infty)}$ respectively. 
Then for any $1\leq p<\hat{\alpha}$,
\begin{align}
\mathcal{W}_{p}\left(\nu^{(k)},\nu^{(\infty)}\right)\nonumber
\leq
\left(\hat{h}(p)\right)^{k/p}\mathcal{W}_{p}\left(\tilde{\nu}^{(0)},\tilde{\nu}^{(\infty)}\right),
\end{align}
where $(\hat{h}(p))^{1/p}\in(0,1)$.
\end{theorem}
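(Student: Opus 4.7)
The plan is to exploit the affine structure of the recursion \eqref{eq-iter-dgd} via a synchronous coupling. I would build two copies of the chain driven by the \emph{same} sequence of random matrices $M^{(k)}$ and additive noise vectors $q^{(k)}$: one copy $x^{(k)}$ started from an arbitrary initial law $\nu^{(0)}$, and another copy $\tilde x^{(k)}$ started from the stationary law $\nu^{(\infty)}$ (whose existence and uniqueness come from Theorem~\ref{thm:main:dsg}, noting that $\hat\rho<0$ and existence of positive $\hat\alpha$ with $\hat h(\hat\alpha)=1$ under $\hat\alpha>1$ imply $\rho<0$ and existence of $\alpha$ with $h(\alpha)=1$ via the sub-multiplicative inequalities \eqref{ineq-h-s}). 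Both initial variables are taken independent of the driving noise $\{M^{(j)},q^{(j)}\}_{j\geq 1}$, and the coupling of $(x^{(0)},\tilde x^{(0)})$ will be chosen at the end to (nearly) attain the Wasserstein infimum.

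The key observation is that subtracting the two recursions cancels $q^{(k+1)}$, so $x^{(k)}-\tilde x^{(k)} = \Pi^{(k)} (x^{(0)}-\tilde x^{(0)})$ with $\Pi^{(k)} := M^{(k)}M^{(k-1)}\cdots M^{(1)}$. Taking norms, applying sub-multiplicativity of the matrix $2$-norm, and using that $\Pi^{(k)}$ is independent of $(x^{(0)},\tilde x^{(0)})$, I would obtain
\begin{equation}
\mathbb{E}\bigl\|x^{(k)}-\tilde x^{(k)}\bigr\|^{p} \;\leq\; \mathbb{E}\bigl\|\Pi^{(k)}\bigr\|^{p}\cdot\mathbb{E}\bigl\|x^{(0)}-\tilde x^{(0)}\bigr\|^{p}.
\end{equation}
Because the matrices $M^{(j)} = \mathcal{W}-\eta H^{(j)}$ are i.i.d.\ in $j$ (as $H^{(j)}$ are i.i.d.\ by Assumption \textbf{(A1)}), a second application of sub-multiplicativity followed by factorization of the expectation yields
\begin{equation}
\mathbb{E}\bigl\|\Pi^{(k)}\bigr\|^{p} \;\leq\; \prod_{j=1}^{k}\mathbb{E}\bigl\|M^{(j)}\bigr\|^{p} \;=\; \bigl(\hat h(p)\bigr)^{k}.
\end{equation}
Taking $p$-th roots and then passing to the infimum over couplings of $(\nu^{(0)},\nu^{(\infty)})$ in the definition of the Wasserstein distance would give exactly the claimed bound $\mathcal{W}_{p}(\nu^{(k)},\nu^{(\infty)}) \leq (\hat h(p))^{k/p}\mathcal{W}_{p}(\nu^{(0)},\nu^{(\infty)})$.

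What remains is to verify the strict contraction $(\hat h(p))^{1/p}\in(0,1)$ for every $1\leq p<\hat\alpha$. Positivity is immediate since $\|M^{(j)}\|>0$ almost surely. For the upper bound, I would rely on two standard facts: (i) $\hat h(0)=1$ and $\hat h(\hat\alpha)=1$ by definition of $\hat\alpha$; (ii) $s\mapsto\log\hat h(s)$ is convex on $[0,\infty)$ by Hölder's inequality (equivalently, $\hat h$ is log-convex as a moment generating-type function). Together with the assumption $\hat h'(0)=\hat\rho<0$ (via \eqref{assump:hat:rho}), log-convexity and the boundary values $\hat h(0)=\hat h(\hat\alpha)=1$ force $\hat h(p)<1$ on the entire open interval $(0,\hat\alpha)$, hence on $[1,\hat\alpha)$ in particular.

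There is no serious obstacle in this argument: the affine/multiplicative decomposition makes the coupling trivial, and the contraction constant follows from the elementary convexity analysis of $\hat h$. The one point needing a small amount of care is the interchange of the infimum (coupling) with the bound, which is justified because the noise sequence is independent of the initial pair, so any coupling of $(\nu^{(0)},\nu^{(\infty)})$ extends canonically to a coupling on the whole path space.
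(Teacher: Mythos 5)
Your proposal is correct and follows essentially the same route as the paper: a synchronous coupling of two copies of the recursion driven by the same $(M^{(k)},q^{(k)})$, cancellation of the additive term, sub-multiplicativity plus independence to extract the factor $\hat h(p)$ per step (the paper iterates one step at a time rather than bounding $\mathbb{E}\|\Pi^{(k)}\|^{p}$ directly, an immaterial difference), and then specializing $\tilde\nu^{(0)}$ to the stationary law. Your convexity argument for $\hat h(p)<1$ on $[1,\hat\alpha)$ is also the standard justification the paper implicitly relies on.
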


We can see from Theorem~\ref{thm:conv} that $\nu^{(k)}$
converges exponentially fast in $k$ to $\nu^{(\infty)}$
with the convergence rate $(\hat{h}(p))^{1/p}$.
Theorem~\ref{thm:conv} works for any $p<\hat{\alpha}$.
At the critical $p=\hat{\alpha}$,
by adapting the proof of Proposition~10 in \citet{ht_sgd_quad}, one can show
the following result.

\begin{proposition}\label{prop:k:bound}
Suppose Assumptions \textbf{(A1)}-\textbf{(A2)} hold.
Then we have $\mathbb{E}\left[\left\Vert x^{(k)}\right\Vert^{\hat{\alpha}}\right]=O(k)$
if $\hat{\alpha}\leq 1$, and
$\mathbb{E}\left[\left\Vert x^{(k)}\right\Vert^{\hat{\alpha}}\right]=O\left(k^{\hat{\alpha}}\right)$
if $\hat{\alpha}>1$.
\end{proposition}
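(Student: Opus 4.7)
The plan is to exploit the linear stochastic recursion $x^{(k+1)} = M^{(k+1)} x^{(k)} + q^{(k+1)}$ together with the independence of $(M^{(k+1)}, q^{(k+1)})$ from $x^{(k)}$, and to split on whether $\hat{\alpha}\leq 1$ or $\hat{\alpha}>1$. The central observation is that $\hat{h}(\hat{\alpha})=1$ at the critical exponent, so the geometric contraction underlying Theorem~\ref{thm:moments} degenerates and one gets polynomial rather than bounded growth. The integrability of $\|q^{(1)}\|^{\hat{\alpha}}$ and of $\|M^{(1)}\|^{\hat{\alpha}}$ is assured by Assumptions \textbf{(A1)}--\textbf{(A2)} and by the definition $\hat{h}(\hat{\alpha})=\mathbb{E}\|M^{(1)}\|^{\hat{\alpha}}=1$.

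For the case $\hat{\alpha}\leq 1$, I would use the subadditivity inequality $(a+b)^{\hat{\alpha}}\leq a^{\hat{\alpha}}+b^{\hat{\alpha}}$ for $a,b\geq 0$, together with the submultiplicativity $\|M^{(k+1)}x^{(k)}\|\leq\|M^{(k+1)}\|\cdot\|x^{(k)}\|$, to obtain
\[
\|x^{(k+1)}\|^{\hat{\alpha}}\leq\|M^{(k+1)}\|^{\hat{\alpha}}\,\|x^{(k)}\|^{\hat{\alpha}}+\|q^{(k+1)}\|^{\hat{\alpha}}.
\]
Taking expectations and using independence gives $\mathbb{E}\|x^{(k+1)}\|^{\hat{\alpha}}\leq \hat{h}(\hat{\alpha})\,\mathbb{E}\|x^{(k)}\|^{\hat{\alpha}}+\mathbb{E}\|q^{(1)}\|^{\hat{\alpha}}$. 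Plugging in $\hat{h}(\hat{\alpha})=1$ and iterating yields $\mathbb{E}\|x^{(k)}\|^{\hat{\alpha}}\leq \mathbb{E}\|x^{(0)}\|^{\hat{\alpha}}+k\,\mathbb{E}\|q^{(1)}\|^{\hat{\alpha}}=O(k)$.

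For the case $\hat{\alpha}>1$, the subadditivity inequality fails, so I would instead apply Minkowski's inequality in $L^{\hat{\alpha}}$. Setting $u_k:=(\mathbb{E}\|x^{(k)}\|^{\hat{\alpha}})^{1/\hat{\alpha}}$, the recursion yields
\[
u_{k+1}\leq \bigl(\mathbb{E}\|M^{(k+1)}x^{(k)}\|^{\hat{\alpha}}\bigr)^{1/\hat{\alpha}}+\bigl(\mathbb{E}\|q^{(k+1)}\|^{\hat{\alpha}}\bigr)^{1/\hat{\alpha}}\leq \bigl(\hat{h}(\hat{\alpha})\bigr)^{1/\hat{\alpha}}u_k+c,
\]
where $c:=(\mathbb{E}\|q^{(1)}\|^{\hat{\alpha}})^{1/\hat{\alpha}}<\infty$ and the middle step again uses submultiplicativity and independence. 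Since $\hat{h}(\hat{\alpha})=1$, this simplifies to $u_{k+1}\leq u_k+c$, giving $u_k\leq u_0+ck$ and therefore $\mathbb{E}\|x^{(k)}\|^{\hat{\alpha}}\leq (u_0+ck)^{\hat{\alpha}}=O(k^{\hat{\alpha}})$.

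There is no serious obstacle here; the main subtlety is that neither the subadditivity trick for $\hat{\alpha}\leq 1$ nor the Minkowski trick for $\hat{\alpha}>1$ produces a geometric contraction at the critical exponent, which is why the bound exhibits linear (respectively $\hat{\alpha}$-th power) growth instead of boundedness. This contrasts with the sub-critical moment bounds in Theorem~\ref{thm:moments}, whose prefactors $1/(1-\hat{h}(p))$ and $1/(1-(1+\epsilon)\hat{h}(p))$ both diverge as $p\uparrow\hat{\alpha}$, and one should view the present proposition as the correct replacement at $p=\hat{\alpha}$.
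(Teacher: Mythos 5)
Your proof is correct and is essentially the argument the paper has in mind: the paper proves this proposition only by reference, saying it follows "by adapting the proof of Proposition 10 in \citet{ht_sgd_quad}," and that adaptation is precisely your split into subadditivity of $t\mapsto t^{\hat\alpha}$ for $\hat\alpha\leq 1$ and Minkowski's inequality in $L^{\hat\alpha}$ for $\hat\alpha>1$, combined with submultiplicativity, independence of $(M^{(k+1)},q^{(k+1)})$ from $x^{(k)}$, and the criticality $\hat h(\hat\alpha)=1$. The only implicit assumption worth flagging is that $\mathbb{E}\|x^{(0)}\|^{\hat\alpha}<\infty$, which holds for the deterministic or compactly supported initializations used throughout.
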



\section{Tail-Index Comparison}\label{sec:general:case} 

\subsection{Tail-Index Comparison between Decentralized SGD and Disconnected SGD: The General Case}

In Section~\ref{sec:theoretical}, 
we compared the tail-index between decentralized SGD
and disconnected SGD for the special case when
the dimension $d=1$, the batch-sizes $b_{i}\equiv 1$. 
In this section, we will discuss the general case. 

\subsubsection{General dimension $d$, batch-sizes $b_{i}\equiv b$ and variances $\sigma_{i}^{2}\equiv\sigma^{2}$}

More generally, we consider the general dimension $d$, batch-sizes $b_{i}\equiv b$
and $\sigma_{i}\equiv\sigma$.
Let $u_i$ and $v_i$
be the right and left singular vectors of the matrix $I_{d} - \eta H_i$, i.e. they satisfy
\begin{equation} 
(I_{d} - \eta H_i)v_i = \sigma_{\max,i} u_i, \quad u_i^T (I_{d} - \eta H_i) = \sigma_{\max,i}v_i^T,
\end{equation}
where 
$$
\sigma_{\max,i} =  \| I_{d} - \eta H_i \|.
$$
Then, we can choose the left and right eigenvectors of the matrix
$I_{Nd}-\eta H = \mbox{blkdiag}_i(I_d - \eta H_i)$
corresponding to the largest singular value
\begin{equation} 
\sigma_{\max} = \max_{1\leq i \leq N} \sigma_{\max,i}
\label{eq-sigma-max}
\end{equation}
as
$$
v = v_{i_*} \otimes e_{i_*}, \quad u =  \mbox{sign} \left(1-\eta\lambda_{j_*(i_*)}(H_{i_*})\right) 
v_{i_*} \otimes e_{i_*},  
$$
where $i_*$ is the index that maximizes $\sigma_{\max,i}$ in \eqref{eq-sigma-max} and
$$j_*(i) := \arg \max_{1\leq j\leq d} \left| 1- \eta \lambda_{j}(H_{i})\right|$$
is the index of the eigenvalues of $H_{i_*}$ so that we have 
\beq 
\|I_d - \eta H_{i_*}\| = \left| 1- \eta \lambda_{j_*(i_{\ast})}(H_{i_*}) \right|.
\label{def-j-star}
\eeq
Basically, $i_* \in \{1,2,\dots,N\}$ is the index of the node where $\sigma_{\max,i}$ is maximized whereas $j_*(i_{\ast}) \in \{1,2,\dots,d\}$ is the index of the eigenvalue of $H_{i_*}$ for which the 2-norm $\|I_d - \eta H_{i_*}\|$ is attained (i.e. \eqref{def-j-star} is satisfied).
For example, if $N=3$ and $i_* = 2$; we have
$$ v = \begin{bmatrix} 0_d^T & v_2^T & 0_d^T \end{bmatrix}^T\,,$$
where $v_2$ is the left eigenvector of $I_d - \eta H_2$ and $0_d$ denotes the zero vector of length $d$. As before,
\begin{align}
\left\| I_{Nd} - \eta H - \delta (L\otimes I_d) \right\|^s 
&= \left\| I_{Nd} - \eta H \right\|^s - s \delta u^T ( L \otimes I_d) v  +  o(\delta)\nonumber
\\
&= \left\| I_{Nd} - \eta H \right\|^s - s\delta \cdot\mbox{sign} \left(1-\eta\lambda_{j_*(i_{\ast})}(H_{i_*})\right) v_{i_*}^T L_{i_* i_*} v_{i_*} +  o(\delta)\nonumber
\\
&=\left\| I_{Nd} - \eta H \right\|^s - s\delta \cdot\mbox{sign} \left(1-\eta\lambda_{j_*(i_{\ast})}(H_{i_*})\right)  L_{i_* i_*} +  o(\delta)\,,
\label{without:expectation-2} 
\end{align}
where we used $\|v_{i_*}\|=1$. In the special case when $d=1$ and $b=1$, we have $$\mbox{sign} \left(1-\eta\lambda_{j_*}(H_{i_*})\right)  = \mbox{sign}\left(1-\eta a_{i_*}^2\right),$$ so that we recover \eqref{without:expectation}.

\begin{lemma}\label{lem:general:d:b}
As $\delta\rightarrow 0$, we have the first-order expansion:
\begin{equation}
\hat{h}(s)
=\hat{h}_{dis}(s)
-\frac{s\delta\sum_{i=1}^{N}L_{ii}}{N}\left(2\mathbb{P}\left(\lambda_{j_*(i_{\ast})}(H_{i_*})<1/\eta\right)-1\right)
+o(\delta),
\end{equation}
where
\begin{align}
\hat{h}(s)=\mathbb{E}\left\| I_{Nd} - \eta H - \delta (L\otimes I_d) \right\|^s,
\qquad
\hat{h}_{dis}(s)=\mathbb{E}\left\| I_{Nd} - \eta H \right\|^s.
\end{align}
\end{lemma}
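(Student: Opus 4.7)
The plan is to obtain the first-order expansion of $\hat h(s) = \mathbb{E}\|I_{Nd}-\eta H - \delta(L\otimes I_d)\|^s$ simply by taking expectations of the pointwise expansion already derived in \eqref{without:expectation-2}, and then to simplify the resulting first-order coefficient using the exchangeability of the nodes. Concretely, from \eqref{without:expectation-2} I have the almost-sure identity
\begin{equation*}
\|I_{Nd}-\eta H - \delta(L\otimes I_d)\|^s
= \|I_{Nd}-\eta H\|^s
- s\delta\,\mathrm{sign}\bigl(1-\eta\lambda_{j_*(i_*)}(H_{i_*})\bigr)\,L_{i_* i_*}
+ o(\delta),
\end{equation*}
valid with probability one since Assumption \textbf{(A1)} forces the maximizing eigenvalue to be simple almost surely (so the singular-value perturbation formula of \cite[Lemma~2.3]{guglielmi2011fast} applies without the usual degeneracy caveats). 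Taking expectations, the lemma reduces to evaluating $\mathbb{E}\bigl[\mathrm{sign}(1-\eta\lambda_{j_*(i_*)}(H_{i_*}))\,L_{i_* i_*}\bigr]$ and checking that the $o(\delta)$ error survives the expectation as $o(\delta)$.

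For the first-order coefficient I would exploit that $H_1,\dots,H_N$ are i.i.d.\ (hence exchangeable). Writing
\begin{equation*}
\mathbb{E}\bigl[\mathrm{sign}(1-\eta\lambda_{j_*(i_*)}(H_{i_*}))\,L_{i_* i_*}\bigr]
= \sum_{i=1}^{N} L_{ii}\,\mathbb{E}\bigl[\mathrm{sign}(1-\eta\lambda_{j_*(i)}(H_i))\,\mathbf{1}_{\{i_*=i\}}\bigr],
\end{equation*}
exchangeability makes each summand equal, so each is
$\tfrac{1}{N}\mathbb{E}[\mathrm{sign}(1-\eta\lambda_{j_*(i_*)}(H_{i_*}))]$.
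Pulling this out of the sum yields
\begin{equation*}
\mathbb{E}\bigl[\mathrm{sign}(1-\eta\lambda_{j_*(i_*)}(H_{i_*}))\,L_{i_* i_*}\bigr]
= \frac{1}{N}\Bigl(\sum_{i=1}^N L_{ii}\Bigr)\,\mathbb{E}\bigl[\mathrm{sign}(1-\eta\lambda_{j_*(i_*)}(H_{i_*}))\bigr].
\end{equation*}
Because the distribution of $H$ is continuous by \textbf{(A1)}, we have $\mathbb{P}(\lambda_{j_*(i_*)}(H_{i_*})=1/\eta)=0$, so the sign's expectation equals $2\mathbb{P}(\lambda_{j_*(i_*)}(H_{i_*})<1/\eta)-1$, producing exactly the announced first-order term.

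The remaining step is to justify that the expectation of the $o(\delta)$ remainder is itself $o(\delta)$. I would apply dominated convergence to the difference quotient
\begin{equation*}
\Phi(\delta,H) := \frac{\|I_{Nd}-\eta H-\delta(L\otimes I_d)\|^s - \|I_{Nd}-\eta H\|^s}{\delta}.
\end{equation*}
The reverse triangle inequality for the spectral norm, combined with the elementary bound $|a^s-b^s|\le s\max(a,b)^{s-1}|a-b|$ for $s\ge 1$ (and an analogous inequality for $0<s<1$), yields
\begin{equation*}
|\Phi(\delta,H)| \le s\,\|L\otimes I_d\|\,\bigl(\|I_{Nd}-\eta H\| + |\delta_0|\,\|L\otimes I_d\|\bigr)^{\max(s-1,0)}
\end{equation*}
uniformly in $|\delta|\le \delta_0$. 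Under \textbf{(A1)} the entries of $H$ have all moments, so the dominating variable is integrable, and dominated convergence gives $\mathbb{E}[\Phi(\delta,H)] \to \mathbb{E}[\Phi(0,H)]$, which is precisely the first-order coefficient identified above.

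The main conceptual obstacle is the exchangeability argument in the middle step: one must recognize that even though $i_*$ is a random index coupled with the sign factor, the joint law of $(i_*,\mathrm{sign}(1-\eta\lambda_{j_*(i_*)}(H_{i_*})))$ inherits permutation symmetry from the i.i.d.\ structure of the $H_i$'s, which is what decouples $L_{i_* i_*}$ from the sign and produces the averaged factor $\tfrac{1}{N}\sum_i L_{ii}$. Everything else is routine perturbation theory and dominated convergence.
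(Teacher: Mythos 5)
Your proposal is correct and follows essentially the same route as the paper: take expectations in \eqref{without:expectation-2}, use the exchangeability of the i.i.d.\ blocks $H_1,\dots,H_N$ to decouple $L_{i_*i_*}$ from the sign factor (the paper phrases this as $i_*$ being uniform on $\{1,\dots,N\}$ with the conditional sign expectation independent of the conditioning index), and rewrite the sign expectation as $2\mathbb{P}(\lambda_{j_*(i_*)}(H_{i_*})<1/\eta)-1$ using continuity of the data distribution. Your added dominated-convergence argument for passing the $o(\delta)$ remainder through the expectation is a justification the paper leaves implicit, but it does not change the structure of the proof.
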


In Lemma~\ref{lem:general:d:b}, we obtained the first-order approximation
of $\hat{h}(s)$ in terms of $\hat{h}_{dis}(s)$, a correction term
linear in $\delta$, and a higher-order error term.
Note that
\begin{equation*}
\Vert I_{Nd}-\eta H\Vert
=\max_{1\leq i\leq N}\Vert I_{d}-\eta H_{i}\Vert
=\max_{1\leq i\leq N}\max_{1\leq j\leq d}\left|1-\eta\lambda_{j}(H_{i})\right|,
\end{equation*}
and since $a_{i,j}$ are i.i.d. continuously distributed, 
such that when $b<d$, with probability $1$, the smallest eigenvalue
of $H_{i}$ is $0$ such that $\Vert I_{Nd}-\eta H\Vert\geq 1$
and when $b\geq d$, with probability $1$, the smallest eigenvalue
of $H_{i}$ is greater than $1$, and hence we focus
on the case $b\geq d$ 
and $\hat{\rho}_{dis}:=\mathbb{E}\log\Vert I_{Nd}-\eta H\Vert<0$
so that there exists
a unique positive value $\hat{\alpha}_{dis}$ such that $\hat{h}_{dis}(\hat{\alpha}_{dis})=1$
and for $\delta$ sufficiently small, there exists
a unique positive value $\hat{\alpha}$ such that $\hat{h}(\hat{\alpha})=1$.
Next, we are going to utilize Lemma~\ref{lem:general:d:b} 
to obtain a first-order approximation of the tail-index $\hat{\alpha}$
in terms of the tail-index $\hat{\alpha}_{dis}$, a correction term linear in $\delta$
and a higher-order error term.

\begin{theorem}\label{thm:general:d:b}
Assume $b\geq d$ and $\hat{\rho}_{dis}=\mathbb{E}\log\Vert I_{Nd}-\eta H\Vert<0$. 
As $\delta\rightarrow 0$, we have the first-order expansion:
\begin{equation}\label{expansion:general:d:b}
\hat{\alpha}
=\hat{\alpha}_{dis}
-\frac{\frac{s\delta}{N}\sum_{i=1}^{N}L_{ii}\left(1-2\mathbb{P}\left(\lambda_{j_*(i_{\ast})}(H_{i_*})<1/\eta\right)\right)}{\mathbb{E}\left[\log\left(\left\Vert I_{Nd}-\eta H\right\Vert\right)\left\Vert I_{Nd}-\eta H\right\Vert^{\hat{\alpha}_{dis}}\right]}
+o(\delta).
\end{equation}
\end{theorem}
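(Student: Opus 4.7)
The plan is to treat the defining equation $\hat h(\hat\alpha)=1$ as an implicit equation in the two variables $\delta$ and $\alpha$, then extract the leading-order correction to $\hat\alpha_{dis}$ via the implicit function theorem. Writing $F(\delta,\alpha):=\hat h(\alpha)$ to emphasize the dependence on $\delta$ through the mixing matrix, we have $F(0,\hat\alpha_{dis})=1$ by assumption, and we seek $\alpha(\delta)$ with $F(\delta,\alpha(\delta))=1$ for small $\delta$. This gives
\[
\alpha'(0)=-\frac{\partial_\delta F(0,\hat\alpha_{dis})}{\partial_\alpha F(0,\hat\alpha_{dis})},
\]
which is exactly the content of \eqref{expansion:general:d:b}.

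\textbf{Step 1: the $\delta$-derivative via Lemma~\ref{lem:general:d:b}.} Lemma~\ref{lem:general:d:b} already provides the first-order Taylor expansion of $\hat h(s)$ in $\delta$, yielding
\[
\partial_\delta F(0,s)=-\frac{s\sum_{i=1}^{N}L_{ii}}{N}\bigl(2\mathbb{P}(\lambda_{j_*(i_*)}(H_{i_*})<1/\eta)-1\bigr),
\]
so evaluating at $s=\hat\alpha_{dis}$ gives the numerator appearing in \eqref{expansion:general:d:b} (up to the sign absorbed by the minus sign and rewriting $2p-1=-(1-2p)$).

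\textbf{Step 2: the $\alpha$-derivative.} Since the matrices $\|I_{Nd}-\eta H\|$ have finite moments of all orders under \textbf{(A1)}--\textbf{(A2)}, one can differentiate $\hat h_{dis}(s)=\mathbb{E}\|I_{Nd}-\eta H\|^{s}$ under the expectation (using dominated convergence with $\|I_{Nd}-\eta H\|^{s+\varepsilon}$ as the majorant for $s$ in a small neighborhood of $\hat\alpha_{dis}$) to obtain
\[
\partial_\alpha F(0,\hat\alpha_{dis})=\hat h_{dis}'(\hat\alpha_{dis})=\mathbb{E}\bigl[\log(\|I_{Nd}-\eta H\|)\,\|I_{Nd}-\eta H\|^{\hat\alpha_{dis}}\bigr],
\]
which is precisely the denominator in \eqref{expansion:general:d:b}. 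This quantity is nonzero: under the assumption $\hat\rho_{dis}<0$, $\hat h_{dis}$ is a log-convex function of $s$ with $\hat h_{dis}(0)=1$, $\hat h_{dis}'(0)=\hat\rho_{dis}<0$, and $\hat h_{dis}(\hat\alpha_{dis})=1$, so by strict convexity $\hat h_{dis}'(\hat\alpha_{dis})>0$, ensuring a valid division.

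\textbf{Step 3: implicit function theorem and conclusion.} With the two partials in hand and both $F$ and $\partial_\alpha F$ continuous in $(\delta,\alpha)$ near $(0,\hat\alpha_{dis})$ (the continuity of $\partial_\alpha F$ in $\delta$ again follows from the moment bounds on $\|I_{Nd}-\eta H-\delta(L\otimes I_d)\|$), the implicit function theorem produces a $C^1$ branch $\delta\mapsto\hat\alpha(\delta)$ in a neighborhood of $\delta=0$, with $\hat\alpha(0)=\hat\alpha_{dis}$ and slope given above. Substituting the expressions of Steps 1 and 2 into $\hat\alpha=\hat\alpha_{dis}+\delta\alpha'(0)+o(\delta)$ gives exactly the stated expansion \eqref{expansion:general:d:b}.

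\textbf{Expected main obstacle.} The computational content is routine once Lemma~\ref{lem:general:d:b} is granted; the only genuine subtlety is the regularity justification. One must verify (i) the $o(\delta)$ remainder in Lemma~\ref{lem:general:d:b} is uniform in $s$ on a small neighborhood of $\hat\alpha_{dis}$ (so that differentiating in $s$ after Taylor expansion in $\delta$ is legitimate), and (ii) the perturbation analysis of the largest singular value that underlies Lemma~\ref{lem:general:d:b} remains valid almost surely, i.e. the maximizing pair $(i_*,j_*(i_*))$ is almost surely unique so the singular value is differentiable at $\delta=0$. Both hinge on the continuous-distribution assumption \textbf{(A1)}, which ensures ties among the $|1-\eta\lambda_j(H_i)|$ occur with probability zero and supplies the integrable majorants needed for dominated convergence; once these are in place, the implicit function theorem yields \eqref{expansion:general:d:b} directly.
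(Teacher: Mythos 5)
Your proposal is correct and follows essentially the same route as the paper: the paper also differentiates the defining relation $\hat h(\hat\alpha,\delta)=1$ implicitly in $\delta$, identifies $\partial_s\hat h|_{\delta=0}=\mathbb{E}[\log(\Vert I_{Nd}-\eta H\Vert)\Vert I_{Nd}-\eta H\Vert^{\hat\alpha_{dis}}]$ as the denominator, and plugs in Lemma~\ref{lem:general:d:b} for the $\delta$-derivative. Your additional remarks on the positivity of the denominator (via convexity of $\hat h_{dis}$) and the regularity needed for the implicit function theorem are sound and, if anything, supply justifications the paper leaves implicit.
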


We can see from Theorem~\ref{thm:general:d:b}
that the second term on the right hand side of \eqref{expansion:general:d:b}
is positive if and only if $\mathbb{P}\left(\lambda_{j_*(i_{\ast})}(H_{i_*})<1/\eta\right)<\frac{1}{2}$, and this probability is monotonically decreasing
in $N$ and $\eta$. Therefore, we conclude from Theorem~\ref{thm:general:d:b}
that when $\delta$ is small, 
$\hat{\alpha}<\hat{\alpha}_{dis}$ given the stepsize $\eta$ or network size $N$
is large.
On the other hand, when $\delta$ is small, 
$\hat{\alpha}>\hat{\alpha}_{dis}$ given the stepsize $\eta$ or network size $N$
is small.

Next, let us consider the special case $b\in\mathbb{N}$ and $d=1$, 
under which we are able to obtain a more explicit characterization of
the tail-index.

\begin{theorem}\label{thm:first:order:alpha:general:b}
Assume $d=1$.
Also assume that $\hat{\rho}_{dis}=\mathbb{E}\left[\log\left(\max_{1\leq i\leq N}\left|1-\frac{\eta}{b}\sum_{j=1}^{b}a_{i,j}^{2}\right|\right)\right]<0$.
As $\delta\rightarrow 0$, we have 
\begin{align}\label{alpha:expansion:general:b}
\hat{\alpha}&=\hat{\alpha}_{dis}
-\frac{\frac{s\delta\sum_{i=1}^{N}L_{ii}}{N}\left[1-2\mathbb{P}\left(\min_{1\leq i\leq N}\sum_{j=1}^{b}a_{i,j}^{2}+\max_{1\leq i\leq N}\sum_{j=1}^{b}a_{i,j}^{2}<\frac{2b}{\eta}\right)\right]}{\mathbb{E}\left[\log\left(\max_{1\leq i\leq N}\left|1-\frac{\eta}{b}\sum_{j=1}^{b}a_{i,j}^{2}\right|\right)
\max_{1\leq i\leq N}\left|1-\frac{\eta}{b}\sum_{j=1}^{b}a_{i,j}^{2}\right|^{\hat{\alpha}_{dis}}\right]}+o(\delta),
\end{align}
where
\begin{align}
&\mathbb{P}\left(\min_{1\leq i\leq N}\sum_{j=1}^{b}a_{i,j}^{2}+\max_{1\leq i\leq N}\sum_{j=1}^{b}a_{i,j}^{2}<\frac{2b}{\eta}\right)
\nonumber
\\
&=\left(F_{b}\left(\frac{2b}{\eta}\right)\right)^{N}
-\int_{\frac{b}{\eta}}^{\frac{2b}{\eta}}N\left(F_{b}(y)-F_{b}\left(\frac{b}{\eta}-y\right)\right)^{N-1}f_{b}(y)dy,
\label{numerator:general:b}
\end{align}
and
\begin{align}
&\mathbb{E}\left[\log\left(\max_{1\leq i\leq N}\left|1-\frac{\eta}{b}\sum_{j=1}^{b}a_{i,j}^{2}\right|\right)
\max_{1\leq i\leq N}\left|1-\frac{\eta}{b}\sum_{j=1}^{b}a_{i,j}^{2}\right|^{\hat{\alpha}_{dis}}\right]
\nonumber
\\
&=\int_{0}^{1}\log(x)x^{\hat{\alpha}_{dis}}\frac{bN}{\eta}\left(F_{b}\left(\frac{b(1+x)}{\eta}\right)-F_{b}\left(\frac{b(1-x)}{\eta}\right)\right)^{N-1}
\nonumber
\\
&\qquad\qquad\qquad\qquad\qquad\qquad\cdot\left(f_{b}\left(\frac{b(1+x)}{\eta}\right)+f_{b}\left(\frac{b(1-x)}{\eta}\right)\right)dx
\nonumber
\\
&\qquad\qquad\qquad
+\int_{1}^{\infty}\log(x)x^{\hat{\alpha}_{dis}}\frac{bN}{\eta}\left(F_{b}\left(\frac{b(1+x)}{\eta}\right)\right)^{N-1}
f_{b}\left(\frac{b(1+x)}{\eta}\right)dx,
\label{denominator:general:b}
\end{align}
where $f_{b}$ and $F_{b}$ are the probability density and cumulative distribution functions of $\sum_{j=1}^{b}a_{i,j}^{2}$.
\end{theorem}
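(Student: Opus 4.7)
The plan is to specialize Theorem~\ref{thm:general:d:b} to $d=1$ and then make the two quantities appearing in the correction term explicit. When $d=1$, each $H_i$ reduces to the scalar $H_i = \frac{1}{b}\sum_{j=1}^{b} a_{i,j}^2 =: S_i/b$, so $\Vert I_{Nd}-\eta H\Vert = \max_{1\leq i\leq N} |1-\eta S_i/b|$ and the single eigenvalue of $H_i$ is $\lambda_1(H_i) = S_i/b$. In the notation of Theorem~\ref{thm:general:d:b} we therefore have $\lambda_{j_*(i_*)}(H_{i_*}) = S_{i_*}/b$ where $i_*$ attains the outer maximum. Substituting this into \eqref{expansion:general:d:b} reduces the proof to two computations: (i)~express $\mathbb{P}(\lambda_{j_*(i_*)}(H_{i_*}) < 1/\eta) = \mathbb{P}(S_{i_*} < b/\eta)$ via the joint distribution of $(\min_i S_i, \max_i S_i)$, and (ii)~evaluate $\mathbb{E}[\log \Vert I_{Nd}-\eta H\Vert \cdot \Vert I_{Nd}-\eta H\Vert^{\hat{\alpha}_{dis}}]$ via the density of $M := \max_i |1-\eta S_i/b|$.

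For (i), since $t \mapsto |1-t|$ on the interval $[\eta \min_i S_i/b, \eta \max_i S_i/b]$ is maximized at one of the two endpoints, $i_*$ must coincide with either $\arg\min_i S_i$ or $\arg\max_i S_i$. A short case analysis on where $1$ lies relative to $\eta S_i/b$ shows that $S_{i_*} < b/\eta$ (i.e.\ $i_*$ is the argmin) is equivalent to $\min_i S_i + \max_i S_i < 2b/\eta$. I then invoke the classical joint density of the extreme order statistics of the i.i.d.\ sample $\{S_i\}_{i=1}^N$, namely $f_{S_{(1)},S_{(N)}}(u,v) = N(N-1) f_b(u) f_b(v) [F_b(v)-F_b(u)]^{N-2}$ on $\{0\leq u \leq v\}$, and integrate over $\{u+v < 2b/\eta\}$. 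The inner $u$-integral, after the substitution $w = F_b(v)-F_b(u)$, collapses to a power of $F_b$; splitting the $v$-range at $b/\eta$ (where the $u$-upper-limit transitions from $v$ to $2b/\eta-v$) and combining pieces yields \eqref{numerator:general:b}. For (ii), I compute the CDF of $M$ directly: the event $\{|1-\eta S_i/b|\leq x\}$ equals $\{b(1-x)/\eta \leq S_i \leq b(1+x)/\eta\}$ for $x\in[0,1)$ and $\{S_i \leq b(1+x)/\eta\}$ for $x\geq 1$ (the lower bound being vacuous since $S_i \geq 0$). Independence across $i$ gives $\mathbb{P}(M\leq x)$ as the $N$-th power of an $F_b$-difference on $[0,1)$ and of $F_b(b(1+x)/\eta)$ on $[1,\infty)$; differentiating and multiplying by $\log(x)\, x^{\hat{\alpha}_{dis}}$ before integrating produces precisely the two integrals in \eqref{denominator:general:b}.

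The main technical hurdle is tracking the case splits at $v = b/\eta$ in (i) and at $x = 1$ in (ii), since $F_b$ is supported on $[0,\infty)$: the upper endpoint $2b/\eta - v$ of the inner $u$-integral and the argument $b(1-x)/\eta$ inside $F_b$ in the density of $M$ must be handled so that the piecewise formulas match at the gluing points. Once this is done, the remaining steps are routine applications of order-statistics identities and changes of variable, and the expansion \eqref{alpha:expansion:general:b} follows by substituting the explicit expressions from (i) and (ii) into \eqref{expansion:general:d:b}.
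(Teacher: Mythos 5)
Your proposal is correct and follows essentially the same route as the paper: specialize Theorem~\ref{thm:general:d:b} to $d=1$ (so each $H_i$ is the scalar $\frac{1}{b}\sum_{j=1}^b a_{i,j}^2$), identify the event $\{\lambda_{j_*(i_*)}(H_{i_*})<1/\eta\}$ with $\{\min_i\sum_j a_{i,j}^2+\max_i\sum_j a_{i,j}^2<2b/\eta\}$ via the endpoint case analysis, and make the probability and the expectation explicit through the joint density of the extreme order statistics and the CDF of $\max_i|1-\frac{\eta}{b}\sum_j a_{i,j}^2|$, exactly as in the proof of Theorem~\ref{thm:first:order:alpha}. One remark: your derivation yields the inner integrand $F_b\left(\frac{2b}{\eta}-y\right)$, consistent with the $b=1$ formula \eqref{given:1}, so the $F_b\left(\frac{b}{\eta}-y\right)$ appearing in \eqref{numerator:general:b} is a typo in the printed statement rather than an error on your part.
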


In particular, when $a_{i,j}$ are i.i.d. $\mathcal{N}(0,\sigma^{2})$ distributed,
we have the following corollary.

\begin{corollary}\label{cor:first:order:alpha:general:b}
Under the assumptions in Theorem~\ref{thm:first:order:alpha:general:b}
and further assume that $a_{i,j}$ are i.i.d. $\mathcal{N}(0,\sigma^{2})$ distributed.
As $\delta\rightarrow 0$, we have the first-order expansion \eqref{alpha:expansion:general:b} with
\begin{equation}
f_{b}(x):=\frac{1}{2^{b/2}\Gamma(b/2)}\frac{x^{\frac{b}{2}-1}}{\sigma^{b}}e^{-\frac{x}{2\sigma^{2}}},
\qquad
F_{b}(x):=\frac{1}{\Gamma(b/2)}\gamma\left(\frac{b}{2},\frac{x}{2\sigma^{2}}\right),
\end{equation}
in the expressions~\eqref{numerator:general:b} and \eqref{denominator:general:b}, where $\Gamma(\cdot)$ is the gamma function
and $\gamma(\cdot,\cdot)$ is the lower incomplete gamma function.
\end{corollary}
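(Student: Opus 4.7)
The plan is to obtain this corollary as a direct specialization of Theorem~\ref{thm:first:order:alpha:general:b} to the Gaussian case, so the entire task reduces to identifying, in closed form, the density $f_b$ and cumulative distribution function $F_b$ of the random variable $S_{i,b}:=\sum_{j=1}^{b}a_{i,j}^{2}$ when $a_{i,j}\overset{\text{i.i.d.}}{\sim}\mathcal{N}(0,\sigma^{2})$, and then substituting these expressions into formulas~\eqref{numerator:general:b} and \eqref{denominator:general:b}.

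First, I would reduce to the standard chi-squared distribution. Setting $Z_{i,j}:=a_{i,j}/\sigma$, the $Z_{i,j}$ are i.i.d. $\mathcal{N}(0,1)$, hence by the very definition of the chi-squared distribution one has $\sum_{j=1}^{b}Z_{i,j}^{2}\sim\chi_{b}^{2}$. Scaling back, $S_{i,b}=\sigma^{2}\sum_{j=1}^{b}Z_{i,j}^{2}$ is distributed as $\sigma^{2}\chi_{b}^{2}$.

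Second, I would translate this into the explicit form of $f_b$ and $F_b$ via a routine change of variables. Recalling the standard formulas for the $\chi_{b}^{2}$ density, $f_{\chi_{b}^{2}}(y)=\frac{1}{2^{b/2}\Gamma(b/2)}y^{b/2-1}e^{-y/2}$ for $y>0$, and CDF, $F_{\chi_{b}^{2}}(y)=\frac{1}{\Gamma(b/2)}\gamma(b/2,y/2)$, scaling by $\sigma^{2}$ gives
\begin{equation*}
f_{b}(x)=\frac{1}{\sigma^{2}}f_{\chi_{b}^{2}}(x/\sigma^{2})=\frac{1}{2^{b/2}\Gamma(b/2)}\frac{x^{b/2-1}}{\sigma^{b}}e^{-x/(2\sigma^{2})},\qquad F_{b}(x)=F_{\chi_{b}^{2}}(x/\sigma^{2})=\frac{1}{\Gamma(b/2)}\gamma\!\left(\tfrac{b}{2},\tfrac{x}{2\sigma^{2}}\right),
\end{equation*}
which are exactly the expressions in the statement.

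Third, since the hypotheses of Theorem~\ref{thm:first:order:alpha:general:b} (namely $d=1$ and $\hat{\rho}_{dis}<0$) are assumed to hold, that theorem applies verbatim and yields the first-order expansion~\eqref{alpha:expansion:general:b} with the integrals~\eqref{numerator:general:b} and \eqref{denominator:general:b} expressed in terms of $f_{b},F_{b}$; substituting the explicit Gaussian-case $f_{b}$ and $F_{b}$ derived above into these integrals completes the proof. There is no genuine obstacle here: once the chi-squared identification is made, the remainder is purely a substitution, and the integrals in \eqref{numerator:general:b}--\eqref{denominator:general:b} are left in the integral form given, since they do not admit further elementary closed-form simplification for general $b$ and $N$.
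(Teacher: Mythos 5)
Your proposal is correct and matches the paper's own proof essentially verbatim: identify $\sum_{j=1}^{b}a_{i,j}^{2}$ as $\sigma^{2}\chi^{2}(b)$ in distribution, write out the scaled chi-squared density and CDF, and substitute into Theorem~\ref{thm:first:order:alpha:general:b}. The change-of-variables computation is accurate (in particular the $\sigma^{b}$ factor in the denominator), so no further comment is needed.
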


\subsubsection{General dimension $d$, batch-sizes $b_{i}$ and variances $\sigma_{i}^{2}$}

More generally, we consider the general dimension $d$, batch-sizes $b_{i}$
and variances $\sigma_{i}^{2}$.
It follows from \eqref{without:expectation-2} that
\begin{align}
\left\| I_{Nd} - \eta H - \delta (L\otimes I_d) \right\|^s 
=\left\| I_{Nd} - \eta H \right\|^s - s\delta\cdot \mbox{sign} \left(1-\eta\lambda_{j_*(i_{\ast})}(H_{i_*})\right)  L_{i_* i_*} +  o(\delta)\,,
\end{align}
where $i_*$ is the index that maximizes $\sigma_{\max,i}$ in \eqref{eq-sigma-max} and
$$j_*(i) := \arg \max_{1\leq j\leq d} \left| 1- \eta \lambda_{j}(H_{i})\right|$$
is the index of the eigenvalues of $H_{i_*}$ so that we have 
\beq 
\|I_d - \eta H_{i_*}\| = \left| 1- \eta \lambda_{j_*(i_{\ast})}(H_{i_*}) \right|.
\eeq

\begin{lemma}\label{lem:general:d:b:i}
As $\delta\rightarrow 0$, we have the first-order expansion:
\begin{equation}
\hat{h}(s)
=\hat{h}_{dis}(s)
-s\delta\sum_{i=1}^{N}\mathbb{P}(i_{\ast}=i)L_{ii}\left(2\mathbb{P}\left(\lambda_{j_*(i_{\ast})}(H_{i_*})<1/\eta|i_{\ast}=i\right)-1\right)
+o(\delta),
\end{equation}
where
\begin{align}
\hat{h}(s)=\mathbb{E}\left\| I_{Nd} - \eta H - \delta (L\otimes I_d) \right\|^s,
\qquad
\hat{h}_{dis}(s)=\mathbb{E}\left\| I_{Nd} - \eta H \right\|^s.
\end{align}
\end{lemma}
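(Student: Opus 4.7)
The plan is to take expectation of the pointwise first-order expansion of $\|I_{Nd}-\eta H-\delta(L\otimes I_d)\|^s$ in $\delta$ that has already been derived just before the lemma statement, namely
\[
\|I_{Nd}-\eta H-\delta(L\otimes I_d)\|^s
=\|I_{Nd}-\eta H\|^s-s\delta\cdot\mbox{sign}\!\left(1-\eta\lambda_{j_*(i_*)}(H_{i_*})\right)L_{i_* i_*}+o(\delta),
\]
which is valid almost surely. The first term's expectation is exactly $\hat h_{dis}(s)$, so all the work goes into the expectation of the linear-in-$\delta$ correction.

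Next I would use the tower property, conditioning on the identity of the maximizing node $i_*\in\{1,\dots,N\}$:
\[
\mathbb{E}\!\left[\mbox{sign}\!\left(1-\eta\lambda_{j_*(i_*)}(H_{i_*})\right)L_{i_* i_*}\right]
=\sum_{i=1}^{N}\mathbb{P}(i_*=i)\,L_{ii}\,\mathbb{E}\!\left[\mbox{sign}\!\left(1-\eta\lambda_{j_*(i)}(H_{i})\right)\,\Big|\,i_*=i\right].
\]
Under Assumption~\textbf{(A1)}, the matrix $H_{i}$ has a continuous joint distribution of its eigenvalues, so the event $\{\eta\lambda_{j_*(i)}(H_i)=1\}$ has probability zero. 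Therefore the elementary identity $\mathbb{E}[\mbox{sign}(X)]=2\mathbb{P}(X>0)-1$ applied conditionally yields
\[
\mathbb{E}\!\left[\mbox{sign}\!\left(1-\eta\lambda_{j_*(i)}(H_{i})\right)\,\Big|\,i_*=i\right]
=2\,\mathbb{P}\!\left(\lambda_{j_*(i_*)}(H_{i_*})<1/\eta\,\Big|\,i_*=i\right)-1.
\]
Substituting this back and identifying the negative of the sum with the desired correction term gives the expansion in the statement.

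The main obstacle I anticipate is justifying that the almost sure $o(\delta)$ remainder remains $o(\delta)$ after taking expectation; this requires a uniform-integrability/dominated convergence argument. The clean way is to upper bound $\|I_{Nd}-\eta H-\delta(L\otimes I_d)\|^s\leq(\|I_{Nd}-\eta H\|+\delta\|L\|)^s$, which for $\delta\in[0,1]$ is dominated by $(\|I_{Nd}-\eta H\|+\|L\|)^s$, an integrable envelope by Assumption~\textbf{(A1)} (all moments of $H$ are finite). A parallel bound on the left $\delta$-derivative, essentially $s(\|I_{Nd}-\eta H\|+\|L\|)^{s-1}\|L\|$, is also integrable. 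Together with the a.s.\ uniqueness of the maximizers $i_*$ and $j_*(i_*)$ (ties occur with probability zero by the continuity hypothesis in \textbf{(A1)}), which is what makes the standard perturbation expansion of the spectral norm (as in \cite[Lemma 2.3]{guglielmi2011fast}) applicable pointwise, dominated convergence legitimately promotes the pointwise expansion to one in expectation. The rest is a direct simplification as outlined above.
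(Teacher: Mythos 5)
Your proposal is correct and follows essentially the same route as the paper: take expectations of the pointwise perturbation expansion, condition on the identity of the maximizing node $i_*$ via the tower property, and convert the conditional expectation of the sign into $2\mathbb{P}(\lambda_{j_*(i_*)}(H_{i_*})<1/\eta\,|\,i_*=i)-1$. Your added dominated-convergence justification for passing the almost-sure $o(\delta)$ remainder through the expectation is a point the paper leaves implicit, but it does not change the argument.
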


In Lemma~\ref{lem:general:d:b:i}, we obtained the first-order approximation
of $\hat{h}(s)$ in terms of $\hat{h}_{dis}(s)$, a correction term
linear in $\delta$, and a higher-order error term.
Note that
\begin{equation*}
\Vert I_{Nd}-\eta H\Vert
=\max_{1\leq i\leq N}\Vert I_{d}-\eta H_{i}\Vert
=\max_{1\leq i\leq N}\max_{1\leq j\leq d}\left|1-\eta\lambda_{j}(H_{i})\right|,
\end{equation*}
and since $a_{i,j}$ are continuously distributed, 
such that when $b_{i}<d$, with probability $1$, the smallest eigenvalue
of $H_{i}$ is $0$ such that $\Vert I_{Nd}-\eta H\Vert\geq 1$
and when $b_{i}\geq d$, with probability $1$, the smallest eigenvalue
of $H_{i}$ is greater than $1$, and hence we focus
on the case $b_{i}\geq d$ for every $i$ 
and $\hat{\rho}_{dis}:=\mathbb{E}\log\Vert I_{Nd}-\eta H\Vert<0$
so that there exists
a unique positive value $\hat{\alpha}_{dis}$ such that $\hat{h}_{dis}(\hat{\alpha}_{dis})=1$
and for $\delta$ sufficiently small, there exists
a unique positive value $\hat{\alpha}$ such that $\hat{h}(\hat{\alpha})=1$.
Next, we are going to utilize Lemma~\ref{lem:general:d:b:i} 
to obtain a first-order approximation of the tail-index $\hat{\alpha}$
in terms of the tail-index $\hat{\alpha}_{dis}$, a correction term linear in $\delta$
and a higher-order error term.

\begin{theorem}\label{thm:general:d:b:i}
Assume $b_{i}\geq d$ for every $i=1,2,\ldots,N$ and $\hat{\rho}_{dis}=\mathbb{E}\log\Vert I_{Nd}-\eta H\Vert<0$. 
As $\delta\rightarrow 0$, we have the first-order expansion:
\begin{equation}\label{expansion:general:d:b:i}
\hat{\alpha}
=\hat{\alpha}_{dis}
-\frac{s\delta\sum_{i=1}^{N}\mathbb{P}(i_{\ast}=i)L_{ii}\left(2\mathbb{P}\left(\lambda_{j_*(i_{\ast})}(H_{i_*})<1/\eta|i_{\ast}=i\right)-1\right)}{\mathbb{E}\left[\log\left(\left\Vert I_{Nd}-\eta H\right\Vert\right)\left\Vert I_{Nd}-\eta H\right\Vert^{\hat{\alpha}_{dis}}\right]}
+o(\delta).
\end{equation}
\end{theorem}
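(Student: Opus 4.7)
The plan is to combine Lemma~\ref{lem:general:d:b:i}, which provides a first-order expansion of $\hat{h}(s)$ in $\delta$, with an implicit-function-theorem style perturbation argument applied to the defining equation $\hat{h}(\hat{\alpha}(\delta))=1$. This mirrors the strategy used to derive Theorem~\ref{thm:first:order:alpha} and Theorem~\ref{thm:general:d:b} in the simpler settings, with the generalization here being to heterogeneous batch sizes $b_{i}$ and variances $\sigma_{i}^{2}$, so that the index $i_{\ast}$ is no longer symmetric across nodes and the probability $\mathbb{P}(i_{\ast}=i)$ must be retained as a separate weight on each $L_{ii}$.

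First I would invoke Lemma~\ref{lem:general:d:b:i} to write
\[
\hat{h}(s)=\hat{h}_{dis}(s)-s\delta\,Q(s)+o(\delta),
\]
where $Q(s):=\sum_{i=1}^{N}\mathbb{P}(i_{\ast}=i)L_{ii}\bigl(2\mathbb{P}(\lambda_{j_{*}(i_{*})}(H_{i_{*}})<1/\eta\mid i_{*}=i)-1\bigr)$, with the $o(\delta)$ remainder uniform in $s$ on a neighborhood of $\hat{\alpha}_{dis}$. Since $\hat{\rho}_{dis}<0$, the function $\hat{h}_{dis}$ is strictly convex in $s$ (because $s\mapsto\log\hat{h}_{dis}(s)$ is convex by Holder's inequality), it satisfies $\hat{h}_{dis}(0)=\hat{h}_{dis}(\hat{\alpha}_{dis})=1$ and $\hat{h}_{dis}'(0)=\hat{\rho}_{dis}<0$, hence $\hat{h}_{dis}'(\hat{\alpha}_{dis})>0$. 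This positivity is the key nondegeneracy condition that permits applying the implicit function theorem to $\hat{h}(\hat{\alpha})=1$ in a neighborhood of $\delta=0$ and guarantees that $\hat{\alpha}(\delta)$ is differentiable at $0$.

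I would then posit the ansatz $\hat{\alpha}(\delta)=\hat{\alpha}_{dis}+c\delta+o(\delta)$, substitute into $\hat{h}(\hat{\alpha}(\delta))=1$, and Taylor expand both the $\hat{h}_{dis}$ contribution and the correction term to first order in $\delta$:
\[
\hat{h}_{dis}(\hat{\alpha}_{dis})+c\delta\,\hat{h}_{dis}'(\hat{\alpha}_{dis})-\hat{\alpha}_{dis}\delta\,Q(\hat{\alpha}_{dis})+o(\delta)=1.
\]
Since $\hat{h}_{dis}(\hat{\alpha}_{dis})=1$, matching the $\delta$-coefficient determines $c=\hat{\alpha}_{dis}Q(\hat{\alpha}_{dis})/\hat{h}_{dis}'(\hat{\alpha}_{dis})$. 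Differentiating $\hat{h}_{dis}(s)=\mathbb{E}\|I_{Nd}-\eta H\|^{s}$ under the expectation gives $\hat{h}_{dis}'(\hat{\alpha}_{dis})=\mathbb{E}[\log(\|I_{Nd}-\eta H\|)\,\|I_{Nd}-\eta H\|^{\hat{\alpha}_{dis}}]$, which is exactly the denominator appearing in the claim. Rearranging yields \eqref{expansion:general:d:b:i} (up to the sign convention used in the statement, which is inherited from the way the correction term in Lemma~\ref{lem:general:d:b:i} is written).

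The main obstacle I anticipate is justifying the analytic regularity underlying (i) the differentiation under the expectation in computing $\hat{h}_{dis}'(\hat{\alpha}_{dis})$ and (ii) the uniformity of the $o(\delta)$ remainder in Lemma~\ref{lem:general:d:b:i} over $s$ in a neighborhood of $\hat{\alpha}_{dis}$. Both hinge on assumption (A1), which guarantees $\mathbb{E}\|I_{Nd}-\eta H\|^{p}<\infty$ for every $p>0$, supplying integrable envelopes needed for dominated convergence. A secondary subtlety is that the standard perturbation expansion of the operator $2$-norm used in Lemma~\ref{lem:general:d:b:i} assumes that the largest singular value of $I_{Nd}-\eta H$ is simple almost surely; this holds under (A1) because the $a_{i,j}$ have continuous distributions, so ties among singular values occur on a set of measure zero and the singular vectors $u,v$ that determine $i_{\ast}$ and $j_{*}(i_{*})$ are almost surely well-defined up to sign.
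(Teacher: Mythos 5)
Your proposal is correct and follows essentially the same route as the paper: the paper's proof implicitly differentiates $\hat h(\hat\alpha(\delta),\delta)=1$ at $\delta=0$, taking $\partial_\delta\hat h|_{\delta=0}$ from Lemma~\ref{lem:general:d:b:i} and $\partial_s\hat h|_{\delta=0}=\mathbb{E}\left[\log\left(\Vert I_{Nd}-\eta H\Vert\right)\Vert I_{Nd}-\eta H\Vert^{\hat\alpha_{dis}}\right]$, which is exactly your ansatz-and-coefficient-matching computation (your added remarks on convexity of $\hat h_{dis}$, positivity of $\hat h_{dis}'(\hat\alpha_{dis})$, and the dominated-convergence justifications go beyond what the paper records and are sound). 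The sign you obtain is the internally consistent one --- it agrees with Theorem~\ref{thm:general:d:b} and Theorem~\ref{thm:first:order:alpha}, whose numerators carry $\left(1-2\mathbb{P}(\cdot)\right)$ rather than $\left(2\mathbb{P}(\cdot)-1\right)$ --- so the discrepancy you flag is a sign typo in the statement of \eqref{expansion:general:d:b:i}, not a gap in your derivation.
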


Next, let us consider the special case $d=1$, under which we are able to obtain
a more explicit characterization of the tail-index

\begin{theorem}\label{thm:first:order:alpha:general:b:i}
Assume $d=1$.
Also assume that 
$\hat{\rho}_{dis}=\mathbb{E}\log(\max_{1\leq i\leq N}|1-\eta X_{i}|)<0$, 
where $X_{i}:=\frac{1}{b_{i}}\sum_{j=1}^{b_{i}}a_{i,j}^{2}$.
As $\delta\rightarrow 0$, we have 
\begin{align}\label{alpha:expansion:general:b:i}
\hat{\alpha}&=\hat{\alpha}_{dis}
-\frac{s\delta\sum_{i=1}^{N}L_{ii}\mathbb{P}\left(X_{i}<\min_{k\neq i}X_{k},X_{i}+\max_{k\neq i}X_{k}<\frac{2}{\eta}\right)}{\mathbb{E}\left[\log\left(\max_{1\leq i\leq N}\left|1-\eta X_{i}\right|\right)
\max_{1\leq i\leq N}\left|1-\eta X_{i}\right|^{\hat{\alpha}_{dis}}\right]}
\nonumber
\\
&\qquad\qquad
+\frac{s\delta\sum_{i=1}^{N}L_{ii}\mathbb{P}\left(X_{i}>\max_{k\neq i}X_{k},X_{i}+\min_{k\neq i}X_{k}>\frac{2}{\eta}\right)}{\mathbb{E}\left[\log\left(\max_{1\leq i\leq N}\left|1-\eta X_{i}\right|\right)
\max_{1\leq i\leq N}\left|1-\eta X_{i}\right|^{\hat{\alpha}_{dis}}\right]}+o(\delta),
\end{align}
where
\begin{align*}
&\mathbb{P}\left(X_{i}<\min_{k\neq i}X_{k},X_{i}+\max_{k\neq i}X_{k}<\frac{2}{\eta}\right)
\\
&\qquad=\int_{0}^{\frac{2}{\eta}}\int_{x}^{\frac{2}{\eta}}F_{i}\left(\min\left(x,\frac{2}{\eta}-y\right)\right)\sum_{k\neq i}\sum_{j\neq k,i}f_{k}(y)f_{j}(x)\prod_{\ell\neq j,k,i}(F_{\ell}(y)-F_{\ell}(x))dydx,
\\
&\mathbb{P}\left(X_{i}>\max_{k\neq i}X_{k},X_{i}+\min_{k\neq i}X_{k}>\frac{2}{\eta}\right)
\\
&\qquad
=\sum_{i=1}^{N}L_{ii}\int_{0}^{\infty}\int_{x}^{\infty}\left(1-F_{i}\left(\max\left(y,\frac{2}{\eta}-x\right)\right)\right)
\\
&\qquad\qquad\qquad\qquad\qquad\cdot\sum_{k\neq i}\sum_{j\neq k,i}f_{k}(y)f_{j}(x)\prod_{\ell\neq j,k,i}(F_{\ell}(y)-F_{\ell}(x))dydx,
\end{align*}
and
\begin{align}
&\mathbb{E}\left[\log\left(\max_{1\leq i\leq N}\left|1-\eta X_{i}\right|\right)
\max_{1\leq i\leq N}\left|1-\eta X_{i}\right|^{\hat{\alpha}_{dis}}\right]
\nonumber
\\
&=\int_{0}^{1}\log(x)x^{\hat{\alpha}_{dis}}\sum_{i=1}^{N}
\left(f_{i}\left(\frac{1+x}{\eta}\right)+f_{i}\left(\frac{1-x}{\eta}\right)\right)
\prod_{k\neq i}\left(F_{k}\left(\frac{1+x}{\eta}\right)-F_{k}\left(\frac{1-x}{\eta}\right)\right)dx
\nonumber
\\
&\qquad
+\int_{1}^{\infty}\log(x)x^{\hat{\alpha}_{dis}}\sum_{i=1}^{N}f_{i}\left(\frac{1+x}{\eta}\right)\prod_{k\neq i}^{N}F_{k}\left(\frac{1+x}{\eta}\right)dx,
\end{align}
where $f_{i},F_{i}$ are the probability density and probability distribution functions of $X_{i}$.
\end{theorem}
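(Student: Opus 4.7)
The plan is to specialize Theorem~\ref{thm:general:d:b:i} to the scalar case $d=1$ and then evaluate the resulting abstract probabilities and expectation explicitly. First I would observe that when $d=1$ each local Hessian estimate $H_i$ collapses to the scalar $X_i = \frac{1}{b_i}\sum_{j=1}^{b_i} a_{i,j}^2$, so $\|I_{Nd}-\eta H\| = \max_{1\leq i\leq N}|1-\eta X_i|$, and the only eigenvalue $\lambda_{j_*(i_*)}(H_{i_*})$ is simply $X_{i_*}$. Consequently, the condition $\lambda_{j_*(i_*)}(H_{i_*})<1/\eta$ in Theorem~\ref{thm:general:d:b:i} becomes $X_{i_*}<1/\eta$. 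The hypothesis $b_i \geq d=1$ is automatic, and the hypothesis $\hat\rho_{dis}<0$ is precisely the given one.

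Next I would rewrite the correction term appearing in Theorem~\ref{thm:general:d:b:i} as
\begin{equation*}
\mathbb{P}(i_*=i)\bigl(2\mathbb{P}(X_i<1/\eta\mid i_*=i)-1\bigr) = \mathbb{P}(i_*=i,\,X_i<1/\eta) - \mathbb{P}(i_*=i,\,X_i>1/\eta).
\end{equation*}
The key step is to identify these two events explicitly. On $\{X_i<1/\eta\}$ we have $|1-\eta X_i|=1-\eta X_i$, and the defining inequality $|1-\eta X_i|>|1-\eta X_k|$ splits into two sub-cases: if $X_k<1/\eta$ it is equivalent to $X_k>X_i$, and if $X_k>1/\eta$ it is equivalent to $X_i+X_k<2/\eta$. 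Both reduce uniformly to the conjunction $X_k>X_i$ and $X_k<2/\eta-X_i$. Imposing this for every $k\neq i$ yields
\begin{equation*}
\{i_*=i,\,X_i<1/\eta\}=\bigl\{X_i<\min\nolimits_{k\neq i}X_k,\; X_i+\max\nolimits_{k\neq i}X_k<2/\eta\bigr\},
\end{equation*}
and a fully symmetric argument on $\{X_i>1/\eta\}$ gives the analogous characterization with min/max swapped and the inequality reversed. Summing over $i$ with weight $L_{ii}$ produces the numerator of \eqref{alpha:expansion:general:b:i}.

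To obtain the closed-form integrals I would condition on the pair of order statistics $(\min_{k\neq i}X_k,\max_{k\neq i}X_k)=(x,y)$. Because the $X_k$ are independent but not identically distributed (they depend on $b_k$), the joint density of this pair at $(x,y)$ with $x\leq y$ is
\begin{equation*}
\sum_{k\neq i}\sum_{j\neq k,i} f_j(x)f_k(y)\prod_{\ell\neq i,j,k}\bigl(F_\ell(y)-F_\ell(x)\bigr),
\end{equation*}
a standard order-statistic identity (the double sum picks the indices of the min $j$ and the max $k$, and the product forces all remaining $X_\ell$ into $(x,y)$). Multiplying by $\mathbb{P}(X_i<\min(x,2/\eta-y))=F_i(\min(x,2/\eta-y))$ and integrating over $x\leq y\leq 2/\eta$ produces the first explicit formula; the second follows symmetrically using $1-F_i(\max(y,2/\eta-x))$ and integrating over $x\leq y$.

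Finally, for the denominator I would compute the distribution of $Y:=\max_i|1-\eta X_i|$. Since $|1-\eta X_i|\leq y$ is $(1-y)/\eta \leq X_i\leq (1+y)/\eta$ (capped at $X_i\geq 0$), independence gives $\mathbb{P}(Y\leq y)=\prod_i\bigl[F_i((1+y)/\eta)-F_i((1-y)/\eta)\bigr]$ for $y\in[0,1]$ and $\prod_i F_i((1+y)/\eta)$ for $y>1$. Differentiating via the product rule and integrating $(\log y)y^{\hat\alpha_{dis}}$ against the density yields the two integrals displayed in the theorem after a change of variables absorbing the factor $1/\eta$. The main obstacle is the case analysis in the second paragraph: the maximization over $|1-\eta\,\cdot|$ behaves qualitatively differently on the two sides of $1/\eta$, and one must verify that the union of sub-cases telescopes into the clean conditions on the min and max. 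A secondary difficulty is the careful index bookkeeping in the non-i.i.d.\ order-statistic density, which must be handled without collapsing to the exchangeable formula of Theorem~\ref{thm:first:order:alpha:general:b}.
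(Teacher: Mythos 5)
Your proposal is correct and follows essentially the same route as the paper: specialize the general first-order expansion to $d=1$, identify the events $\{i_*=i,\,X_{i_*}\lessgtr 1/\eta\}$ with the min/max conditions (your per-$k$ case analysis is a minor reorganization of the paper's argument that $i_*$ must be the argmin or argmax and that the sign is determined by $\min+\max \lessgtr 2/\eta$), then integrate against the non-i.i.d.\ order-statistic density of $(\min_{k\neq i}X_k,\max_{k\neq i}X_k)$ and differentiate the product-form CDF of $\max_i|1-\eta X_i|$ for the denominator. No gaps.
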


In particular, when for every $i$, $a_{i,j}$ are i.i.d. $\mathcal{N}(0,\sigma_{i}^{2})$ distributed,
we have the following corollary.

\begin{corollary}\label{cor:first:order:alpha:general:b:i}
Under the assumptions in Theorem~\ref{thm:first:order:alpha:general:b:i}
and further assume that 
for every $i$, $a_{i,j}$ are i.i.d. $\mathcal{N}(0,\sigma_{i}^{2})$ distributed.
As $\delta\rightarrow 0$, we have the first-order expansion \eqref{alpha:expansion:general:b:i} with
\begin{equation}
f_{i}(x):=\frac{1}{2^{b_{i}/2}\Gamma(b_{i}/2)}\frac{x^{\frac{b_{i}}{2}-1}}{(\sigma_{i}^{2}/b_{i})^{b_{i}/2}}e^{-\frac{x}{2(\sigma_{i}^{2}/b_{i})}},
\qquad
F_{i}(x):=\frac{1}{\Gamma(b_{i}/2)}\gamma\left(\frac{b_{i}}{2},\frac{b_{i}x}{2\sigma_{i}^{2}}\right),
\end{equation}
where $\Gamma(\cdot)$ is the gamma function
and $\gamma(\cdot,\cdot)$ is the lower incomplete gamma function.
\end{corollary}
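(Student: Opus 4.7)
The plan is to derive the corollary as a direct specialization of Theorem~\ref{thm:first:order:alpha:general:b:i}, which already expresses the first-order expansion of $\hat{\alpha}$ in terms of the marginal density $f_i$ and marginal CDF $F_i$ of the random variables $X_i := b_i^{-1}\sum_{j=1}^{b_i}a_{i,j}^{2}$. So essentially all the work has been done; what remains is to identify $f_i$ and $F_i$ in closed form under the additional Gaussian assumption.

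First I would note that under the assumption $a_{i,j}\sim\mathcal{N}(0,\sigma_i^2)$ i.i.d.\ over $j$, the standardized variables $a_{i,j}/\sigma_i$ are i.i.d.\ $\mathcal{N}(0,1)$, so $\sigma_i^{-2}\sum_{j=1}^{b_i}a_{i,j}^{2}$ follows a chi-squared distribution with $b_i$ degrees of freedom. Consequently $X_i$ is a scaled chi-squared random variable, namely $X_i \stackrel{d}{=} (\sigma_i^2/b_i)\,\chi^2_{b_i}$.

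Next I would use the well-known density $\frac{1}{2^{b_i/2}\Gamma(b_i/2)}y^{b_i/2-1}e^{-y/2}$ of $\chi^2_{b_i}$ and apply the change-of-variables formula with multiplicative scaling constant $c_i := \sigma_i^2/b_i$. This yields
\begin{equation*}
f_{i}(x) = \frac{1}{c_i}\,\frac{1}{2^{b_i/2}\Gamma(b_i/2)}\Bigl(\frac{x}{c_i}\Bigr)^{b_i/2-1}e^{-x/(2c_i)} = \frac{1}{2^{b_i/2}\Gamma(b_i/2)}\,\frac{x^{b_i/2-1}}{(\sigma_i^2/b_i)^{b_i/2}}\,e^{-x/(2\sigma_i^2/b_i)},
\end{equation*}
matching the claimed formula. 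The CDF follows analogously: using the standard expression $\frac{1}{\Gamma(b_i/2)}\gamma(b_i/2,y/2)$ for the $\chi^2_{b_i}$ CDF and substituting $y = x/c_i = b_i x/\sigma_i^2$ produces the claimed $F_i(x) = \frac{1}{\Gamma(b_i/2)}\gamma\bigl(b_i/2,\,b_i x/(2\sigma_i^2)\bigr)$.

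Finally I would substitute these $f_i$ and $F_i$ into the first-order expansion \eqref{alpha:expansion:general:b:i} of Theorem~\ref{thm:first:order:alpha:general:b:i}, which yields the corollary. There is no real obstacle: once the scaled chi-squared identification is made, the rest is bookkeeping, and the only minor pitfall is being careful with the scaling factor $\sigma_i^2/b_i$ (as opposed to $\sigma_i^2$) that arises from the $1/b_i$ normalization inside the definition of $X_i$.
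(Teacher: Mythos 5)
Your proposal is correct and follows the same route as the paper's own proof: identify $X_i=\frac{1}{b_i}\sum_{j=1}^{b_i}a_{i,j}^2$ as $(\sigma_i^2/b_i)\chi^2(b_i)$ in distribution, read off the scaled chi-squared density and CDF, and plug them into Theorem~\ref{thm:first:order:alpha:general:b:i}. Your explicit change-of-variables computation with the scaling constant $\sigma_i^2/b_i$ is exactly the bookkeeping the paper performs.
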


\subsection{Tail-Index Comparison between Decentralized SGD and Centralized SGD: The General Case}

In this section, we are interested in comparing the tail-index
for the DE-SGD with that of the C-SGD for the general dimension case
where we assume $b_{i}\equiv b$ and $\sigma_{i}\equiv\sigma$. 

In Proposition~\ref{prop:disconnected:centralized}, we showed that
the tail-index for Dis-SGD is smaller than that
of the C-SGD. 
In Theorem~\ref{thm:general:d:b}
we showed that when $\delta$ is small, 
the tail-index for the DE-SGD is small
than that of the Dis-SGD given the stepsize $\eta$ or network size $N$
is large.
Therefore, we have the following corollary 
that compares the tail-index
for the DE-SGD with that of the C-SGD.

\begin{corollary}\label{cor:DSGD:CSGD:general}
Under the assumptions in Theorem~\ref{thm:general:d:b},
the tail-index of the decentralized SGD is smaller
than that of the centralized SGD provided that the stepsize $\eta$ or network size $N$
is large and $\delta$ is small.
\end{corollary}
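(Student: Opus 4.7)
The plan is to derive this corollary by simply chaining the two tail-index comparisons already established in the paper, since no new analytical machinery is required beyond what Proposition~\ref{prop:disconnected:centralized} and Theorem~\ref{thm:general:d:b} provide. Writing $\hat{\alpha}_{\mathrm{DE}}$, $\hat{\alpha}_{\mathrm{dis}}$, $\hat{\alpha}_{\mathrm{C}}$ for the three tail-index proxies (computed as the unique positive roots of $\hat h(s)=1$ in each respective setting), the target inequality $\hat{\alpha}_{\mathrm{DE}} < \hat{\alpha}_{\mathrm{C}}$ will follow by transitivity from $\hat{\alpha}_{\mathrm{DE}} < \hat{\alpha}_{\mathrm{dis}}$ and $\hat{\alpha}_{\mathrm{dis}} < \hat{\alpha}_{\mathrm{C}}$.

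First I would invoke Proposition~\ref{prop:disconnected:centralized}, which (relying on the monotonicity result extended from \cite[Theorem~4]{ht_sgd_quad} together with the observation that Dis-SGD uses batch size $b$ per node while C-SGD uses the aggregated batch size $bN$) gives $\hat{\alpha}_{\mathrm{dis}} < \hat{\alpha}_{\mathrm{C}}$, with the gap growing in $N$. Since $a_{i,j}$ are i.i.d.\ across $i$ and the local Dis-SGD iterates are independent, the tail-index proxy of the concatenated Dis-SGD iterate coincides with the per-node value, so this comparison is stated at the same level as $\hat{\alpha}_{\mathrm{DE}}$.

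Next I would invoke Theorem~\ref{thm:general:d:b}, whose first-order expansion \eqref{expansion:general:d:b} makes $\hat{\alpha}_{\mathrm{DE}} - \hat{\alpha}_{\mathrm{dis}}$ a constant multiple of $\delta$ times the factor
\[
\bigl(2\,\mathbb{P}(\lambda_{j_\ast(i_\ast)}(H_{i_\ast}) < 1/\eta) - 1\bigr)\,\frac{1}{N}\sum_{i=1}^N L_{ii},
\]
divided by the quantity $\mathbb{E}\bigl[\log(\|I_{Nd}-\eta H\|)\,\|I_{Nd}-\eta H\|^{\hat{\alpha}_{\mathrm{dis}}}\bigr]$. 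As discussed just after Theorem~\ref{thm:general:d:b}, the probability $\mathbb{P}(\lambda_{j_\ast(i_\ast)}(H_{i_\ast}) < 1/\eta)$ is monotonically decreasing in both $N$ and $\eta$, so in the large-$N$ or large-$\eta$ regime it falls below $1/2$, making the bracketed factor negative. The denominator is positive because convexity of $s\mapsto \hat h_{\mathrm{dis}}(s)$ with $\hat h_{\mathrm{dis}}(0)=1$ and $\hat\rho_{\mathrm{dis}}<0$ forces $\hat h_{\mathrm{dis}}'(\hat{\alpha}_{\mathrm{dis}}) > 0$. Hence for $\delta$ small enough, $\hat{\alpha}_{\mathrm{DE}} < \hat{\alpha}_{\mathrm{dis}}$, and the absorption of the $o(\delta)$ remainder into the strict inequality is routine.

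Chaining these two inequalities yields $\hat{\alpha}_{\mathrm{DE}} < \hat{\alpha}_{\mathrm{dis}} < \hat{\alpha}_{\mathrm{C}}$, which is the claim. The only delicate step is showing that the parameter regime is non-vacuous: one must exhibit $(\eta,N,\delta)$ with $b\ge d$, $\hat{\rho}_{\mathrm{dis}}<0$ (so that $\hat{\alpha}_{\mathrm{dis}}$ is well defined), $\eta$ or $N$ large enough to make the bracketed factor above negative, and $\delta$ small enough for the first-order expansion to dominate the $o(\delta)$ term. That such a regime exists is precisely the content of the stability discussion following Theorem~\ref{thm:general:d:b}, so no new parameter-tuning argument is needed and the corollary follows.
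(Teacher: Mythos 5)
Your proposal is correct and follows essentially the same route as the paper, whose proof is simply the one-line observation that the corollary follows by chaining Proposition~\ref{prop:disconnected:centralized} with Theorem~\ref{thm:general:d:b}. The extra detail you supply (sign of the first-order correction, positivity of the denominator via convexity of $\hat h_{dis}$, absorption of the $o(\delta)$ term) matches the discussion the paper gives around Theorem~\ref{thm:general:d:b} and Corollary~\ref{cor:first:order:comparison}.
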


On the other hand, we showed in Theorem~\ref{thm:general:d:b} when $\delta$ is small, 
the tail-index for the DE-SGD is larger
than that of the Dis-SGD given the stepsize $\eta$ or network size $N$ is small.
Therefore, our theory (Proposition~\ref{prop:disconnected:centralized} and Theorem~\ref{thm:general:d:b})
does not provide a guidance to the comparison of the tail-indexes 
between DE-SGD and C-SGD in this regime.
However, in the numerical experiments section, we have observed that DE-SGD lead to heavier tails than C-SGD.

\section{Technical Proofs}

\subsection{Proof of Results in Section~\ref{sec-least-squares}}

\subsubsection{Proof of Proposition~\ref{prop:ht_noncvx}} 
We will first show that under the assumptions of Proposition~\ref{prop:ht_noncvx}, the function $F_{\mathcal{W}}$ is strongly convex outside a compact region, and the result will follow from \citet{hodgkinson2020multiplicative}.

By assumption, the function $x_i \mapsto f_i(x_i)$ is $\mu_i$-strongly convex outside a compact region for some $\mu_i>0$, therefore there exists $R_i>0$ such that the Hessian matrix $\nabla^2_{x_i x_i} f_i(x_i) \succeq \mu_i I$ whenever $\|x_i\|\geq R_i$. For any given $c \in (0, \frac{1}{N})$, consider the set
\begin{align*}
\mathcal{S}_c &= \Bigg\{ 
x:=\left[\left(x_{1}\right)^{T},\left(x_{2}\right)^{T},\ldots,\left(x_{N}\right)^{T}\right]^{T}\in\mathbb{R}^{Nd},
\\
&\qquad\qquad\qquad\qquad
\min_{1\leq i\leq N} \|x_i\|^2  > c  \left(\|x_1\|^2 + \|x_2\|^2  + \dots + \|x_N\|^2\right)
\Bigg\}\,.
\end{align*}

Notice that 
\begin{align*}
\nabla^2_{xx} F_\mathcal{W}(x) 
&= \nabla^2_{xx} F(x) + \frac{1}{\eta} (I_{Nd} - \mathcal{W})  
\\
&=\mbox{blkdiag}\big(\nabla^2_{x_1 x_1} f_1(x_1),\nabla^2_{x_2 x_2} f_2(x_2), \dots, \nabla^2_{x_N x_N} f_N(x_N)\big) + \frac{1}{\eta} (I_{Nd} - \mathcal{W}). 
\end{align*}

For $x\in \mathcal{S}_c$ and $\|x\|\geq R/\sqrt{c}$ with $R:=\max_{1\leq i\leq N} R_i$, we have $\|x_i\|> R$ for every $i=1,2,\dots,N$ by the definition of $\mathcal{S}_c$. Therefore, 
$$x^T \nabla_{xx}^2 F_{\mathcal{W}}(x) x \geq \sum_{i=1}^N x_i^2\left( \nabla^2_{x_i x_i} F(x_i)\right) x_i^2
\geq \mu \sum_{i=1}^N \|x_i\|^2 = \mu \|x\|^2\,, $$
with $\mu=\min_i \mu_i$ where we used the facts that $f_i$ is $\mu_i$- strongly convex outside the compact ball of radius $R$ and we have $x^T (I_{Nd}-\mathcal{W})x \geq 0$ for any $x$. 

It remains to show that for $x\not\in\mathcal{S}_c$ and $\|x\|\geq R/\sqrt{c}$, strong convexity also holds; i.e. 
\begin{eqnarray} \nabla^2_{xx} F_\mathcal{W}(x)  \geq c_0 \|x\|^2\,,
\label{ineq-to-prove-2}
\end{eqnarray}
for some $c_0 > 0$. Towards this direction, we start with a technical lemma.

\begin{lemma} \label{eq-set-Sc-bound}
For $x\not\in \mathcal{S}_c$, we have 
\begin{equation} x^T( I_{Nd}-\mathcal{W}) x \geq c_2 (1 - \lambda_2(W))\|x\|^2\,,
\label{ineq-to-prove}
\end{equation}
for some $c_2 > 0$. 
\end{lemma}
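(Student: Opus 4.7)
The plan is to exploit the Kronecker product structure of $\mathcal{W} = W \otimes I_d$ and reduce the bound to a compactness/continuity argument on the unit sphere.

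First, I would decompose $x \in \mathbb{R}^{Nd}$ orthogonally as $x = x_\parallel + x_\perp$, where $x_\parallel := (\bar x,\ldots,\bar x)^T$ with $\bar x := \frac{1}{N}\sum_{i=1}^N x_i \in \mathbb{R}^d$, and $x_\perp := x - x_\parallel$. The $d$-dimensional subspace $V := \{(y,\ldots,y)^T : y \in \mathbb{R}^d\}$ is exactly the kernel of $I_{Nd} - \mathcal{W}$, and $x_\parallel$ is the orthogonal projection onto $V$. Since $W$ is symmetric doubly stochastic with the eigenvalue $1$ simple, the smallest positive eigenvalue of $I_N - W$ is $1 - \lambda_2(W)$; applying this coordinate-wise (equivalently, using that $W \otimes I_d$ has eigenvalues $\lambda_k(W)$ each with multiplicity $d$) yields
\[
x^T(I_{Nd} - \mathcal{W}) x \;=\; x_\perp^T (I_{Nd} - \mathcal{W}) x_\perp \;\geq\; (1 - \lambda_2(W))\,\|x_\perp\|^2.
\]
Since $\|x_\perp\|^2 = \sum_{i=1}^N \|x_i - \bar x\|^2 = \|x\|^2 - N\|\bar x\|^2$, the lemma reduces to producing a constant $c_3 > 0$ depending only on $c$ and $N$ such that $\|x_\perp\|^2 \geq c_3 \|x\|^2$ for every $x \notin \mathcal{S}_c$; then $c_2 := c_3$ works.

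For this reduced inequality, I would run a compactness argument. Both sides are homogeneous of degree two in $x$ and $\mathcal{S}_c$ is a cone, so it suffices to prove the bound on the unit sphere. The set $K := \{x \in \mathbb{R}^{Nd} : \|x\|=1,\; \min_{1 \leq i \leq N} \|x_i\|^2 \leq c\}$ is closed and bounded, hence compact, and the function $\phi(x) := \|x_\perp\|^2$ is continuous, so it attains a minimum on $K$. It remains to show that $\phi > 0$ everywhere on $K$. If $\phi(x) = 0$ for some $x \in K$, then $x \in V$, i.e.\ $x_i = \bar x$ for every $i$; together with $N\|\bar x\|^2 = \|x\|^2 = 1$ this forces $\min_i \|x_i\|^2 = \|\bar x\|^2 = 1/N$, which strictly exceeds $c$ since $c \in (0,1/N)$, contradicting $x \in K$.

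The main obstacle, though a mild one, is the final compactness step, and it is precisely where the standing hypothesis $c < 1/N$ enters: it ensures that consensus vectors (on which $\phi$ vanishes) are \emph{excluded} from the set where the lemma is to hold, so the positive continuous function $\phi$ admits a positive minimum on the compact set $K$. All other steps are routine linear algebra of the Kronecker product together with the spectral gap of $W$.
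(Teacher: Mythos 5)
Your proof is correct and follows essentially the same route as the paper's: you project onto the consensus subspace (your $x_\parallel$ is exactly the paper's $z_1 = \mathbf{1}\otimes\bar{x}$), use the spectral gap $1-\lambda_2(W)$ on the orthogonal complement, and then obtain a uniform positive lower bound on $\|x_\perp\|^2$ over the compact set $\{\|x\|=1\}\setminus\mathcal{S}_c$ via continuity and the fact that $c<1/N$ excludes consensus vectors. The paper phrases the last step as $\sup_{x\notin\mathcal{S}_c,\|x\|=1}\|z_1\|^2 = c_4 < 1$ rather than $\inf \|x_\perp\|^2 > 0$, but since $\|z_1\|^2 + \|z_2\|^2 = 1$ these are the same argument.
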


\begin{proof}
Note that the eigenvalues of $I_{Nd}-\mathcal{W}$ are $1 - \lambda_j(W)$ each with multiplicity $d$. 
Let 
\begin{equation*}
\beta_1\leq \beta_2 \leq \cdots \leq\beta_{Nd}
\end{equation*}
be the eigenvalues of $I_{Nd}-\mathcal{W}$ in increasing order and let $y_i$ be an eigenvector of unit norm corresponding to the eigenvalue $\beta_i$. Note that we have 
\begin{equation*}
\beta_1 = \beta_2 = \cdots = \beta_d = 1- \lambda_1(W) = 0,
\end{equation*}
and 
\begin{equation*}
\beta_{d+1} =\beta_{d+2} = \cdots = \beta_{2d} = 1-\lambda_2(W). 
\end{equation*}
Note that we can take $y_i = \textbf{1} \otimes e_i / \sqrt{N}$ for $i=1,2,\dots,d$ where $\textbf{1}$ is the vector of ones of length $N$ and $e_i$ is the $i$-th basis vector in $\mathbb{R}^d$. 

To show \eqref{ineq-to-prove}, without loss of generality, assume $\|x\|=1$.\footnote{If this is not the case, we could apply the same proof technique to the normalized vector $x/\|x\|$ which will have unit norm).} We introduce the $d$-dimensional subspace:
 \begin{equation}
 \mathcal{H} = \left\{ 
 x:=\left[\left(x_{1}\right)^{T},\left(x_{2}\right)^{T},\ldots,\left(x_{N}\right)^{T}\right]^{T}\in\mathbb{R}^{Nd},~ x_1 = x_2 = \dots = x_N
 \right\}\,.
 \end{equation}
We observe that $\mathcal{H} = \mbox{span}(y_1, y_2,\dots, y_d)$ and notice that $\mathcal{H}$ is the null space of $\mathcal{W}$. We can also write $x$ in the basis of eigenvectors; i.e. we can write $x = \sum_{i=1}^{Nd} a_i y_i$ where $a_i = \langle x, y_i \rangle $. Since the eigenvectors $y_i$ and $y_j$ are orthogonal for $i\neq j$, we have  $\|x\|^2 = \sum_{i=1}^{Nd} a_i^2 = 1$.
We  can also write $$x = z_1 + z_2\,,$$ 
where $z_1 := \sum_{i=1}^d a_i y_i \in \mathcal{H}$ and $z_2 = \sum_{i=d+1}^{Nd} a_i y_i$ lies in the subspace orthogonal to $\mathcal{H}$. Since $( I_{Nd}-\mathcal{W})y_i = \beta_i y_i$, we have also
\begin{align} 
 x^T( I_{Nd}-\mathcal{W}) x = \sum_{i=1}^{Nd} \beta_i a_i^2 \|y_i\|^2
   &= \sum_{i=d+1}^{Nd} \beta_i a_i^2 \nonumber \\
 &\geq  (1-\lambda_2(W)) \sum_{i=d+1}^{Nd}  a_i^2 \nonumber\\
 &=  (1-\lambda_2(W)) \|z_2\|^2\,, 
 \label{ineq-intermediate}
\end{align}
where we used $\|y_i\|=1$ and $\beta_i \geq  (1-\lambda_2(W))$ for $i\geq d+1$.  Recalling that $\|x\|=1$, to show \eqref{ineq-to-prove}, it suffices to show that $\|z_2\|^2 \geq c_2$ for some $c_2>0$.

Since $x\not\in \mathcal{S}_c$ by the assumption, we have $x_j$ such that 
\begin{equation} 
\|x_j\|^2 \leq c\|x\|^2  < \frac{1}{N}.
\label{ineq-xj}
\end{equation} 
If we consider the optimization problem
\begin{align}
    c_3(x) &:= \max_{u\in\mathbb{R}^d, \|u\|=\frac{1}{\sqrt{N}}} \left\langle 
    \left[u^{T},u^{T},\cdots,u^{T}\right]^{T},
    \left[(x_{1})^{T},(x_{2})^{T},\cdots,(x_{N})^{T}\right]^{T}
\right\rangle^2
\nonumber
\\
&=  \max_{u\in\mathbb{R}^d, \|u\|=\frac{1}{\sqrt{N}}} \left|u^T (x_1 + x_2 + \dots + x_N)\right|^2.
\label{eq-c3}
\end{align} 
It is easy to see from the right-hand side of \eqref{eq-c3} that the maximizer is 
\begin{equation*}
u= \frac{1}{\sqrt{N}} \frac{x_1 + x_2 + \dots + x_N}{\|x_1 + x_2 + \dots + x_N\|}\,,
\end{equation*}
which yields the optimum value 
\begin{equation*}
c_3(x) = \frac{\|x_1 + x_2 + \dots + x_N \|^2}{N}. 
\end{equation*}
Note that we have $c_3(x) < 1$ because by Cauchy-Schwarz inequality
\begin{equation} 
c_3(x) = \frac{\|x_1 + x_2 + \dots + x_N \|^2}{N} \leq \sum_{i=1}^{N} \|x_i\|^2 = 1\,, 
\end{equation}
and the equality is attained only when $x_1 = x_2 = \cdots =x_N$ (and this cannot be the case due to \eqref{ineq-xj}). In particular, we have
\begin{equation} c_4 : = \sup_{x\not\in \mathcal{S}_c, \|x\|=1} c_3(x) < 1,
\label{def-max-coef}
\end{equation}
because the supremum is taken over a compact set and attained at a point $x_*$ with $c_3(x_*)<1$. 
Note that $z_1 = \textbf{1} \otimes \sum_{i=1}^d a_i e_i/\sqrt{N}$ and is of the form $z_1 = \begin{bmatrix} u_1^T & u_1^T & \cdots & u_1^T
\end{bmatrix}^T$ with $u_1 = \sum_{i=1}^d a_i e_i/\sqrt{N}$ where $\|u_1\| = \sqrt{\sum_{i=1}^d a_i^2}/{\sqrt{N}}$.
Therefore, we obtain from \eqref{eq-c3} and \eqref{def-max-coef} that
$$ \langle z_1/\|z_1\|, x \rangle \leq c_3(x) \leq c_4 < 1.$$
Writing $x = z_1 + z_2$ into this equation and using $\langle z_1, z_2 \rangle = 0$ leads to 
\begin{equation*} 
\langle z_1/\|z_1\|_2, z_1 \rangle = \|z_1\|  \leq c_4 < 1.
\end{equation*}

Since $\|x\|^2 = 1 = \|z_1\|^2 + \|z_2\|^2$, we conclude that 
 $$\|z_2\|^2 \geq 1 - c_4 > 0.$$
Together with \eqref{ineq-intermediate}, this implies that for $\|x\|=1$, \eqref{ineq-to-prove} holds with $c_2 = 1-c_4>0$. This completes the proof.
\end{proof} 
Equipped with this lemma, the stage is set for completing the proof of Proposition~\ref{prop:ht_noncvx}. Notice that by the continuity of the second derivatives, there exists a positive constant $m>0$ such that $\nabla^2_{x_i x_i} f_i(x_i) \succeq -m I_d$ for every $i=1,2,\dots,N$ whenever $\|x_i\|\leq R$. 
For $x\not\in \mathcal{S}_c$, we have
\begin{align*}
x^T \nabla^2_{xx} F_\mathcal{W}(x) x 
&=\left(\sum_{i=1}^{N} x_i^T \nabla^2_{x_i x_i} f_i(x_i) x_i \right) + \frac{1}{\eta} x^T(I_{Nd}-\mathcal{W})x 
\\
&\geq \left(-m + c_2\frac{1-\lambda_2(W)}{\eta}\right) \|x\|^2 > 0\,,
\end{align*}
for sufficiently small $\eta>0$ where we used Lemma \ref{eq-set-Sc-bound}. This proves \eqref{ineq-to-prove-2}. We conclude that for $\|x\|\geq R/\sqrt{c}$, we have $\nabla^2_{x x} F_\mathcal{W}(x) \succ 0$ for sufficiently small $\eta$. 

Based on this property, the results follow from \cite[Theorem 1 and Example 4]{hodgkinson2020multiplicative}. This completes the proof.
\hfill $\Box$


\subsection{Proof of Results in Section~\ref{sec:theoretical}}


\subsubsection{Proof of Theorem~\ref{thm:mono}}
The follows by adapting the proof of Theorem~4 in \citet{ht_sgd_quad}
by working with the $x\mapsto\Vert x\Vert$ norm instead of the $x\mapsto\Vert x\cdot e_{1}\Vert$ norm,
where $e_1$ is the first basis vector.
\hfill $\Box$


\subsubsection{Proof of Proposition~\ref{prop:disconnected:centralized}}
By adapting the proof of Theorem~4 in \citet{ht_sgd_quad} to work 
with the $x\mapsto\Vert x\Vert$ norm instead of the $x\mapsto\Vert x\cdot e_{1}\Vert$ norm,
where $e_1$ is the first basis vector, we can show
that $\hat{\alpha}(b)$ as a function of the batch-size $b$ is increasing
in $b$. Thus $\hat{\alpha}(bN)>\hat{\alpha}(b)$. 
Moreover, $\hat{\alpha}(bN)$ is increasing in $N$. 
Hence, the conclusion follows.
\hfill $\Box$

\subsubsection{Proof of Theorem~\ref{thm:first:order:alpha}}

Before we proceed to the proof of Theorem~\ref{thm:first:order:alpha},
let us first provide a complete statement with all
the technical details.

\begin{theorem}[Complete Statement of Theorem~\ref{thm:first:order:alpha}]\label{thm:first:order:alpha-extended}
Assume $d=1$ and $b=1$.
Also assume that $\hat{\rho}_{dis}=\mathbb{E}\left[\log\left(\max_{1\leq i\leq N}|1-\eta a_{i}^{2}|\right)\right]<0$. 
As $\delta\rightarrow 0$, we have 
\begin{align}\label{alpha:expansion:extended}
\hat{\alpha}&=\hat{\alpha}_{dis}
-\frac{s\delta N^{-1}\sum_{i=1}^{N}L_{ii}\left[1-2\mathbb{P}\left(\min_{1\leq i\leq N}a_{i}^{2}+\max_{1\leq i\leq N}a_{i}^{2}<2/\eta\right)\right]}{\mathbb{E}\left[\log\left(\max_{1\leq i\leq N}|1-\eta a_{i}^{2}|\right)
\max_{1\leq i\leq N}|1-\eta a_{i}^{2}|^{\hat{\alpha}_{dis}}\right]}+o(\delta),
\end{align}
where we provide an explicit formula for the probability and the expectation on the right hand side of \eqref{alpha:expansion:extended} as follows:
\begin{align}
&\mathbb{P}\left(\min\nolimits_{1\leq i\leq N}a_{i}^{2}+\max\nolimits_{1\leq i\leq N}a_{i}^{2}<2/\eta\right)
\nonumber
\\
&=\left(F_{a}\left(2\eta^{-1}\right)\right)^{N}
-\int_{\eta^{-1}}^{2\eta^{-1}}N\left(F_{a}(y)-F_{a}\left(2\eta^{-1}-y\right)\right)^{N-1}f_{a}(y)dy,\label{given:1}
\\
&\mathbb{E}\left[\log\left(\max\nolimits_{1\leq i\leq N}\left|1-\eta a_{i}^{2}\right|\right)
\max\nolimits_{1\leq i\leq N}\left|1-\eta a_{i}^{2}\right|^{\hat{\alpha}_{dis}}\right]
\nonumber
\\
&=\int_{0}^{1}\frac{\log(x)x^{\hat{\alpha}_{dis}}N\eta^{-1}}{\left(F_{a}\left((1+x)\eta^{-1}\right)-F_{a}\left((1-x)\eta^{-1}\right)\right)^{1-N}}
\left(f_{a}\left((1+x)\eta^{-1}\right)+f_{a}\left((1-x)\eta^{-1}\right)\right)dx
\nonumber
\\
&\qquad
+\int_{1}^{\infty}\log(x)x^{\hat{\alpha}_{dis}}N\eta^{-1}\left(F_{a}\left((1+x)\eta^{-1}\right)\right)^{N-1}
f_{a}\left((1+x)\eta^{-1}\right)dx,\label{given:2}
\end{align}
where $f_{a}$ and $F_{a}$ are the probability density and cumulative distribution functions of $a_{i}^{2}$.
\end{theorem}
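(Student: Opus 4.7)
The plan is to follow the general outline of Lemma~\ref{lem:general:d:b} and Theorem~\ref{thm:general:d:b} in the special case $d=b=1$, where everything collapses to a diagonal structure. Since $a_{i,1}$ is scalar we have $H_{i}=a_{i}^{2}$, so $H=\mathrm{diag}(a_{1}^{2},\ldots,a_{N}^{2})$, and the matrix $\mathcal{W}-\eta H=I_{N}-\eta H-\delta L$ is, at $\delta=0$, diagonal with entries $1-\eta a_{i}^{2}$. Thus $g(0):=\|I_{N}-\eta H\|=\max_{i}|1-\eta a_{i}^{2}|$, and this maximum is attained almost surely at a unique index $i_{\ast}:=\arg\max_{i}|1-\eta a_{i}^{2}|$, by continuity of the law of $a_{i}^{2}$. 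The corresponding left and right singular vectors of $I_N-\eta H$ are $u=\mathrm{sign}(1-\eta a_{i_{\ast}}^{2})\,e_{i_{\ast}}$ and $v=e_{i_{\ast}}$.

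First-order singular value perturbation (Lemma 2.3 of \citet{guglielmi2011fast}) then yields $g(\delta)=g(0)-\delta\,u^{T}Lv+o(\delta)=g(0)-\delta\,\mathrm{sign}(1-\eta a_{i_{\ast}}^{2})\,L_{i_{\ast}i_{\ast}}+o(\delta)$. Expanding the $s$-th power around $g(0)>0$ and taking expectation, justified by uniform integrability (since $L$ is deterministic and $a_{i}$ has all moments under \textbf{(A1)}), gives
\begin{equation*}
\hat{h}(s)=\hat{h}_{dis}(s)-s\delta\,\mathbb{E}\!\left[\mathrm{sign}(1-\eta a_{i_{\ast}}^{2})\,L_{i_{\ast}i_{\ast}}\,g(0)^{s-1}\right]+o(\delta).
\end{equation*}
By i.i.d.\ symmetry of the $a_{i}$'s one has $\mathbb{P}(i_{\ast}=i)=1/N$, and the conditional law of $(a_{1},\ldots,a_{N})$ given $\{i_{\ast}=i\}$ is the same across $i$, so the expectation factors into $N^{-1}\sum_{i}L_{ii}$ times a common term that depends only on the sign of $1-\eta a_{i_{\ast}}^{2}$. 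Inverting $\hat{h}(\hat{\alpha})=1=\hat{h}_{dis}(\hat{\alpha}_{dis})$ via the implicit function theorem --- dividing by $\partial_{s}\hat{h}_{dis}(\hat{\alpha}_{dis})=\mathbb{E}[\log g(0)\,g(0)^{\hat{\alpha}_{dis}}]$, which is positive because $\hat{h}_{dis}$ is strictly increasing at $\hat{\alpha}_{dis}$ under the assumption $\hat{\rho}_{dis}<0$ --- produces the expansion \eqref{alpha:expansion:extended}.

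The key combinatorial identity driving the rearrangement of the sign term into $[1-2\mathbb{P}(\min+\max<2/\eta)]$ is $\mathbb{P}(a_{i_{\ast}}^{2}<1/\eta)=\mathbb{P}(\min_{i}a_{i}^{2}+\max_{i}a_{i}^{2}<2/\eta)$. I will prove it by case analysis: if $a_{i_{\ast}}^{2}<1/\eta$, then $|1-\eta a_{i_{\ast}}^{2}|=1-\eta a_{i_{\ast}}^{2}$ is a decreasing function of $a_{i_{\ast}}^{2}$ on $\{a^{2}<1/\eta\}$, so $i_{\ast}$ must coincide with $\arg\min a_{i}^{2}$; moreover the defining inequality $1-\eta a_{i_{\ast}}^{2}\geq\eta a_{j}^{2}-1$ for every $j$ with $a_{j}^{2}\geq 1/\eta$ forces $\min a_{i}^{2}+\max a_{i}^{2}<2/\eta$, and the converse direction is analogous. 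Finally, the explicit formulas \eqref{given:1} and \eqref{given:2} follow from standard order-statistic computations: for \eqref{given:1} I decompose $\{\min+\max<2/\eta\}=\{\max<2/\eta\}\setminus\{\min\geq 2/\eta-\max,\,\max<2/\eta\}$ and use that on $\{\max=y\}$ with $y\in[1/\eta,2/\eta)$ the event $\{\min\geq 2/\eta-y\}$ forces the remaining $N-1$ i.i.d.\ values to lie in $[2/\eta-y,y]$, and integrate against the joint density of the maximum and the other $N-1$ values; for \eqref{given:2} I compute the CDF of $\max_{i}|1-\eta a_{i}^{2}|$ by splitting into $t<1$ (equivalent to every $a_{i}^{2}\in((1-t)/\eta,(1+t)/\eta)$) and $t\geq 1$ (equivalent to every $a_{i}^{2}<(1+t)/\eta$), differentiate to obtain the density, and integrate $\log(x)\,x^{\hat{\alpha}_{dis}}$ against it. The main obstacle is the combinatorial identity relating the sign probability to $\mathbb{P}(\min+\max<2/\eta)$, together with the uniform-integrability justification for interchanging limit and expectation in the perturbation step (crucially relying on the all-moments hypothesis in \textbf{(A1)}); the remaining order-statistic integrals are then routine.
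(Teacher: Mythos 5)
Your route is the same as the paper's: first-order singular-value perturbation of $g(\delta)=\|I_{N}-\eta H-\delta L\|$ with singular vectors supported on the maximizing coordinate $i_{\ast}$, exchangeability to pull out $N^{-1}\sum_{i}L_{ii}$, the equivalence $\{1-\eta a_{i_{\ast}}^{2}>0\}\Leftrightarrow\{\min_{i}a_{i}^{2}+\max_{i}a_{i}^{2}<2/\eta\}$, implicit differentiation of $\hat{h}(\hat{\alpha},\delta)=1$ with positivity of $\partial_{s}\hat{h}_{dis}(\hat{\alpha}_{dis})$ from convexity, and the order-statistic computations behind \eqref{given:1}--\eqref{given:2}. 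All of these pieces match the paper's Lemma~\ref{lem:first:order} and the implicit-function step that follows it.

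There is, however, a gap at the pivotal step. Your expansion of $\hat{h}(s)$ carries the weight $g(0)^{s-1}$ inside the expectation,
\begin{equation*}
\hat{h}(s)=\hat{h}_{dis}(s)-s\delta\,\mathbb{E}\bigl[\mathrm{sign}(1-\eta a_{i_{\ast}}^{2})\,L_{i_{\ast}i_{\ast}}\,g(0)^{s-1}\bigr]+o(\delta),
\end{equation*}
which is what the chain rule gives. But your next claim --- that by exchangeability this factors into $N^{-1}\sum_{i}L_{ii}$ times ``a common term that depends only on the sign'' --- does not hold: exchangeability lets you replace $L_{i_{\ast}i_{\ast}}$ by $N^{-1}\sum_{i}L_{ii}$, but the remaining factor is $\mathbb{E}[\mathrm{sign}(1-\eta a_{i_{\ast}}^{2})\,(\max_{i}|1-\eta a_{i}^{2}|)^{s-1}]$, and the sign and the magnitude $\max_{i}|1-\eta a_{i}^{2}|$ are dependent (conditioning on $\mathrm{sign}=-1$ means the maximum is attained by a large $a_{i_{\ast}}^{2}$, which shifts the law of the magnitude). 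Hence that expectation is not $2\mathbb{P}(\min_{i}a_{i}^{2}+\max_{i}a_{i}^{2}<2/\eta)-1$, and your expansion does not reduce to the numerator of \eqref{alpha:expansion:extended}. The paper's Lemma~\ref{lem:first:order} arrives at the stated formula only because its equation \eqref{without:expectation} expands $g(\delta)^{s}=g(0)^{s}-s\delta L_{i_{\ast}i_{\ast}}\mathrm{sign}(1-\eta a_{i_{\ast}}^{2})+o(\delta)$, i.e.\ without the $g(0)^{s-1}$ weight. As written, you must either justify discarding that weight (which you cannot do in general, since it is not identically $1$) or accept that your derivation yields a correction term different from the one in the theorem; the discrepancy needs to be resolved explicitly rather than passed over.
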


Before we proceed to the proof of Theorem~\ref{thm:first:order:alpha},
let us first prove the following lemma, which provides a first-order
expansion of $\hat{h}(s)$. The proof of Theorem~\ref{thm:first:order:alpha}
will be then based on utilizing this lemma.

\begin{lemma}\label{lem:first:order}
Assume $d=1$ and $b=1$.
Also assume that $a_{i}$ are i.i.d. with a continuous distribution.
As $\delta\rightarrow 0$, we have 
\begin{align}\label{hat:h:expansion}
\hat{h}(s)=\hat{h}_{dis}(s)
-\frac{s\delta\sum_{i=1}^{N}L_{ii}}{N}\left[2\mathbb{P}\left(\min_{1\leq i\leq N}a_{i}^{2}+\max_{1\leq i\leq N}a_{i}^{2}<\frac{2}{\eta}\right)-1\right]+o(\delta),
\end{align}
where
\begin{align}
\hat{h}_{dis}(s)
&=\int_{0}^{1}\left(1-\left(F_{a}\left(\frac{1+t^{\frac{1}{s}}}{\eta}\right)-F_{a}\left(\frac{1-t^{\frac{1}{s}}}{\eta}\right)\right)^{N}\right)dt
\nonumber
\\
&\qquad\qquad\qquad\qquad\qquad\qquad
+\int_{1}^{\infty}\left(1-\left(F_{a}\left(\frac{1+t^{\frac{1}{s}}}{\eta}\right)\right)^{N}\right)dt,
\end{align}
and
\begin{align}
&\mathbb{P}\left(\min_{1\leq i\leq N}a_{i}^{2}+\max_{1\leq i\leq N}a_{i}^{2}<\frac{2}{\eta}\right)
\nonumber
\\
&=\left(F_{a}\left(\frac{2}{\eta}\right)\right)^{N}
-\int_{\frac{1}{\eta}}^{\frac{2}{\eta}}N\left(F_{a}(y)-F_{a}\left(\frac{2}{\eta}-y\right)\right)^{N-1}f_{a}(y)dy,
\end{align}
where $f_{a}$ and $F_{a}$ are the probability density and cumulative distribution functions of $a_{i}^{2}$.
\end{lemma}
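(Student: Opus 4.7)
The plan is to reduce $\hat{h}(s) = \mathbb{E}\|\mathcal{W}-\eta H\|^s$ to an order-statistic computation via first-order singular-value perturbation theory applied to the diagonal baseline at $\delta=0$. Since $d=b=1$, we have $H=\mathrm{diag}(a_1^2,\dots,a_N^2)$, so $M:=I_N-\eta H=\mathrm{diag}(\sigma_1,\dots,\sigma_N)$ with $\sigma_i:=1-\eta a_i^2$, and $\mathcal{W}-\eta H=M-\delta L$. Because $a_i$ has a continuous distribution, the values $|\sigma_i|$ are almost surely distinct, hence the top singular value of $M$ is simple, attained at the unique index $i_\ast:=\arg\max_i |\sigma_i|$, with right singular vector $e_{i_\ast}$ and left singular vector $\mathrm{sign}(\sigma_{i_\ast})\,e_{i_\ast}$.

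Applying the first-order perturbation identity for a simple singular value (Lemma~2.3 of \citet{guglielmi2011fast}) yields
\begin{equation*}
\|M-\delta L\|=\|M\|-\delta\,\mathrm{sign}(\sigma_{i_\ast})\,L_{i_\ast i_\ast}+o(\delta),
\end{equation*}
and raising to the power $s$ via Taylor expansion gives
\begin{equation*}
\|M-\delta L\|^s=\|M\|^s-s\delta\,\|M\|^{s-1}\,\mathrm{sign}(\sigma_{i_\ast})\,L_{i_\ast i_\ast}+o(\delta).
\end{equation*}
Taking expectations reduces the problem to computing $\mathbb{E}[\|M\|^{s-1}\,\mathrm{sign}(\sigma_{i_\ast})\,L_{i_\ast i_\ast}]$.

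Next, I would pin down the joint law of $(i_\ast,\mathrm{sign}(\sigma_{i_\ast}))$. Since $x\mapsto|1-\eta x|$ is decreasing on $[0,1/\eta]$ and increasing on $[1/\eta,\infty)$, the maximizer $i_\ast$ must be either $i_{\min}:=\arg\min_j a_j^2$ (in which case $\sigma_{i_\ast}>0$) or $i_{\max}:=\arg\max_j a_j^2$ (in which case $\sigma_{i_\ast}<0$). A direct comparison $1-\eta a_{i_{\min}}^2\gtrless \eta a_{i_{\max}}^2-1$ identifies the event $S:=\{a_{i_{\min}}^2+a_{i_{\max}}^2<2/\eta\}$ on which $i_\ast=i_{\min}$ with sign $+1$, and $S^c$ on which $i_\ast=i_{\max}$ with sign $-1$. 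Next I use the i.i.d. symmetry of $(a_i)$: conditional on the unordered multiset $\{a_1^2,\dots,a_N^2\}$, both $i_{\min}$ and $i_{\max}$ are uniformly distributed on $\{1,\dots,N\}$ and independent of the deterministic vector $(L_{11},\dots,L_{NN})$, so
\begin{equation*}
\mathbb{E}[L_{i_\ast i_\ast}\,\psi]=\frac{1}{N}\sum_{i=1}^{N}L_{ii}\cdot \mathbb{E}[\psi]
\end{equation*}
for any $\psi$ measurable with respect to the multiset. Applying this on $S$ and $S^c$ separately, and then combining, produces the announced coefficient $2\mathbb{P}(S)-1$ times $\tfrac{1}{N}\sum_i L_{ii}$.

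Finally, the closed-form for $\mathbb{P}(S)$ in \eqref{given:1} is obtained by conditioning on $y=\max_i a_i^2$ and inserting the standard joint density of the minimum and maximum of $N$ i.i.d.\ samples; the explicit formula for $\hat{h}_{dis}(s)$ follows from the tail identity $\hat{h}_{dis}(s)=\int_0^\infty \mathbb{P}(\|M\|^s>t)\,dt$, splitting at $t=1$ according to whether the sub-level set $\{|1-\eta a^2|\le t^{1/s}\}$ is two-sided ($t<1$) or one-sided ($t\ge 1$) in $a^2$. The main obstacle is the collapse of $\mathbb{E}[\|M\|^{s-1}\,\mathrm{sign}(\sigma_{i_\ast})]$ into the $s$-independent factor $2\mathbb{P}(S)-1$: one has to track carefully how the $s$-power of $\|M\|$ combines with the sign across the partition $S\cup S^c$, using that on $S$ (resp.\ $S^c$) the norm $\|M\|$ equals the symmetric functional $1-\eta\min_i a_i^2$ (resp.\ $\eta\max_i a_i^2-1$) of the extremal order statistic that already determines the sign. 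This case analysis, together with the order-statistic symmetry above, is the crux of the argument.
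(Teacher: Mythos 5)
Your reduction is the same as the paper's: first-order perturbation of the top singular value of the diagonal matrix $M=I_N-\eta H$ at $\delta=0$, identification of the maximizer $i_\ast$ with either $\arg\min_j a_j^2$ or $\arg\max_j a_j^2$ according to the event $S=\{\min_i a_i^2+\max_i a_i^2<2/\eta\}$, and the exchangeability argument that replaces $L_{i_\ast i_\ast}$ by $N^{-1}\sum_i L_{ii}$. Your derivations of $\mathbb{P}(S)$ via the joint density of the extremes and of $\hat h_{dis}(s)$ via the tail formula also match the paper's.

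The gap is the step you yourself flag as the crux and then do not carry out. Your Taylor expansion of $x\mapsto x^s$ correctly produces the weight $\|M\|^{s-1}$ in the linear term, so after the exchangeability step you are left with the coefficient $-s\delta\, N^{-1}\sum_i L_{ii}\,\mathbb{E}\bigl[\|M\|^{s-1}\mathrm{sign}(\sigma_{i_\ast})\bigr]$, and the lemma requires this expectation to equal $2\mathbb{P}(S)-1$. It does not, for general $s$: on $S$ the weight is $(1-\eta\min_i a_i^2)^{s-1}$ and on $S^c$ it is $(\eta\max_i a_i^2-1)^{s-1}$; both are random and correlated with the indicator of $S$, so $\mathbb{E}[\|M\|^{s-1}\mathbf{1}_S]\neq\mathbb{P}(S)$ except at $s=1$. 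There is no case analysis that makes the $s$-power disappear --- the weighted expectation genuinely depends on the law of the extremes and does not collapse to the $s$-free constant $2\mathbb{P}(S)-1$. For what it is worth, the paper's own proof arrives at the stated coefficient only because its displayed expansion of $\|M-\delta L\|^s$ omits the factor $\|M\|^{s-1}$ altogether (i.e., it expands as if $s=1$), so your more careful expansion exposes a discrepancy with the lemma's statement rather than a route to it; keeping the correct weight, what you can actually prove is the expansion with linear coefficient $-s\delta N^{-1}\sum_i L_{ii}\,\mathbb{E}\bigl[\|I_N-\eta H\|^{s-1}\mathrm{sign}(1-\eta a_{i_\ast}^2)\bigr]$.
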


Before we proceed to the proof of Lemma~\ref{lem:first:order},
let us first recall that
\begin{equation}
\hat{h}(s)=\mathbb{E}\left[\left(g(\delta)\right)^{s}\right],
\end{equation}
where:
\begin{equation}
g(\delta)=\left\Vert\mathcal{W}-\eta H\right\Vert=\left\Vert I_{Nd}-\eta H-\delta(L\otimes I_{d})\right\Vert.
\end{equation}
In the following, we start with considering the Taylor series expansion of $g(\delta)$ around $\delta=0$: By the standard perturbation theory for singular values, we have
 \begin{equation} g(\delta) = g(0) - \delta  u^T (L\otimes I_d) v + o(\delta), 
 \label{eq-sing-pert}
 \end{equation}
(see e.g. \cite[Lemma 2.3]{guglielmi2011fast}) where $u$ and $v$ are the left and right eigenvectors of $I -\eta H$ satisfying
\begin{equation} (I_{Nd} - \eta H)v = \sigma_{\max} u, \quad u^T (I_{Nd} - \eta H)= \sigma_{\max}v^T,
\label{eq-sing-vecs}
\end{equation}
with $\|u\|=\|v\|$ where $\sigma_{\max} =\| I_{Nd} - \eta H \|$ is the largest singular value of  $I_{Nd} - \eta H$. Since $H$ 
is symmetric we can choose $u=v$. Because of the block diagonal structure of $H$, we have also
$$
\sigma_{\max} = \max_{1\leq i \leq N} \| I_{d} - \eta H_i \|.
$$ 
We recall that in Theorem~\ref{thm:first:order:alpha} and Lemma~\ref{lem:first:order}, 
we take $d=1$ and $b_{i}\equiv 1$, 
so that we have $H_i = a_i^2$ and $I_{d}-\eta H_i =  1- \eta a_i^2$ is a scalar. Thus, 
$$
\sigma_{\max} = \max_{1\leq i \leq N} \left|1 - \eta a_i^2 \right|.
$$ 
Let $i_*$ be the index for which the maximum on the right-hand side is attained. Then, we can choose the left and right eigenvectors $u$ and $v$ as 
$$
v = e_{i_*}, \quad u =  e_{i_*} \cdot\mbox{sign}(1 - \eta a_{i_*}^2),  
$$
where $\mbox{sign}(\cdot)$ denotes the sign function
and $e_r$ is the $r$-th basis vector. Indeed, it is easy to check that this choice of $u$ and $v$ satisfies \eqref{eq-sing-vecs} for $d=1$. Then, by plugging this choice of $u$ and $v$ into \eqref{eq-sing-pert}, we obtain
$$ \left\| I_{N} - \eta H - \delta L \right\| 
= \left\| I_{N}  - \eta H \right\| - \delta L_{i_* i_*} \cdot\mbox{sign}\left(1-\eta a_{i_*}^2\right) + o(\delta). $$
We note $L_{i_*i_*}>0$. This implies 
\begin{equation}\label{without:expectation} 
\left\| I_{N} - \eta H - \delta L \right\|^s 
= \left\| I_{N} - \eta H \right\|^s - s\delta L_{i_* i_*} \cdot\mbox{sign}\left(1-\eta a_{i_*}^2\right) + o(\delta). 
\end{equation}
This shows that for sufficiently small $\eta$, we have $\mbox{sign}(1-\eta a_{i_*}^2)=+1$ with high probability in which case we obtain
$$ 
\left\| I_{N} - \eta H - \delta L \right\|^s < \left\| I_{N} - \eta H \right\|^s. 
$$
On the other hand, if $\eta$ is sufficiently large (or 
$\sigma$ is sufficiently large) then $\mbox{sign}(1-\eta a_{i_*}^2)= -1$ with high probability in this case we obtain
$$ 
\left\| I_{N} - \eta H - \delta L \right\|^s > \left\| I_{N} - \eta H \right\|^s.
$$
We conclude that if the stepsize is very small, adding the network will increase the tail-index. 
whereas if the stepsize is large enough, adding the network will decrease the tail-index.
We will next provide a rigorous proof of Lemma~\ref{lem:first:order}
to make this statement precise.

\textbf{Proof of Lemma~\ref{lem:first:order}.}

By taking the expectations in \eqref{without:expectation}, we obtain
\begin{equation}
\mathbb{E}\| I_{Nd} - \eta H - \delta (L\otimes I_d) \|^s 
= \mathbb{E}\| I_{Nd} - \eta H \|^s - s\delta \mathbb{E}\left[L_{i_* i_*} \cdot\mbox{sign}(1-\eta a_{i_*}^2)\right] + o(\delta). 
\end{equation}
Notice that $a_{i}$ are i.i.d. distributed with a continuous distribution. 
Therefore, $i_{\ast}$ is uniformly distributed on $\{1,2,\ldots,N\}$ and we can compute that
\begin{align*}
\mathbb{E}\left[L_{i_* i_*} \cdot\mbox{sign}(1-\eta a_{i_*}^2)\right]
&=\sum_{i=1}^{N}\mathbb{P}(i_{\ast}=i)\mathbb{E}\left[L_{i_* i_*} \cdot\mbox{sign}(1-\eta a_{i_*}^2)|i_{\ast}=i\right]
\\
&=\sum_{i=1}^{N}\mathbb{P}(i_{\ast}=i)L_{ii}\mathbb{E}\left[ \mbox{sign}(1-\eta a_{i_*}^2)|i_{\ast}=i\right]
\\
&=\left(\frac{1}{N}\sum_{i=1}^{N}L_{ii}\right)\mathbb{E}\left[ \mbox{sign}(1-\eta a_{i_*}^2)|i_{\ast}=1\right].
\end{align*}
Moreover, we can compute that
\begin{align*}
\mathbb{E}\left[ \mbox{sign}(1-\eta a_{i_*}^2)\right]
=\frac{1}{N}\sum_{i=1}^{N}\mathbb{E}\left[ \mbox{sign}(1-\eta a_{i_*}^2)|i_{\ast}=i\right]
=\mathbb{E}\left[ \mbox{sign}(1-\eta a_{i_*}^2)|i_{\ast}=1\right],
\end{align*}
and we can further compute that
\begin{align*}
\mathbb{E}\left[ \mbox{sign}(1-\eta a_{i_*}^2)|i_{\ast}=1\right]
&=\mathbb{E}\left[ \mbox{sign}(1-\eta a_{i_*}^2)\right]
\\
&=\mathbb{P}\left(a_{i\ast}^{2}<1/\eta\right)-\mathbb{P}\left(a_{i\ast}^{2}>1/\eta\right)
\\
&=2\mathbb{P}\left(a_{i\ast}^{2}<1/\eta\right)-1,
\end{align*}
where we recall that $i_{\ast}=\arg\max_{1\leq i\leq N}|1-\eta a_{i}^{2}|$.
It is easy to see that
\begin{equation}
i_{\ast}\in\left\{\arg\min_{1\leq i\leq N}a_{i}^{2},\arg\max_{1\leq i\leq N}a_{i}^{2}\right\},
\end{equation}
and one can further deduce that
\begin{equation}
1-\eta a_{i^{\ast}}^{2}>0
\qquad\text{if and only if}
\qquad
\left|1-\eta\min_{1\leq i\leq N}a_{i}^{2}\right|
>\left|1-\eta\max_{1\leq i\leq N}a_{i}^{2}\right|,
\end{equation}
which is equivalent to
\begin{equation}
\left|1-\eta\min_{1\leq i\leq N}a_{i}^{2}\right|^{2}
>\left|1-\eta\max_{1\leq i\leq N}a_{i}^{2}\right|^{2},
\end{equation}
which holds if and only if 
\begin{equation}
1-2\eta\min_{1\leq i\leq N}a_{i}^{2}
+\left(\eta\min_{1\leq i\leq N}a_{i}^{2}\right)^{2}
>1-2\eta\max_{1\leq i\leq N}a_{i}^{2}
+\left(\eta\max_{1\leq i\leq N}a_{i}^{2}\right)^{2},
\end{equation}
that is equivalent to
\begin{equation}
\eta\min_{1\leq i\leq N}a_{i}^{2}+\eta\max_{1\leq i\leq N}a_{i}^{2}<2.    
\end{equation}
Hence, we conclude that
\begin{equation}
\mathbb{E}\left[ \mbox{sign}(1-\eta a_{i_*}^2)|i_{\ast}=1\right]
=2\mathbb{P}\left(\min_{1\leq i\leq N}a_{i}^{2}+\max_{1\leq i\leq N}a_{i}^{2}<\frac{2}{\eta}\right)-1.
\end{equation}
Note that $a_{i}$ are i.i.d. distributed with a continuous distribution.
Let $f_{a}(x)$ and $F_{a}(x)$ be the probability density
function and the cumulative distribution function of $a_{i}^{2}$. 
By definition, $0\leq\min_{1\leq i\leq N}a_{i}^{2}\leq\max_{1\leq i\leq N}a_{i}^{2}$. 
For any $0<x<y<\infty$, we have
\begin{align*}
\mathbb{P}\left(x<\min_{1\leq i\leq N}a_{i}^{2}, \max_{1\leq i\leq N}a_{i}^{2}<y\right)
&=\mathbb{P}\left(x<a_{i}^{2}<y\,\,\text{for any $i=1,2,\ldots,N$}\right)
\\
&=\left(\mathbb{P}(x<a_{1}^{2}<y)\right)^{N}
=\left(F_{a}(y)-F_{a}(x)\right)^{N}.
\end{align*}
Let $f_{a}(x,y)$ be the joint probability density function of 
$\min_{1\leq i\leq N}a_{i}^{2}$ and $\max_{1\leq i\leq N}a_{i}^{2}$. 
Then, for any $0<x<y<\infty$,
\begin{align*}
f_{a}(x,y)=-\frac{\partial^{2}}{\partial x\partial y}\left(\left(F_{a}(y)-F_{a}(x)\right)^{N}\right)
=N(N-1)\left(F_{a}(y)-F_{a}(x)\right)^{N-2}f_{a}(x)f_{a}(y).
\end{align*}
This implies that
\begin{align*}
&\mathbb{P}\left(\min_{1\leq i\leq N}a_{i}^{2}+\max_{1\leq i\leq N}a_{i}^{2}<\frac{2}{\eta}\right)
\\
&=\iint_{0<x<y<\infty, x+y<\frac{2}{\eta}}f_{a}(x,y)dxdy
\\
&=\int_{0}^{\frac{2}{\eta}}\int_{0}^{\min\left(y,\frac{2}{\eta}-y\right)}N(N-1)\left(F_{a}(y)-F_{a}(x)\right)^{N-2}f_{a}(x)f_{a}(y)dxdy
\\
&=\int_{0}^{\frac{2}{\eta}}N\left(F_{a}(y)-F_{a}(x)\right)^{N-1}\bigg|_{x=\min\left(y,\frac{2}{\eta}-y\right)}^{x=0}f(y)dy 
\\
&=\int_{0}^{\frac{2}{\eta}}N\left(\left(F_{a}(y)\right)^{N-1}-\left(F_{a}(y)-F_{a}\left(\min\left(y,\frac{2}{\eta}-y\right)\right)\right)^{N-1}\right)f_{a}(y)dy
\\
&=\int_{0}^{\frac{1}{\eta}}N\left(F_{a}(y)\right)^{N-1}f_{a}(y)dy
+\int_{\frac{1}{\eta}}^{\frac{2}{\eta}}N\left(\left(F_{a}(y)\right)^{N-1}-\left(F_{a}(y)-F_{a}\left(\frac{2}{\eta}-y\right)\right)^{N-1}\right)f_{a}(y)dy
\\
&=\int_{0}^{\frac{2}{\eta}}N\left(F_{a}(y)\right)^{N-1}f_{a}(y)dy
-\int_{\frac{1}{\eta}}^{\frac{2}{\eta}}N\left(F_{a}(y)-F_{a}\left(\frac{2}{\eta}-y\right)\right)^{N-1}f_{a}(y)dy
\\
&=\left(F_{a}\left(2/\eta\right)\right)^{N}
-\int_{\frac{1}{\eta}}^{\frac{2}{\eta}}N\left(F_{a}(y)-F_{a}\left(\frac{2}{\eta}-y\right)\right)^{N-1}f_{a}(y)dy.
\end{align*}
Hence, we conclude that
\begin{align*}
&\mathbb{E}\left[L_{i_* i_*}\cdot \mbox{sign}(1-\eta a_{i_*}^2)\right]
\\
&=\frac{1}{N}\sum_{i=1}^{N}L_{ii}\cdot\left[2\left(\left(F_{a}\left(2/\eta\right)\right)^{N}
-\int_{\frac{1}{\eta}}^{\frac{2}{\eta}}N\left(F_{a}(y)-F_{a}\left(\frac{2}{\eta}-y\right)\right)^{N-1}f_{a}(y)dy\right)-1\right].
\end{align*}

When $d=1$ and $b=1$, we can also compute that
\begin{align*}
&\mathbb{E}\left\Vert I_{N}-\eta H\right\Vert^{s}
\\
&=\mathbb{E}\left[\left(\max_{1\leq i\leq N}\left|1-\eta a_{i}^{2}\right|\right)^{s}\right]
=\mathbb{E}\left[\max_{1\leq i\leq N}\left|1-\eta a_{i}^{2}\right|^{s}\right] 
\\
&=\int_{0}^{\infty}\mathbb{P}\left(\max_{1\leq i\leq N}\left|1-\eta a_{i}^{2}\right|^{s}>t\right)dt
\\
&=\int_{0}^{\infty}\left(1-\mathbb{P}\left(\max_{1\leq i\leq N}\left|1-\eta a_{i}^{2}\right|^{s}\leq t\right)\right)dt
\\
&=\int_{0}^{\infty}\left(1-\left(\mathbb{P}\left(\left|1-\eta a_{1}^{2}\right|^{s}\leq t\right)\right)^{N}\right)dt
\\
&=\int_{0}^{\infty}\left(1-\left(\mathbb{P}\left(\max\left\{0,(1-t^{\frac{1}{s}})\right\}\leq\eta a_{1}^{2}\leq 1+t^{\frac{1}{s}}\right)\right)^{N}\right)dt
\\
&=\int_{0}^{1}\left(1-\left(F_{a}\left(\frac{1+t^{\frac{1}{s}}}{\eta}\right)-F_{a}\left(\frac{1-t^{\frac{1}{s}}}{\eta}\right)\right)^{N}\right)dt
+\int_{1}^{\infty}\left(1-\left(F_{a}\left(\frac{1+t^{\frac{1}{s}}}{\eta}\right)\right)^{N}\right)dt.
\end{align*}
This completes the proof of Lemma~\ref{lem:first:order}.
\hfill $\Box$ 


Now, we are ready to prove Theorem~\ref{thm:first:order:alpha}.

Under the assumption that
$\hat{\rho}_{dis}=\mathbb{E}\left[\log(\max_{1\leq i\leq N}|1-\eta a_{i}^{2}|)\right]<0$,
we have $\hat{\rho}<0$ as well for sufficiently small $\delta$. 
Thus, there exist unique positive values $\hat{\alpha}$ and $\hat{\alpha}_{dis}$
such that $\hat{h}(\hat{\alpha})=1$ and $\hat{h}_{dis}(\hat{\alpha}_{dis})=1$.

Let us write $\hat{h}(s)=\hat{h}(s,\delta)$ to emphasize
the dependence on $\delta$ so that
\begin{equation}\label{compute:from}
\hat{h}(s,\delta)=\mathbb{E}\left[\left\Vert\mathcal{W}-\eta H\right\Vert^{s}\right].    
\end{equation}
Then we have $\hat{h}_{dis}(s)=\hat{h}(s,0)$.
By differentiating $\hat{h}(\hat{\alpha},\delta)=1$ with respect
to $\delta$, we obtain
\begin{equation}\label{set:delta:0}
\frac{\partial\hat{h}}{\partial s}(\hat{\alpha},\delta)\cdot\frac{\partial\hat{\alpha}}{\partial\delta}+\frac{\partial\hat{h}}{\partial\delta}(\hat{\alpha},\delta)=0.
\end{equation}
We can compute from \eqref{compute:from} that
\begin{equation*}
\frac{\partial\hat{h}}{\partial s}(\hat{\alpha},\delta)\bigg|_{\delta=0}
=\mathbb{E}\left[\log\left(\left\Vert\mathcal{W}-\eta H\right\Vert\right)\left\Vert\mathcal{W}-\eta H\right\Vert^{\hat{\alpha}}\right]\bigg|_{\delta=0}
=\mathbb{E}\left[\log\left(\left\Vert I_{N}-\eta H\right\Vert\right)\left\Vert I_{N}-\eta H\right\Vert^{\hat{\alpha}_{dis}}\right].
\end{equation*}
We can further compute that
\begin{align*}
\mathbb{E}\left[\log\left(\left\Vert I_{N}-\eta H\right\Vert\right)\left\Vert I_{N}-\eta H\right\Vert^{\hat{\alpha}_{dis}}\right]
=\mathbb{E}\left[\log\left(\max_{1\leq i\leq N}\left|1-\eta a_{i}^{2}\right|\right)
\max_{1\leq i\leq N}\left|1-\eta a_{i}^{2}\right|^{\hat{\alpha}_{dis}}\right].
\end{align*}
Notice that for any $x>0$, 
\begin{align*}
\mathbb{P}\left(\max_{1\leq i\leq N}\left|1-\eta a_{i}^{2}\right|\leq x\right)
&=\left(\mathbb{P}\left(\left|1-\eta a_{1}^{2}\right|\leq x\right)\right)^{N}
\\
&=\left(\mathbb{P}\left(\max\left(0,\frac{1-x}{\eta}\right)\leq a_{i}^{2}\leq\frac{1+x}{\eta}\right)\right)^{N}
\\
&=
\begin{cases}
\left(F_{a}\left(\frac{1+x}{\eta}\right)-F_{a}\left(\frac{1-x}{\eta}\right)\right)^{N} &\text{if $0<x<1$},
\\
\left(F_{a}\left(\frac{1+x}{\eta}\right)\right)^{N} &\text{if $x\geq 1$},
\end{cases}
\end{align*}
where we recall that $F_{a}$ is the cumulative distribution function of $a_{i}^{2}$.
By differentiating with respect to $x$, we obtain
\begin{align*}
&\frac{d}{dx}\mathbb{P}\left(\max_{1\leq i\leq N}\left|1-\eta a_{i}^{2}\right|\leq x\right)
\\
&=\begin{cases}
\frac{N}{\eta}\left(F_{a}\left(\frac{1+x}{\eta}\right)-F_{a}\left(\frac{1-x}{\eta}\right)\right)^{N-1}
\left(f_{a}\left(\frac{1+x}{\eta}\right)+f_{a}\left(\frac{1-x}{\eta}\right)\right) &\text{if $0<x<1$},
\\
\frac{N}{\eta}\left(F_{a}\left(\frac{1+x}{\eta}\right)\right)^{N-1}f_{a}\left(\frac{1+x}{\eta}\right) &\text{if $x\geq 1$},
\end{cases}
\end{align*}
where we recall that $f_{a}$ is the probability density function of $a_{i}^{2}$.
Hence, we conclude that
\begin{align*}
&\frac{\partial\hat{h}}{\partial s}(\hat{\alpha},\delta)\bigg|_{\delta=0}
\\
&=\mathbb{E}\left[\log\left(\max_{1\leq i\leq N}\left|1-\eta a_{i}^{2}\right|\right)
\max_{1\leq i\leq N}\left|1-\eta a_{i}^{2}\right|^{\hat{\alpha}_{dis}}\right]
\\
&=\int_{0}^{1}\log(x)x^{\hat{\alpha}_{dis}}\frac{N}{\eta}\left(F_{a}\left(\frac{1+x}{\eta}\right)-F_{a}\left(\frac{1-x}{\eta}\right)\right)^{N-1}
\left(f_{a}\left(\frac{1+x}{\eta}\right)+f_{a}\left(\frac{1-x}{\eta}\right)\right)dx
\\
&\qquad
+\int_{1}^{\infty}\log(x)x^{\hat{\alpha}_{dis}}\frac{N}{\eta}\left(F_{a}\left(\frac{1+x}{\eta}\right)\right)^{N-1}
f_{a}\left(\frac{1+x}{\eta}\right)dx.
\end{align*}

By setting $\delta=0$ in \eqref{set:delta:0}, we get
\begin{align}
\frac{\partial\hat{\alpha}}{\partial\delta}\bigg|_{\delta=0}
=\frac{-\frac{\partial\hat{h}}{\partial\delta}(\hat{\alpha},\delta)|_{\delta=0}}{\frac{\partial\hat{h}}{\partial s}(\hat{\alpha},\delta)|_{\delta=0}}.
\end{align}
Finally, the proof is completed by applying Lemma~\ref{lem:first:order}.
\hfill $\Box$ 

\subsubsection{Proof of Corollary~\ref{cor:first:order:comparison}}

\begin{corollary}[Complete Statement of Corollary~\ref{cor:first:order:comparison}]\label{cor:first:order:comparison:extended}
(i) Under the assumptions in Theorem~\ref{thm:first:order:alpha-extended}, 
further assume that the stepsize $\eta>\frac{2}{F_{a}^{-1}(2^{-\frac{1}{N}})}$.
For sufficiently small $\delta$, $\hat{\alpha}<\hat{\alpha}_{dis}$, i.e.
the tail gets heavier with the presence of network effect.
(ii) Further assume that $a_{i}$ are i.i.d. $\mathcal{N}(0,\sigma^{2})$ distributed.
Then, as $\delta\rightarrow 0$, we have the first-order expansion \eqref{alpha:expansion} with 
$f_{a}(x)=\frac{1}{\sigma\sqrt{2\pi x}}e^{-\frac{x}{2\sigma^{2}}}$
and $F_{a}(x)=\text{erf}\left(\sqrt{\frac{x}{2\sigma^{2}}}\right)$
in \eqref{given:1}-\eqref{given:2}.
Moreover, when the variance $\sigma^{2}>\frac{1}{\eta}(\text{erf}^{-1}(2^{-\frac{1}{N}}))^{-2}$,
for sufficiently small $\delta$, $\hat{\alpha}<\hat{\alpha}_{dis}$, i.e.
the tail gets heavier with the presence of network effect.
\end{corollary}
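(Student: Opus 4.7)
The plan is to use the first-order expansion \eqref{alpha:expansion:extended} obtained in Theorem~\ref{thm:first:order:alpha-extended} and show that, under the stated hypotheses, the coefficient of $\delta$ is strictly negative, so that $\hat\alpha < \hat\alpha_{dis}$ for all sufficiently small $\delta > 0$. Since $\sum_{i=1}^N L_{ii} > 0$, this reduces to controlling the sign of the numerator and denominator of the fraction appearing in \eqref{alpha:expansion:extended}.

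\textbf{Step 1: The denominator is positive.} The function $s \mapsto \hat h_{dis}(s) = \mathbb{E}[\|I_N - \eta H\|^s]$ is smooth and convex (by Hölder's inequality) with $\hat h_{dis}(0) = 1$, $\hat h_{dis}'(0) = \hat\rho_{dis} < 0$, and $\hat h_{dis}(\hat\alpha_{dis}) = 1$. Hence $\hat h_{dis}'(\hat\alpha_{dis}) > 0$, and this derivative coincides with the denominator $\mathbb{E}[\log(\max_i|1-\eta a_i^2|)\max_i|1-\eta a_i^2|^{\hat\alpha_{dis}}]$.

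\textbf{Step 2: Sufficient condition for the numerator to be positive.} I need to show $\mathbb{P}(\min_i a_i^2 + \max_i a_i^2 < 2/\eta) < 1/2$. The key observation is the elementary containment
\begin{equation*}
\Bigl\{\min_i a_i^2 + \max_i a_i^2 < 2/\eta\Bigr\} \;\subseteq\; \Bigl\{\max_i a_i^2 < 2/\eta\Bigr\},
\end{equation*}
valid because $\max_i a_i^2 \le \min_i a_i^2 + \max_i a_i^2$. Since the $a_i$ are i.i.d., $\mathbb{P}(\max_i a_i^2 < 2/\eta) = (F_a(2/\eta))^N$. Thus it suffices to require $(F_a(2/\eta))^N < 1/2$, i.e.\ $F_a(2/\eta) < 2^{-1/N}$, which (by monotonicity of $F_a$) is exactly $\eta > 2/F_a^{-1}(2^{-1/N})$, the hypothesis of part (i). Combining Steps~1 and~2, the fraction in \eqref{alpha:expansion:extended} is strictly positive, so $\hat\alpha = \hat\alpha_{dis} - c\delta + o(\delta)$ with $c > 0$, proving part (i).

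\textbf{Step 3: Specialization to the Gaussian case.} Under $a_i \sim \mathcal{N}(0,\sigma^2)$, the random variable $a_i^2/\sigma^2$ is $\chi^2_1$, so a direct change of variables yields the density $f_a(x) = \frac{1}{\sigma\sqrt{2\pi x}} e^{-x/(2\sigma^2)}$ and cdf $F_a(x) = \mathrm{erf}(\sqrt{x/(2\sigma^2)})$ for $x>0$; substituting these into \eqref{given:1}--\eqref{given:2} gives the stated simplification of \eqref{alpha:expansion:extended}. For the final claim, I invert $F_a(2/\eta) < 2^{-1/N}$ with this explicit $F_a$: this is equivalent to $\sqrt{1/(\eta\sigma^2)} < \mathrm{erf}^{-1}(2^{-1/N})$, i.e.\ $\sigma^2 > \frac{1}{\eta}(\mathrm{erf}^{-1}(2^{-1/N}))^{-2}$, which is exactly the hypothesis of part (ii). Thus part (ii) follows immediately from part (i).

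The main (very mild) obstacle is being careful that the expansion from Theorem~\ref{thm:first:order:alpha-extended} is valid in a neighborhood of $\delta = 0$, which in turn requires $\hat\rho_{dis} < 0$ so that $\hat\alpha_{dis}$ is well-defined; this is included in the standing assumptions of Theorem~\ref{thm:first:order:alpha-extended}. All remaining work is routine: the containment in Step~2 is one line, and the Gaussian computation in Step~3 amounts to rewriting the $\chi^2_1$ cdf.
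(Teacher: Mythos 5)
Your proposal is correct and follows essentially the same route as the paper: positivity of the denominator via convexity of $\hat h_{dis}$, the sufficient condition $(F_a(2/\eta))^N<1/2$ for the numerator, and the $\chi^2_1$ computation for the Gaussian case. The only cosmetic difference is that you obtain the bound $\mathbb{P}(\min_i a_i^2+\max_i a_i^2<2/\eta)\leq (F_a(2/\eta))^N$ from the event containment $\{\min+\max<2/\eta\}\subseteq\{\max<2/\eta\}$, whereas the paper drops the non-negative integral term in the explicit formula \eqref{given:1} — the same inequality either way.
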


Now we are ready to prove Corollary~\ref{cor:first:order:comparison}.

(i) First, we notice that
\begin{equation}
\mathbb{E}\left[\log\left(\max_{1\leq i\leq N}\left|1-\eta a_{i}^{2}\right|\right)
\max_{1\leq i\leq N}\left|1-\eta a_{i}^{2}\right|^{\hat{\alpha}_{dis}}\right]
=\frac{\partial\hat{h}}{\partial s}(\hat{\alpha},\delta)\bigg|_{\delta=0}
=\hat{h}'_{c}(\hat{\alpha}_{dis})>0,
\end{equation}
since $\hat{\alpha}_{dis}$ is the unique positive value such that $\hat{h}_{dis}(\hat{\alpha}_{dis})=1$
and $\hat{h}_{dis}(0)=1$ and $\hat{h}_{dis}'(0)<0$ and $\hat{h}_{dis}(s)$ is convex in $s$ (see \citep{ht_sgd_quad})
so that $\hat{h}'_{c}(\hat{\alpha}_{dis})>0$. 

For sufficiently small $\delta$, if follows from Theorem~\ref{thm:first:order:alpha} that
$\hat{\alpha}<\hat{\alpha}_{dis}$ if and only if 
\begin{equation}\label{to:hold}
2\left(\left(F_{a}\left(\frac{2}{\eta}\right)\right)^{N}
-\int_{\frac{1}{\eta}}^{\frac{2}{\eta}}N\left(F_{a}(y)-F_{a}\left(\frac{2}{\eta}-y\right)\right)^{N-1}f_{a}(y)dy\right)-1<0.
\end{equation}
Note that a sufficient condition for \eqref{to:hold} to hold is that
\begin{equation}
2\left(F_{a}\left(\frac{2}{\eta}\right)\right)^{N}<1,
\end{equation}
which is equivalent to
\begin{equation}
\eta>\frac{2}{F_{a}^{-1}(2^{-\frac{1}{N}})}.    
\end{equation}

(ii) Let $f_{a}(x)$ and $F_{a}(x)$ be the probability density
function and the cumulative distribution function of $a_{i}^{2}$. 
In particular, when $a_{i}\sim\mathcal{N}(0,\sigma^{2})$, we have
for any $x>0$, $\mathbb{P}(a_{i}^{2}\leq x)=\mathbb{P}(-\sqrt{x}\leq a_{i}\leq\sqrt{x})
=\frac{1}{\sigma\sqrt{2\pi}}\int_{-\sqrt{x}}^{\sqrt{x}}e^{-\frac{y^{2}}{2\sigma^{2}}}dy$
so that for any $x>0$,
\begin{equation}
f_{a}(x)=\frac{d}{dx}\frac{1}{\sigma\sqrt{2\pi}}\int_{-\sqrt{x}}^{\sqrt{x}}e^{-\frac{y^{2}}{2\sigma^{2}}}dy
=\frac{1}{\sigma\sqrt{2\pi x}}e^{-\frac{x}{2\sigma^{2}}},
\end{equation}
and thus
\begin{equation}
F_{a}(x)=\int_{0}^{x}\frac{1}{\sigma\sqrt{2\pi y}}e^{-\frac{y}{2\sigma^{2}}}dy
=\text{erf}\left(\sqrt{\frac{x}{2\sigma^{2}}}\right),
\end{equation}
where $\text{erf}$ denotes the error function, 
i.e. $\text{erf}(x):=\frac{2}{\sqrt{\pi}}\int_{0}^{x}e^{-t^{2}}dt$.
By applying Theorem~\ref{thm:first:order:alpha}, 
we conclude that 
as $\delta\rightarrow 0$, we have the first-order expansion \eqref{alpha:expansion} with 
$f_{a}(x)=\frac{1}{\sigma\sqrt{2\pi x}}e^{-\frac{x}{2\sigma^{2}}}$
and $F_{a}(x)=\text{erf}\left(\sqrt{\frac{x}{2\sigma^{2}}}\right)$
in \eqref{given:1}-\eqref{given:2}.

Finally, it follows from (i) and its proof that
if 
\begin{equation}\label{suff:cond}
2\left(F_{a}\left(\frac{2}{\eta}\right)\right)^{N}<1,
\end{equation}
then, for sufficiently small $\delta$, $\hat{h}(s)>\hat{h}_{dis}(s)$. 
When $a_{i}$ are i.i.d. $\mathcal{N}(0,\sigma^{2})$ distributed, 
the condition \eqref{suff:cond} is equivalent to
\begin{equation}
F_{a}\left(\frac{2}{\eta}\right)
=\text{erf}\left(\sqrt{\frac{1}{\eta\sigma^{2}}}\right)<2^{-\frac{1}{N}}.    
\end{equation}
This completes the proof.
\hfill $\Box$ 


\subsubsection{Proof of Corollary~\ref{cor:DSGD:CSGD}}
The result directly follows from Proposition~\ref{prop:disconnected:centralized}
and Theorem~\ref{thm:first:order:alpha}.
\hfill $\Box$

\subsection{Proofs of Results in Section~\ref{sec:conv}}

\subsubsection{Proof of Theorem~\ref{thm:moments}}
We adapt the proof of Theorem~6 in \citet{ht_sgd_quad}.
We recall that 
\begin{equation}
x^{(k)}=M^{(k)}x^{(k-1)}+q^{(k)},
\end{equation}
which implies that
\begin{equation}
\left\Vert x^{(k)}\right\Vert
\leq\left\Vert M^{(k)}x^{(k-1)}\right\Vert
+\left\Vert q^{(k)}\right\Vert.
\end{equation}

(i) If the tail-index $\hat{\alpha}\leq 1$,
then for any $0<p<\hat{\alpha}$, 
we have $\hat{h}(p)=\mathbb{E}\Vert M^{(k)}\Vert^{p}<1$
and moreover by applying Lemma~23 in \citet{ht_sgd_quad},
\begin{equation}
\left\Vert x^{(k)}\right\Vert^{p}
\leq\left\Vert M^{(k)}x^{(k-1)}\right\Vert^{p}
+\left\Vert q^{(k)}\right\Vert^{p}.
\end{equation}
so that
\begin{align*}
\mathbb{E}\left\Vert x^{(k)}\right\Vert^{p}
&\leq 
\mathbb{E}\left\Vert M^{(k)}\right\Vert^{p}\left\Vert x^{(k-1)}\right\Vert^{p}
+\mathbb{E}\left\Vert q^{(k)}\right\Vert^{p}
\\
&=
\mathbb{E}\left\Vert M^{(k)}\right\Vert^{p}\mathbb{E}\left\Vert x^{(k-1)}\right\Vert^{p}
+\left\Vert q^{(k)}\right\Vert^{p}
=\hat{h}(p)\mathbb{E}\left\Vert x^{(k-1)}\right\Vert^{p}
+\mathbb{E}\left\Vert q^{(1)}\right\Vert^{p},
\end{align*}
with $\hat{h}(p)\in(0,1)$, where we used the fact that $M^{(k)}$ is independent of $x^{(k-1)}$.
By iterating over $k$, we get
\begin{equation}
\mathbb{E}\left\Vert x^{(k)}\right\Vert^{p}
\leq
\left(\hat{h}(p)\right)^{k}\mathbb{E}\left\Vert x^{(0)}\right\Vert^{p}
+\frac{1-(\hat{h}(p))^{k}}{1-\hat{h}(p)}\mathbb{E}\left\Vert q^{(1)}\right\Vert^{p}.
\end{equation}

(ii) If the tail-index $\hat{\alpha}>1$, 
then for any $1<p<\hat{\alpha}$, 
by applying Lemma~23 in \citet{ht_sgd_quad}, 
for any $\epsilon>0$, we have
\begin{equation}
\left\Vert x^{(k)}\right\Vert^{p}
\leq
(1+\epsilon)\left\Vert M^{(k)}x^{(k-1)}\right\Vert^{p}
+\frac{(1+\epsilon)^{\frac{p}{p-1}}-(1+\epsilon)}{\left((1+\epsilon)^{\frac{1}{p-1}}-1\right)^{p}}\left\Vert q^{(k)}\right\Vert^{p},
\end{equation}
which (similar as in (i)) implies that
\begin{equation}
\mathbb{E}\left\Vert x^{(k)}\right\Vert^{p}
\leq
(1+\epsilon)\hat{h}(p)
\mathbb{E}\left\Vert x^{(k-1)}\right\Vert^{p}
+\frac{(1+\epsilon)^{\frac{p}{p-1}}-(1+\epsilon)}{\left((1+\epsilon)^{\frac{1}{p-1}}-1\right)^{p}}
\mathbb{E}\left\Vert q^{(1)}\right\Vert^{p}.
\end{equation}
We choose $\epsilon>0$
so that $(1+\epsilon)\hat{h}(p)<1$.
By iterating over $k$, we get
\begin{equation}
\mathbb{E}\left\Vert x^{(k)}\right\Vert^{p}
\leq
((1+\epsilon)\hat{h}(p))^{k}\mathbb{E}\left\Vert x^{(0)}\right\Vert^{p}
+\frac{1-((1+\epsilon)\hat{h}(p))^{k}}{1-(1+\epsilon)\hat{h}(p)}
\frac{(1+\epsilon)^{\frac{p}{p-1}}-(1+\epsilon)}{\left((1+\epsilon)^{\frac{1}{p-1}}-1\right)^{p}}
\mathbb{E}\left\Vert q^{(1)}\right\Vert^{p}.
\end{equation}
The proof is complete.
\hfill $\Box$ 


\subsubsection{Proof of Theorem~\ref{thm:conv}}
We adapt the proof of Theorem~8 in \citet{ht_sgd_quad}.
For any $\nu^{(0)},\tilde{\nu}^{(0)}\in\mathcal{P}_{p}(\mathbb{R}^{Nd})$, 
there exists a couple $x^{(0)}\sim\nu^{(0)}$ and $\tilde{x}^{(0)}\sim\tilde{\nu}^{(0)}$ independent
of $\left(M^{(k)},q^{(k)}\right)_{k\in\mathbb{N}}$ and
$\mathcal{W}_{p}^{p}\left(\nu^{(0)},\tilde{\nu}^{(0)}\right)=\mathbb{E}\left\Vert x^{(0)}-\tilde{x}^{(0)}\right\Vert^{p}$.
We define $x^{(k)}$ and $\tilde{x}^{(k)}$ starting from $x^{(0)}$ and $\tilde{x}^{(0)}$ respectively,
via the iterates
\begin{align}
&x^{(k)}=M^{(k)}x^{(k-1)}+q^{(k)},
\\
&\tilde{x}^{(k)}=M^{(k)}\tilde{x}^{(k-1)}+q^{(k)},
\end{align}
and let $\nu^{(k)}$ and $\tilde{\nu}^{(k)}$ denote
the probability laws of $x^{(k)}$ and $\tilde{x}^{(k)}$ respectively. 
For any $p<\hat{\alpha}$, since $\mathbb{E}\Vert M^{(k)}\Vert^{\hat{\alpha}}=1$
and $\mathbb{E}\Vert q^{(k)}\Vert^{\hat{\alpha}}<\infty$, 
we have $\nu^{(k)},\tilde{\nu}^{(k)}\in\mathcal{P}_{p}(\mathbb{R}^{Nd})$
for any $k$. Moreover, we have
\begin{align*}
\mathbb{E}\left\Vert x^{(k)}-\tilde{x}^{(k)}\right\Vert^{p}
\leq
\mathbb{E}\left[\left\Vert M^{(k)}\left(x^{(k-1)}-\tilde{x}^{(k-1)}\right)\right\Vert^{p}\right]
&\leq\mathbb{E}\left[\left\Vert M^{(k)}\right\Vert^{p}\right]
\mathbb{E}\left[\left\Vert x^{(k-1)}-\tilde{x}^{(k-1)}\right\Vert^{p}\right]\\
&=\hat{h}(p)\mathbb{E}\left[\left\Vert x^{(k-1)}-\tilde{x}^{(k-1)}\right\Vert^{p}\right],
\end{align*}
which by iterating implies that
\begin{equation}
\mathcal{W}_{p}^{p}\left(\nu^{(k)},\tilde{\nu}^{(k)}\right)
\leq
\mathbb{E}\left\Vert x^{(k)}-\tilde{x}^{(k)}\right\Vert^{p}
\leq
\left(\hat{h}(p)\right)^{k}
\mathbb{E}\left\Vert x^{(0)}-\tilde{x}^{(0)}\right\Vert^{p}
=\left(\hat{h}(p)\right)^{k}
\mathcal{W}_{p}^{p}\left(\nu^{(0)},\tilde{\nu}^{(0)}\right).
\end{equation}
By letting $\tilde{\nu}^{(0)}=\nu^{(\infty)}$, the probability
law of the stationary distribution $x^{(\infty)}$, 
we conclude that
\begin{equation}
\mathcal{W}_{p}\left(\nu^{(k)},\nu^{(\infty)}\right)
\leq
\left(\left(\hat{h}(p)\right)^{1/p}\right)^{k}
\mathcal{W}_{p}\left(\nu^{(0)},\nu^{(\infty)}\right).
\end{equation}
Finally, notice that $1\leq p<\hat{\alpha}$,
and therefore $\hat{h}(p)<1$.
The proof is complete.
\hfill $\Box$ 

\subsection{Proofs of Results in Section~\ref{sec:general:case}}

\subsubsection{Proof of Lemma~\ref{lem:general:d:b}}
By taking the expectations in \eqref{without:expectation-2}, we get
\begin{equation}
\mathbb{E}\left\| I_{Nd} - \eta H - \delta (L\otimes I_d) \right\|^s 
=\mathbb{E}\left\| I_{Nd} - \eta H \right\|^s
-s\delta\mathbb{E}\left[\mbox{sign} \left(1-\eta\lambda_{j_*(i_{\ast})}(H_{i_*})\right)  L_{i_* i_*}\right]
+o(\delta).
\end{equation}
Note that $H_{i}=\frac{1}{b}\sum_{j=1}^{b}a_{i,j}(a_{i,j})^{T}$ are i.i.d.
distributed, where $a_{i,j}$ are i.i.d. following a continuous distribution. 
Therefore, $i_{\ast}$ is uniformly distributed on $\{1,2,\ldots,N\}$ and we can compute that
\begin{align*}
\mathbb{E}\left[\mbox{sign} \left(1-\eta\lambda_{j_*(i_{\ast})}(H_{i_*})\right)  L_{i_* i_*}\right]
&=\sum_{i=1}^{N}\mathbb{P}(i_{\ast}=i)\mathbb{E}\left[\mbox{sign} \left(1-\eta\lambda_{j_*(i_{\ast})}(H_{i_*})\right)  L_{i_* i_*}|i_{\ast}=i\right]
\\
&=\sum_{i=1}^{N}\mathbb{P}(i_{\ast}=i)L_{ii}\mathbb{E}\left[\mbox{sign} \left(1-\eta\lambda_{j_*(i_{\ast})}(H_{i_*})\right)  |i_{\ast}=i\right]
\\
&=\left(\frac{1}{N}\sum_{i=1}^{N}L_{ii}\right)\mathbb{E}\left[ \mbox{sign} \left(1-\eta\lambda_{j_*(i_{\ast})}(H_{i_*})\right)  |i_{\ast}=1\right].
\end{align*}
Moreover, we can compute that
\begin{align*}
\mathbb{E}\left[ \mbox{sign} \left(1-\eta\lambda_{j_*(i_{\ast})}(H_{i_*})\right)\right]
&=\frac{1}{N}\sum_{i=1}^{N}\mathbb{E}\left[ \mbox{sign} \left(1-\eta\lambda_{j_*(i_{\ast})}(H_{i_*})\right)|i_{\ast}=i\right]
\\
&=\mathbb{E}\left[ \mbox{sign} \left(1-\eta\lambda_{j_*(i_{\ast})}(H_{i_*})\right)|i_{\ast}=1\right],
\end{align*}
and we can further compute that
\begin{align*}
\mathbb{E}\left[ \mbox{sign} \left(1-\eta\lambda_{j_*(i_{\ast})}(H_{i_*})\right)|i_{\ast}=1\right]
&=\mathbb{E}\left[ \mbox{sign} \left(1-\eta\lambda_{j_*(i_{\ast})}(H_{i_*})\right)\right]
\\
&=\mathbb{P}\left(\lambda_{j_*(i_{\ast})}(H_{i_*})<1/\eta\right)-\mathbb{P}\left(\lambda_{j_*(i_{\ast})}(H_{i_*})>1/\eta\right)
\\
&=2\mathbb{P}\left(\lambda_{j_*(i_{\ast})}(H_{i_*})<1/\eta\right)-1,
\end{align*}
and hence as $\delta\rightarrow 0$, we have the first-order expansion:
\begin{equation}
\hat{h}(s)
=\hat{h}_{dis}(s)
-\frac{s\delta\sum_{i=1}^{N}L_{ii}}{N}\left(2\mathbb{P}\left(\lambda_{j_*(i_{\ast})}(H_{i_*})<1/\eta\right)-1\right)
+o(\delta),
\end{equation}
where
\begin{align}
\hat{h}(s)=\mathbb{E}\left\| I_{Nd} - \eta H - \delta (L\otimes I_d) \right\|^s 
\qquad
\hat{h}_{dis}(s)=\mathbb{E}\left\| I_{Nd} - \eta H \right\|^s,
\end{align}
which completes the proof.
\hfill $\Box$ 


\subsubsection{Proof of Theorem~\ref{thm:general:d:b}}
Under the assumption that
$\hat{\rho}_{dis}=\mathbb{E}\left[\log\Vert I_{Nd}-\eta H\Vert\right]<0$,
we have $\hat{\rho}<0$ as well for sufficiently small $\delta$. 
Thus, there exist unique positive values $\hat{\alpha}$ and $\hat{\alpha}_{dis}$
such that $\hat{h}(\hat{\alpha})=1$ and $\hat{h}_{dis}(\hat{\alpha}_{dis})=1$.

Let us write $\hat{h}(s)=\hat{h}(s,\delta)$ to emphasize
the dependence on $\delta$ so that
\begin{equation}\label{compute:from:general}
\hat{h}(s,\delta)=\mathbb{E}\left[\left\Vert\mathcal{W}-\eta H\right\Vert^{s}\right].    
\end{equation}
Then we have $\hat{h}_{dis}(s)=\hat{h}(s,0)$.
By differentiating $\hat{h}(\hat{\alpha},\delta)=1$ with respect
to $\delta$, we obtain
\begin{equation}\label{set:delta:0:general}
\frac{\partial\hat{h}}{\partial s}(\hat{\alpha},\delta)\cdot\frac{\partial\hat{\alpha}}{\partial\delta}+\frac{\partial\hat{h}}{\partial\delta}(\hat{\alpha},\delta)=0.
\end{equation}
We can compute from \eqref{compute:from:general} that
\begin{align*}
\frac{\partial\hat{h}}{\partial s}(\hat{\alpha},\delta)\bigg|_{\delta=0}
&=\mathbb{E}\left[\log\left(\left\Vert\mathcal{W}-\eta H\right\Vert\right)\left\Vert\mathcal{W}-\eta H\right\Vert^{\hat{\alpha}}\right]\bigg|_{\delta=0}
\\
&=\mathbb{E}\left[\log\left(\left\Vert I_{Nd}-\eta H\right\Vert\right)\left\Vert I_{Nd}-\eta H\right\Vert^{\hat{\alpha}_{dis}}\right].
\end{align*}
Hence, by letting $\delta=0$ in \eqref{set:delta:0:general} and applying Lemma~\ref{lem:general:d:b}, we complete the proof.
\hfill $\Box$ 

\subsubsection{Proof of Theorem~\ref{thm:first:order:alpha:general:b}}
When $d=1$, we can compute that
$\hat{\rho}_{dis}=\mathbb{E}\left[\log\left(\max_{1\leq i\leq N}\left|1-\frac{\eta}{b}\sum_{j=1}^{b}a_{i,j}^{2}\right|\right)\right]$.
Under the assumption that
$\hat{\rho}_{dis}<0$,
we have $\hat{\rho}<0$ as well for sufficiently small $\delta$. 
Thus, there exist unique positive values $\hat{\alpha}$ and $\hat{\alpha}_{dis}$
such that $\hat{h}(\hat{\alpha})=1$ and $\hat{h}_{dis}(\hat{\alpha}_{dis})=1$.

When $d=1$, we can compute that
\begin{align*}
\mathbb{P}\left(\lambda_{j_*(i_{\ast})}(H_{i_*})<1/\eta\right)
&=\mathbb{P}\left(\frac{1}{b}\sum_{j=1}^{b}a_{i_{\ast},j}^{2}<\frac{1}{\eta}\right)
\\
&=\mathbb{P}\left(\min_{1\leq i\leq N}\frac{1}{b}\sum_{j=1}^{b}a_{i,j}^{2}
+\max_{1\leq i\leq N}\frac{1}{b}\sum_{j=1}^{b}a_{i,j}^{2}<\frac{2}{\eta}\right)
\end{align*}
Moreover, we can compute that
\begin{align*}
&\mathbb{E}\left[\log\left(\left\Vert I_{N}-\eta H\right\Vert\right)\left\Vert I_{N}-\eta H\right\Vert^{\hat{\alpha}_{dis}}\right]
\\
&=\mathbb{E}\left[\log\left(\max_{1\leq i\leq N}\left|1-\frac{\eta}{b}\sum_{j=1}^{b}a_{i,j}^{2}\right|\right)\max_{1\leq i\leq N}\left|1-\frac{\eta}{b}\sum_{j=1}^{b}a_{i,j}^{2}\right|^{\hat{\alpha}_{dis}}\right]
\end{align*}
The rest of the proof follows similarly as in the proof of Theorem~\ref{thm:first:order:alpha}.
\hfill $\Box$ 

\subsubsection{Proof of Corollary~\ref{cor:first:order:alpha:general:b}}
When $d=1$ and $a_{i,j}$ are i.i.d. $\mathcal{N}(0,\sigma^{2})$, we have
\begin{equation}
\sum_{j=1}^{b}a_{i,j}^{2}=\sigma^{2}\chi^{2}(b),
\end{equation}
in distribution, where $\chi^{2}(b)$ denotes a chi-square random variable
with $b$ degrees of freedom and hence
\begin{equation}
f_{b}(x)=\frac{1}{2^{b/2}\Gamma(b/2)}\frac{x^{\frac{b}{2}-1}}{\sigma^{b}}e^{-\frac{x}{2\sigma^{2}}},
\end{equation}
and
\begin{equation}
F_{b}(x)=\frac{1}{\Gamma(b/2)}\gamma\left(\frac{b}{2},\frac{x}{2\sigma^{2}}\right),
\end{equation}
where $\Gamma(\cdot)$ is the gamma function
and $\gamma(\cdot,\cdot)$ is the lower incomplete gamma function.
The proof is completed by applying Theorem~\ref{thm:first:order:alpha:general:b}.
\hfill $\Box$

\subsubsection{Proof of Lemma~\ref{lem:general:d:b:i}}
By taking the expectations, we get
\begin{equation}
\mathbb{E}\left\| I_{Nd} - \eta H - \delta (L\otimes I_d) \right\|^s 
=\mathbb{E}\left\| I_{Nd} - \eta H \right\|^s
-s\delta\mathbb{E}\left[\mbox{sign} \left(1-\eta\lambda_{j_*(i_{\ast})}(H_{i_*})\right)  L_{i_* i_*}\right]
+o(\delta),
\end{equation}
and we can compute that
\begin{align*}
&\mathbb{E}\left[\mbox{sign} \left(1-\eta\lambda_{j_*(i_{\ast})}(H_{i_*})\right)  L_{i_* i_*}\right]
\\
&=\sum_{i=1}^{N}\mathbb{P}(i_{\ast}=i)\mathbb{E}\left[\mbox{sign} \left(1-\eta\lambda_{j_*(i_{\ast})}(H_{i_*})\right)  L_{i_* i_*}|i_{\ast}=i\right]
\\
&=\sum_{i=1}^{N}\mathbb{P}(i_{\ast}=i)L_{ii}\mathbb{E}\left[\mbox{sign} \left(1-\eta\lambda_{j_*(i_{\ast})}(H_{i_*})\right)|i_{\ast}=i\right]
\\
&=\sum_{i=1}^{N}\mathbb{P}(i_{\ast}=i)L_{ii}\left(2\mathbb{P}\left(\lambda_{j_*(i_{\ast})}(H_{i_*})<1/\eta|i_{\ast}=i\right)-1\right)
\end{align*}

and hence as $\delta\rightarrow 0$, we have the first-order expansion:
\begin{equation}
\hat{h}(s)
=\hat{h}_{dis}(s)
-s\delta\sum_{i=1}^{N}\mathbb{P}(i_{\ast}=i)L_{ii}\left(2\mathbb{P}\left(\lambda_{j_*(i_{\ast})}(H_{i_*})<1/\eta|i_{\ast}=i\right)-1\right)
+o(\delta),
\end{equation}
where
\begin{align}
\hat{h}(s)=\mathbb{E}\left\| I_{Nd} - \eta H - \delta (L\otimes I_d) \right\|^s 
\qquad
\hat{h}_{dis}(s)=\mathbb{E}\left\| I_{Nd} - \eta H \right\|^s,
\end{align}
which completes the proof.
\hfill $\Box$ 


\subsubsection{Proof of Theorem~\ref{thm:general:d:b:i}}
Under the assumption that
$\hat{\rho}_{dis}=\mathbb{E}\left[\log\Vert I-\eta H\Vert\right]<0$,
we have $\hat{\rho}<0$ as well for sufficiently small $\delta$. 
Thus, there exist unique positive values $\hat{\alpha}$ and $\hat{\alpha}_{dis}$
such that $\hat{h}(\hat{\alpha})=1$ and $\hat{h}_{dis}(\hat{\alpha}_{dis})=1$.

Let us write $\hat{h}(s)=\hat{h}(s,\delta)$ to emphasize
the dependence on $\delta$ so that
\begin{equation}\label{compute:from:general:i}
\hat{h}(s,\delta)=\mathbb{E}\left[\left\Vert\mathcal{W}-\eta H\right\Vert^{s}\right].    
\end{equation}
Then we have $\hat{h}_{dis}(s)=\hat{h}(s,0)$.
By differentiating $\hat{h}(\hat{\alpha},\delta)=1$ with respect
to $\delta$, we obtain
\begin{equation}\label{set:delta:0:general:i}
\frac{\partial\hat{h}}{\partial s}(\hat{\alpha},\delta)\cdot\frac{\partial\hat{\alpha}}{\partial\delta}+\frac{\partial\hat{h}}{\partial\delta}(\hat{\alpha},\delta)=0.
\end{equation}
We can compute from \eqref{compute:from:general:i} that
\begin{align*}
\frac{\partial\hat{h}}{\partial s}(\hat{\alpha},\delta)\bigg|_{\delta=0}
&=\mathbb{E}\left[\log\left(\left\Vert\mathcal{W}-\eta H\right\Vert\right)\left\Vert\mathcal{W}-\eta H\right\Vert^{\hat{\alpha}}\right]\bigg|_{\delta=0}
\\
&=\mathbb{E}\left[\log\left(\left\Vert I_{Nd}-\eta H\right\Vert\right)\left\Vert I_{Nd}-\eta H\right\Vert^{\hat{\alpha}_{dis}}\right].
\end{align*}
Hence, by letting $\delta=0$ in \eqref{set:delta:0:general:i} and applying Lemma~\ref{lem:general:d:b:i}, we complete the proof.
\hfill $\Box$

\subsubsection{Proof of Theorem~\ref{thm:first:order:alpha:general:b:i}}
When $d=1$, 
$\hat{\rho}_{dis}=\mathbb{E}\left[\log\left(\max_{1\leq i\leq N}\left|1-\eta\frac{1}{b_{i}}\sum_{j=1}^{N} a_{i,j}^{2}\right|\right)\right]$.
Under the assumption that
$\hat{\rho}_{dis}<0$,
we have $\hat{\rho}<0$ as well for sufficiently small $\delta$. 
Thus, there exist unique positive values $\hat{\alpha}$ and $\hat{\alpha}_{dis}$
such that $\hat{h}(\hat{\alpha})=1$ and $\hat{h}_{dis}(\hat{\alpha}_{dis})=1$.

When $d=1$, 
$i_{\ast}=\arg\max_{1\leq i\leq N}\left|1-\frac{\eta}{b_{i}}\sum_{j=1}^{b_{i}}a_{i,j}^{2}\right|$.
It is easy to see that
\begin{equation}
i_{\ast}\in\left\{\arg\min_{1\leq i\leq N}\frac{1}{b_{i}}\sum_{j=1}^{b_{i}}a_{i,j}^{2},\arg\max_{1\leq i\leq N}\frac{1}{b_{i}}\sum_{j=1}^{b_{i}}a_{i,j}^{2}\right\},
\end{equation}
and one can further deduce that
\begin{equation}
1-\eta\frac{1}{b_{i_{\ast}}}\sum_{j=1}^{b_{i_{\ast}}}a_{i_{\ast},j}^{2}>0
\quad\text{if and only if}\quad
i_{\ast}=\arg\min_{1\leq i\leq N}\frac{1}{b_{i}}\sum_{j=1}^{b_{i}}a_{i,j}^{2},
\end{equation}
which is equivalent to
\begin{equation}
\left|1-\eta\min_{1\leq i\leq N}\frac{1}{b_{i}}\sum_{j=1}^{b_{i}}a_{i,j}^{2}\right|
>\left|1-\eta\max_{1\leq i\leq N}\frac{1}{b_{i}}\sum_{j=1}^{b_{i}}a_{i,j}^{2}\right|,
\end{equation}
which, by the similar argument as in the proof of Lemma~\ref{lem:first:order},
holds if and only if 
\begin{equation}
\eta\min_{1\leq i\leq N}\frac{1}{b_{i}}\sum_{j=1}^{b_{i}}a_{i,j}^{2}
+\eta\max_{1\leq i\leq N}\frac{1}{b_{i}}\sum_{j=1}^{b_{i}}a_{i,j}^{2}<2.    
\end{equation}
Therefore, we can compute that
\begin{align*}
&\mathbb{E}\left[\mbox{sign} \left(1-\eta\frac{1}{b_{i_{\ast}}}\sum_{j=1}^{b_{i_{\ast}}}a_{i_{\ast},j}^{2}\right)  L_{i_* i_*}\right]
\\
&=\sum_{i=1}^{N}L_{ii}\mathbb{P}\left(\frac{1}{b_{i}}\sum_{j=1}^{b_{i}}a_{i,j}^{2}<\min_{k\neq i}\frac{1}{b_{k}}\sum_{j=1}^{b_{k}}a_{k,j}^{2},\frac{1}{b_{i}}\sum_{j=1}^{b_{i}}a_{i,j}^{2}+\max_{k\neq i}\frac{1}{b_{k}}\sum_{j=1}^{b_{k}}a_{k,j}^{2}<\frac{2}{\eta}\right)
\\
&\qquad
-\sum_{i=1}^{N}L_{ii}\mathbb{P}\left(\frac{1}{b_{i}}\sum_{j=1}^{b_{i}}a_{i,j}^{2}>\max_{k\neq i}\frac{1}{b_{k}}\sum_{j=1}^{b_{k}}a_{k,j}^{2},\frac{1}{b_{i}}\sum_{j=1}^{b_{i}}a_{i,j}^{2}+\min_{k\neq i}\frac{1}{b_{k}}\sum_{j=1}^{b_{k}}a_{k,j}^{2}>\frac{2}{\eta}\right).
\end{align*}

Let $f_{i}(x)$ and $F_{i}(x)$ be the probability density
function and the cumulative distribution function of $\frac{1}{b_{i}}\sum_{j=1}^{b_{i}}a_{i,j}^{2}$. 
For any $0<x<y<\infty$, we have
\begin{align*}
\mathbb{P}\left(x<\min_{k\neq i}\frac{1}{b_{k}}\sum_{j=1}^{b_{k}}a_{k,j}^{2}, \max_{k\neq i}\frac{1}{b_{k}}\sum_{j=1}^{b_{k}}a_{k,j}^{2}<y\right)
&=\mathbb{P}\left(x<\frac{1}{b_{k}}\sum_{j=1}^{b_{k}}a_{k,j}^{2}<y\,\,\text{for any $k\neq i$}\right)
\\
&=\prod_{k\neq i}\left(F_{k}(y)-F_{k}(x)\right).
\end{align*}
Let $f_{i}(x,y)$ be the joint probability density function of 
$\min_{k\neq i}\frac{1}{b_{k}}\sum_{j=1}^{b_{k}}a_{k,j}^{2}$ and $\max_{k\neq i}\frac{1}{b_{k}}\sum_{j=1}^{b_{k}}a_{k,j}^{2}$. 
Then, for any $0<x<y<\infty$,
\begin{align*}
f_{i}(x,y)=-\frac{\partial^{2}}{\partial x\partial y}\left(\prod_{k\neq i}\left(F_{k}(y)-F_{k}(x)\right)\right)
=\sum_{k\neq i}\sum_{j\neq k,i}f_{k}(y)f_{j}(x)\prod_{\ell\neq j,k,i}(F_{\ell}(y)-F_{\ell}(x)).
\end{align*}
This implies that
\begin{align*}
&\sum_{i=1}^{N}L_{ii}\mathbb{P}\left(\frac{1}{b_{i}}\sum_{j=1}^{b_{i}}a_{i,j}^{2}<\min_{k\neq i}\frac{1}{b_{k}}\sum_{j=1}^{b_{k}}a_{k,j}^{2},\frac{1}{b_{i}}\sum_{j=1}^{b_{i}}a_{i,j}^{2}+\max_{k\neq i}\frac{1}{b_{k}}\sum_{j=1}^{b_{k}}a_{k,j}^{2}<\frac{2}{\eta}\right)
\\
&=\sum_{i=1}^{N}L_{ii}\int_{0}^{\frac{2}{\eta}}\int_{x}^{\frac{2}{\eta}}\int_{0}^{\min(x,\frac{2}{\eta}-y)}f_{i}(z)f_{i}(x,y)dzdydx
\\
&=\sum_{i=1}^{N}L_{ii}\int_{0}^{\frac{2}{\eta}}\int_{x}^{\frac{2}{\eta}}F_{i}\left(\min\left(x,\frac{2}{\eta}-y\right)\right)f_{i}(x,y)dydx
\\
&=\sum_{i=1}^{N}L_{ii}\int_{0}^{\frac{2}{\eta}}\int_{x}^{\frac{2}{\eta}}F_{i}\left(\min\left(x,\frac{2}{\eta}-y\right)\right)\sum_{k\neq i}\sum_{j\neq k,i}f_{k}(y)f_{j}(x)\prod_{\ell\neq j,k,i}(F_{\ell}(y)-F_{\ell}(x))dydx.
\end{align*}
Similarly, 
\begin{align*}
&\sum_{i=1}^{N}L_{ii}\mathbb{P}\left(\frac{1}{b_{i}}\sum_{j=1}^{b_{i}}a_{i,j}^{2}>\max_{k\neq i}\frac{1}{b_{k}}\sum_{j=1}^{b_{k}}a_{k,j}^{2},\frac{1}{b_{i}}\sum_{j=1}^{b_{i}}a_{i,j}^{2}+\min_{k\neq i}\frac{1}{b_{k}}\sum_{j=1}^{b_{k}}a_{k,j}^{2}>\frac{2}{\eta}\right)
\\
&=\sum_{i=1}^{N}L_{ii}\int_{0}^{\infty}\int_{x}^{\infty}\int_{\max(y,\frac{2}{\eta}-x)}^{\infty}f_{i}(z)f_{i}(x,y)dzdydx
\\
&=\sum_{i=1}^{N}L_{ii}\int_{0}^{\infty}\int_{x}^{\infty}\left(1-F_{i}\left(\max\left(y,\frac{2}{\eta}-x\right)\right)\right)f_{i}(x,y)dydx
\\
&=\sum_{i=1}^{N}L_{ii}\int_{0}^{\infty}\int_{x}^{\infty}\left(1-F_{i}\left(\max\left(y,\frac{2}{\eta}-x\right)\right)\right)\sum_{k\neq i}\sum_{j\neq k,i}f_{k}(y)f_{j}(x)\prod_{\ell\neq j,k,i}(F_{\ell}(y)-F_{\ell}(x))dydx.
\end{align*}

Finally, we can compute that
\begin{align*}
&\mathbb{E}\left[\log\left(\left\Vert I_{N}-\eta H\right\Vert\right)\left\Vert I_{N}-\eta H\right\Vert^{\hat{\alpha}_{dis}}\right]
\\
&=\mathbb{E}\left[\log\left(\max_{1\leq i\leq N}\left|1-\eta\frac{1}{b_{i}}\sum_{j=1}^{b_{i}}a_{i,j}^{2}\right|\right)
\max_{1\leq i\leq N}\left|1-\eta\frac{1}{b_{i}}\sum_{j=1}^{b_{i}}a_{i,j}^{2}\right|^{\hat{\alpha}_{dis}}\right].
\end{align*}
Notice that for any $x>0$, 
\begin{align*}
\mathbb{P}\left(\max_{1\leq i\leq N}\left|1-\eta\frac{1}{b_{i}}\sum_{j=1}^{b_{i}}a_{i,j}^{2}\right|\leq x\right)
&=\prod_{i=1}^{N}\mathbb{P}\left(\left|1-\eta\frac{1}{b_{i}}\sum_{j=1}^{b_{i}}a_{i,j}^{2}\right|\leq x\right)
\\
&=\prod_{i=1}^{N}\mathbb{P}\left(\max\left(0,\frac{1-x}{\eta}\right)\leq\frac{1}{b_{i}}\sum_{j=1}^{b_{i}}a_{i,j}^{2}\leq\frac{1+x}{\eta}\right)
\\
&=
\begin{cases}
\prod_{i=1}^{N}\left(F_{i}\left(\frac{1+x}{\eta}\right)-F_{i}\left(\frac{1-x}{\eta}\right)\right) &\text{if $0<x<1$},
\\
\prod_{i=1}^{N}F_{i}\left(\frac{1+x}{\eta}\right) &\text{if $x\geq 1$},
\end{cases}
\end{align*}
where we recall that $F_{i}$ is the cumulative distribution function of $\frac{1}{b_{i}}\sum_{j=1}^{b_{i}}a_{i,j}^{2}$.
By differentiating with respect to $x$, we obtain
\begin{align*}
&\frac{d}{dx}\mathbb{P}\left(\max_{1\leq i\leq N}\left|1-\eta\frac{1}{b_{i}}\sum_{j=1}^{b_{i}}a_{i,j}^{2}\right|\leq x\right)
\\
&=\begin{cases}
\sum_{i=1}^{N}
\left(f_{i}\left(\frac{1+x}{\eta}\right)+f_{i}\left(\frac{1-x}{\eta}\right)\right)
\prod_{k\neq i}\left(F_{k}\left(\frac{1+x}{\eta}\right)-F_{k}\left(\frac{1-x}{\eta}\right)\right)
&\text{if $0<x<1$},
\\
\sum_{i=1}^{N}f_{i}\left(\frac{1+x}{\eta}\right)\prod_{k\neq i}^{N}F_{k}\left(\frac{1+x}{\eta}\right) &\text{if $x\geq 1$},
\end{cases}
\end{align*}
where we recall that $f_{i}$ is the probability density function of $\frac{1}{b_{i}}\sum_{j=1}^{b_{i}}a_{i,j}^{2}$.
This completes the proof.
\hfill $\Box$ 

\subsubsection{Proof of Corollary~\ref{cor:first:order:alpha:general:b:i}}
When $d=1$, for every $i$, $a_{i,j}$ are i.i.d. $\mathcal{N}(0,\sigma_{i}^{2})$, we have
\begin{equation}
\frac{1}{b_{i}}\sum_{j=1}^{b_{i}}a_{i,j}^{2}=\frac{\sigma_{i}^{2}}{b_{i}}\chi^{2}(b_{i}),
\end{equation}
in distribution, where $\chi^{2}(b_{i})$ denotes a chi-square random variable
with $b_{i}$ degrees of freedom and hence
\begin{equation}
f_{i}(x)=\frac{1}{2^{b_{i}/2}\Gamma(b_{i}/2)}\frac{x^{\frac{b_{i}}{2}-1}}{(\sigma_{i}^{2}/b_{i})^{b_{i}/2}}e^{-\frac{x}{2(\sigma_{i}^{2}/b_{i})}},
\end{equation}
and
\begin{equation}
F_{i}(x)=\frac{1}{\Gamma(b_{i}/2)}\gamma\left(\frac{b_{i}}{2},\frac{b_{i}x}{2\sigma_{i}^{2}}\right),
\end{equation}
where $\Gamma(\cdot)$ is the gamma function
and $\gamma(\cdot,\cdot)$ is the lower incomplete gamma function.
The proof is completed by applying Theorem~\ref{thm:first:order:alpha:general:b:i}.
\hfill $\Box$ 


\subsubsection{Proof of Corollary~\ref{cor:DSGD:CSGD:general}}
The result directly follows from Proposition~\ref{prop:disconnected:centralized}
and Theorem~\ref{thm:general:d:b}.
\hfill $\Box$ 

\section{Further experiments with synthetic data}
\begin{figure}[h!]
    \centering
    \subfigure[Case II: $d$=100, $b$=1, $\sigma$=1, $\sigma_y$=3, $N$=20 over star network.]{
    \includegraphics[width=0.45\columnwidth]{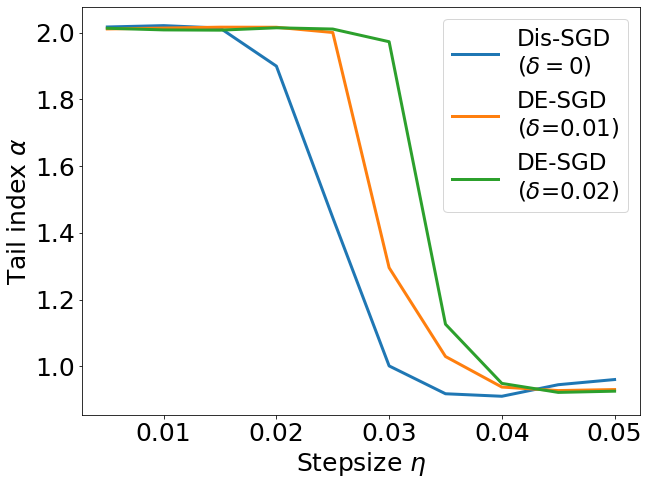}
    \label{fig:eta_tau_eta_d=100}
    }
    \subfigure[Case II: $d$=100, $b$=5, $\sigma$=1, $\sigma_y$=3, $N$=10 over star network.]{
    \includegraphics[width=0.45\columnwidth]{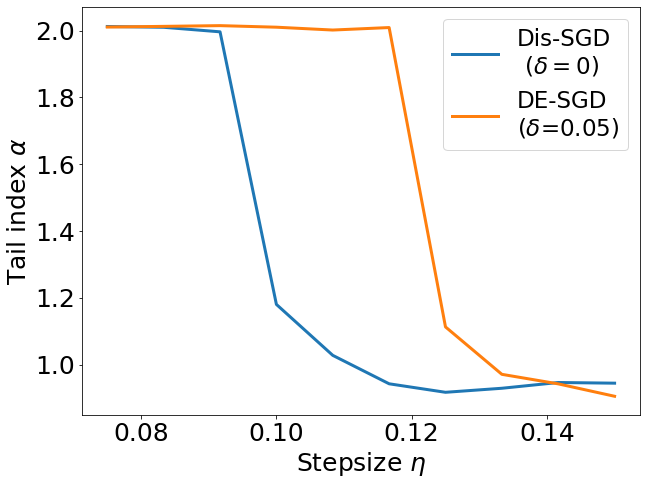}
    \label{fig:eta_tau_eta_fig2}
    }
    \caption{More experiment results illustrating Case II.}
    \label{fig:appendix-case2}
\end{figure}
\paragraph{Further Illustrations of Thm. \ref{thm:first:order:alpha} and Cor. \ref{cor:first:order:comparison}} Due to space limit in the main text, we provide some of our experimental results in Figure~\ref{fig:appendix-case2}. In the main text we have provided Figure~\ref{fig:eta_tau_eta_d=1}, which fit into Case II ($0<\eta_{crit}<\tau<\eta_{max}$) described in the numerical experiments section in the main text. Figure~\ref{fig:eta_tau_eta_d=100} and Figure~\ref{fig:eta_tau_eta_fig2} illustrate Case II as well. In Figure~\ref{fig:eta_tau_eta_d=100}, we set $d=100,b=1,\sigma = 1,\sigma_y = 3$ and generate $\{a_i, y_i\}^n_{i=1}$ by simulating the model~\eqref{eqn:data_generation}. We run the DE-SGD on the star network with $N=20$ and different $\delta$ value $\delta=0.01,0.02$. For Figure~\ref{fig:eta_tau_eta_fig2}, we set $d=100, b=5,\sigma = 1, \sigma_y=3$ and run DE-SGD on the star network with $N=10,\delta=0.05$. We can see that both Dis-SGD and DE-SGD will converge to a heavy-tailed distribution. In the small stepsize regime, Dis-SGD will have heavier tail, and in the big stepsize regime, DE-SGD will have heavier tail. Our observation is the same as our expectation.

\paragraph{Effect of parameters.} In this set of experiments, we investigate the tail-index $\alpha$ of the stationary distribution of DE-SGD over different network topologies with varied stepsize $\eta$ and varied batch size $b$. We set $d=100$, $b=5$, $\sigma = 1$, $\sigma_y = 3$ and generate $\{a_i, y_i\}^n_{i=1}$ by simulating the model~\eqref{eqn:data_generation}. To mimic the C-SGD, we run the single-node SGD in the same experimental setup but with batch-size $bN=40$. We consider 5 different network topologies in the experiments and take the communication matrix $W = I - \delta L$. They are (a) complete network with $\delta = \frac{1}{8}$ (b) cycle network with $\delta = \frac{1}{3}$ (c) hypercube network with $\delta = \frac{1}{3}$ (d) bipartite network with $\delta = \frac{1}{5}$ and (e) star network with $\delta = \frac{1}{8}$. These networks and their basic properties are discussed in Section \ref{sec-network-drawings} and \ref{sec:examples}.


First, we plot how the DE-SGD tail-index $\alpha$ varies with stepsize $\eta$ from $0.16$ to $0.21$ for the five network topologies in Figure~\ref{fig:lin-reg-stepsize}. We observe that $(i)$ The tail-index of DE-SGD over different topologies decreases as stepsizes increase and becomes heavy tailed when stepsize is larger than a threshold (threshold depends on the network structure). $(ii)$ DE-SGD has heavier tails (smaller $\alpha$) compared to C-SGD. This is also consistent with our Corollary~\ref{cor:DSGD:CSGD}. 
In another set of experiments, we investigate the tail-index $\alpha$ of the stationary distribution of DE-SGD over different network topologies with varied batch-size $b$. 
We set stepsize $\eta=0.18$ and vary batch-size $b$ from 1 to 8 for each node in DE-SGD. For C-SGD, we use a batch size of bN as $b$ is varied. 
The results are displayed in Figure~\ref{fig:lin-reg-batch}. 
It is observed that $(i)$ Heavier tails occur for small batch-sizs, the larger batch size gets; we have lighter  behavior ($\alpha=2$ corresponds to Gaussian tails). This is consistent with our result (Theorem~\ref{thm:mono}) and also with the experiments of \citet{panigrahi2019non} in the centralized setting. 
$(ii)$ The tail of the stationary distribution of DE-SGD over all topologies is always heavier than that of the C-SGD. This is in line with our Corollary~\ref{cor:DSGD:CSGD}.
\begin{figure}[t]
\centering
\subfigure[Varied stepsize $\eta$.]{
    \includegraphics[width=0.45\columnwidth]{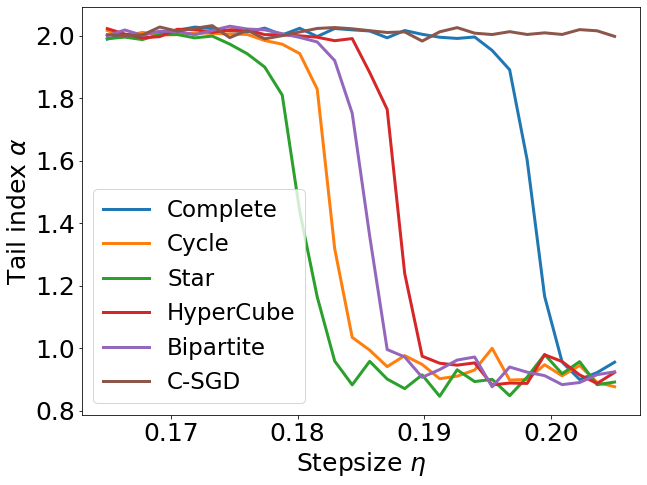}
    \label{fig:lin-reg-stepsize}
    }
    \subfigure[Varied batch size $b$.]{
    \includegraphics[width=0.45\columnwidth]{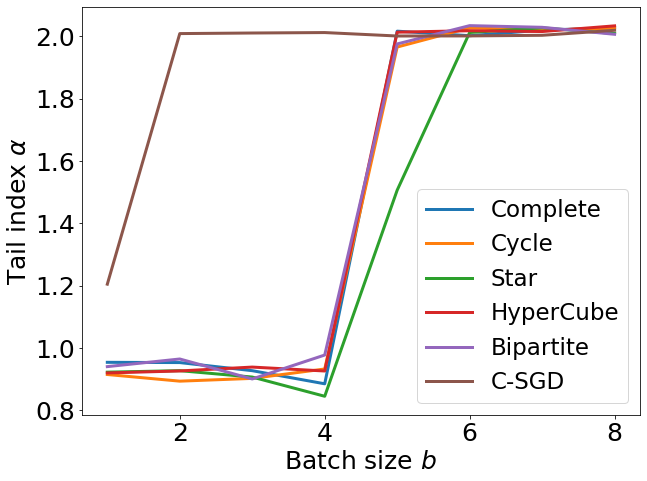}
    \label{fig:lin-reg-batch}
    }
\caption{Tail-index $\alpha$ for different setting of linear regression on synthetic data section.}
\end{figure}
\section{Tail index estimation in the numerical experiments}
\subsection{Synthetic data experiments}\label{subsec-synth-data}
Both DE-SGD and C-SGD iterates converge to a stationary distribution with proper choice of parameters. By the generalized CLT result, the ergodic average of the iterates will converge to an $\alpha$-stable distribution where $\alpha \in (0,2)$ if the stationary distribution has infinite variance, $\alpha=2$ otherwise. To estimate the tail index $\alpha$ of the averaged iterates, we use the same approach proposed in \citet{ht_sgd_quad}. Basically, we run the DE-SGD recursion \eqref{eqn:dsg_update} with a random initial point $x_i^{(0)} \in \mathbb{R}^d$ for each node $i$, and each entry in $x_i^{(0)}$ follows an i.i.d. uniform distribution $\mathcal{U}(-10,10)$. We repeat this procedure 1,600 times for different initial points and obtain 1,600 different random vectors for each node $i$. After that, instead of using the estimator proposed by \citet{mohammadi2015estimating} that evaluates directly on the final iterate $x_i^{(K)}$, building on the generalized CLT result, we compute the average of the `centered' iterates $\frac{1}{K-K_0} \sum_{k=K-K_0+1}^{K} (x_k-\bar{x})$, where $K_0$ is a `burn-in' period aiming to discard the initial phase of the stochastic algorithms, and $\bar{x}$ denotes the mean of the final $K - K_0$ iterates. We apply the estimator of \citet{mohammadi2015estimating} to the average of the `centered iterates'. In our synthetic experiments, we set $K=5000,K_0=500$.

\subsection{Deep learning experiments}
For estimating the tail index in the deep learning experiments, we use a similar approach to the one described in Section \ref{subsec-synth-data}. Basically, after training each neural network model sufficiently, we compute the average of the last 1000 algorithm iterates, whose distribution we expect to be close to an
$\alpha$-stable distribution by the GCLT. We then treat each layer as a collection of i.i.d. $\alpha$-stable random
variables and measure the tail-index of each individual layer separately by using the estimator
from \citet{mohammadi2015estimating}.

\section{Further Details of ResNet-20 Experiments}
Due to space limit in the main text, we are reporting some of the details of our ResNet-20 experiments with $N=24$ nodes (corresponding to Figure \ref{fig:resnet_24n16b} in the main text) in this section. 

For the experiments in Figure \ref{fig:resnet_24n16b}, after the tail-index of each layer in the ResNet-20 model is estimated, we treat the median of them as the tail-index of the whole ResNet-20 model. 
We train the ResNet-20 model for 200 epochs. The learning rate is warmed up in the first 5 epochs and is decayed by a
factor of 10 at the 100th and 150th epoch. We set the network size as $N = 24$ and simulate all nodes with three 2080Ti GPUs (each GPU simulates 8 nodes). Moreover, each node samples $b = 16$ data per iteration. Since this experiment is on a real distributed GPU system, we utilize
BlueFog \citep{ying2021bluefog} for the implementation of decentralized methods including topology organization, weight matrix generation, and efficient decentralized communication. For the implementation of C-SGD, we 
 utilize PyTorch’s native Distributed Data Parallel (DDP) module. All algorithms are tested with stepsizes ranging from 0.025 to 0.2. We find stepsizes larger than 0.2 will cause significant performance dropping in test accuracy.
 
\begin{figure}[h!]
\centering
\subfigure{
    \includegraphics[width=0.45\columnwidth]{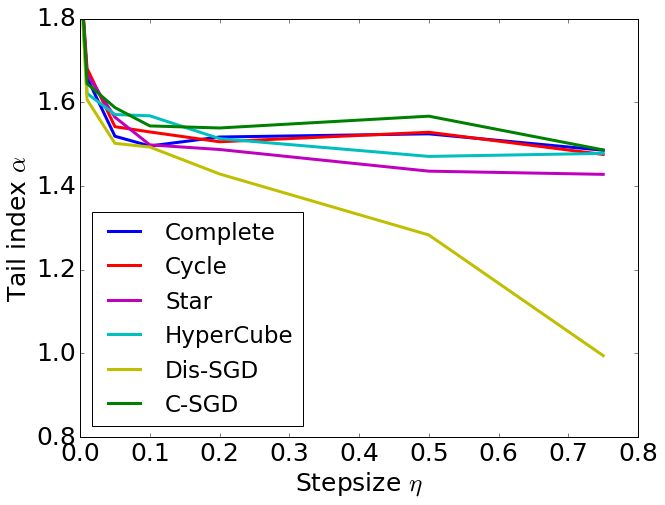}
    }
\caption{Tail-index $\alpha$ for ResNet-20 on CIFAR10 with $N=8$.}
\label{fig:resnet-cifar10-8n}
\end{figure}

Besides the experiment for $N=24$ shown in Figure~\ref{fig:resnet_24n16b}, we also report the results of our experiments for $N=8$ in Figure~\ref{fig:resnet-cifar10-8n}, where all other settings are the same as in Figure~\ref{fig:resnet_24n16b}. We can see that for the $N=8$ case, the tail-index of DE-SGD and C-SGD are closer to each other, whereas for $N=24$ case, the tail-index difference between DE-SGD and C-SGD is clearer. This suggests that  we can observe the phenomenon that DE-SGD has heavier tail than C-SGD, and the phenomenon will be clearer when the size of network $N$ gets larger. In fact, Proposition \ref{prop:disconnected:centralized} shows that the difference between the tail indices of Dis-SGD and C-SGD gets larger as $N$ increases, and our numerical results are compatible with this theoretical result.
\end{document}